\PassOptionsToPackage{capitalize,noabbrev}{cleveref}

\documentclass[]{preprint}

\usepackage[toc,page,header]{appendix}

\usepackage[utf8]{inputenc} 
\usepackage[T1]{fontenc}    
\usepackage{hyperref}       
\usepackage{url}            
\usepackage{booktabs}       
\usepackage{amsfonts}       
\usepackage{nicefrac}       
\usepackage{microtype}      
\usepackage{xcolor}         

\usepackage{amsmath}
\usepackage{amssymb}
\usepackage{mathtools}
\usepackage{amsthm}

\usepackage{hyperref}
\usepackage{url}
\usepackage{xcolor}
\usepackage{pifont}

\usepackage[utf8]{inputenc}
\usepackage{csquotes} 

\usepackage{booktabs}

\usepackage{enumitem}

\usepackage{booktabs}
\usepackage{multirow}
\usepackage{array}
\usepackage{colortbl}

\usepackage{arydshln} 

\usepackage{wrapfig}
\usepackage{subcaption}  
\usepackage{dblfloatfix}

\usepackage[most]{tcolorbox}
\usepackage{xcolor}
\usepackage{graphicx}

\definecolor{cvprblue}{RGB}{0,102,204} 

\usepackage{amsmath,bm}

\usepackage{color}
\usepackage{tikz}
\usetikzlibrary{shapes, arrows.meta, positioning, calc, fit, backgrounds}

\usepackage{etoc}

\definecolor{sybcblue}{HTML}{004488}  
\definecolor{brickred}{HTML}{BB5566}  
\definecolor{softgray}{HTML}{666666}  

\definecolor{graybg}{gray}{0.9} 

\definecolor{forestgreen}{rgb}{0.133,0.549,0.133}
\definecolor{crimson}{rgb}{0.863,0.078,0.235}
\newcommand{\cmark}{\textcolor{forestgreen!80!black}{\ding{51}}}
\newcommand{\xmark}{\textcolor{crimson!80!black}{\ding{55}}}

\definecolor{natureblue}{RGB}{0,76,153}
\newcommand{\fancynumber}[1]{%
\raisebox{1pt}{%
  \tikz[baseline=(char.base)]{
    \node[
      shape=circle,
      draw=black,
      fill=natureblue!20,
      inner sep=0pt,
      minimum size=0.8em,
      font=\tiny,
      text=black,
    ](char){#1};%
  }%
}%
}

\usepackage{hyperref}
\hypersetup{
    colorlinks=false,
    linkcolor=black,
    filecolor=magenta,      
    urlcolor=magenta, 
    pdftitle={VideoDR Benchmark},
}

\usepackage{tikz}

\usepackage[capitalize,noabbrev]{cleveref}

\theoremstyle{plain}
\newtheorem{theorem}{Theorem}[section]
\newtheorem{proposition}[theorem]{Proposition}
\newtheorem{lemma}[theorem]{Lemma}
\newtheorem{corollary}[theorem]{Corollary}
\theoremstyle{definition}
\newtheorem{definition}[theorem]{Definition}

\theoremstyle{remark}

\usepackage{tikz}
\usetikzlibrary{
    arrows.meta,
    calc,
    positioning,
    fit,
    backgrounds,
    decorations.pathreplacing,
    angles,
    quotes
}

\definecolor{Ublue}{RGB}{78,121,167}
\definecolor{Utgreen}{RGB}{88,167,107}
\definecolor{Vred}{RGB}{196,78,82}
\definecolor{GradOrange}{RGB}{242,142,43}
\definecolor{ResidualBlue}{RGB}{78,121,167}
\definecolor{AccentPurple}{RGB}{156,117,195}
\definecolor{wine}{HTML}{830E0D}

\usepackage{xcolor}
\definecolor{mitred}{RGB}{117,0,20}

\usepackage{tcolorbox}

\title{Modality Gap–Driven Subspace Alignment Training Paradigm For Multimodal Large Language Models}

\author[1,2]{Xiaomin Yu}
\author[3,4]{Yi Xin}
\author[5]{Yuhui Zhang}
\author[1]{Wenjie Zhang}
\author[6]{Chonghan Liu}

\makeatletter
\renewcommand\author[2][]{\addtolist[#1]{#2}{\authorlist}{\authorformat}{\\[0pt]}}
\makeatother

\author[2]{Hanzhen Zhao}

\makeatletter
\renewcommand\author[2][]{\addtolist[#1]{#2}{\authorlist}{\authorformat}{, }}
\makeatother

\author[7]{Chen Liu}
\author[8]{Xiaoxing Hu}
\author[9]{Ziyue Qiao}
\author[10]{Hao Tang}
\makeatletter
\renewcommand\author[2][]{\addtolist[#1]{#2}{\authorlist}{\authorformat}{\\[0pt]}}
\makeatother

\author[2]{Xiaobin Hu}

\makeatletter
\renewcommand\author[2][]{\addtolist[#1]{#2}{\authorlist}{\authorformat}{, }}
\makeatother
\author[1]{Chengwei Qin}
\author[1]{Hui Xiong}
\author[4]{Yu Qiao}
\author[2]{Shuicheng YAN}

\affiliation[1]{HKUST(GZ)}
\affiliation[2]{NUS}
\affiliation[3]{sh AILab}
\affiliation[4]{SII}
\affiliation[5]{Stanford}
\affiliation[6]{UCLA}
\affiliation[7]{Yale}
\affiliation[8]{SJTU}
\affiliation[9]{GBU}
\affiliation[10]{PKU}

\abstract{
Despite the success of multimodal contrastive learning in aligning visual and linguistic representations, a persistent geometric anomaly, the Modality Gap, remains: embeddings of distinct modalities expressing identical semantics occupy systematically offset regions. Prior approaches to bridge this gap are largely limited by oversimplified isotropic assumptions, hindering their application in large-scale scenarios. In this paper, we address these limitations by precisely characterizing the geometric shape of the modality gap and leveraging it for efficient model scaling. First, we propose the Fixed-frame Modality Gap Theory, which decomposes the modality gap within a frozen reference frame into stable biases and anisotropic residuals. Guided by this precise modeling, we introduce ReAlign, a training-free modality alignment strategy. Utilizing statistics from massive unpaired data, ReAlign aligns text representation into the image representation distribution via a three-step process comprising Anchor, Trace, and Centroid Alignment, thereby explicitly rectifying geometric misalignment. Building on ReAlign, we propose ReVision, a scalable training paradigm for Multimodal Large Language Models~(MLLMs). ReVision integrates ReAlign into the pretraining stage, enabling the model to learn the distribution of visual representations from unpaired text before visual instruction tuning, without the need for large-scale, high-quality image-text pairs. Our framework demonstrates that statistically aligned unpaired data can effectively substitute for expensive image-text pairs, offering a robust path for the efficient scaling of MLLMs.
}

\date{\today}
\checkdata[Leader]{Xiaomin Yu (\email{yuxm02@gmail.com})}
\correspondence{Xiaobin Hu, Chengwei Qin}

\checkdata[Github]{\url{https://github.com/Yu-xm/Modality_Gap_Theory.git}}

\begin{document}
\maketitle

\newpage

\section{Introduction}


Multimodal contrastive learning~\cite{huang2024llm2clip,zhai2023sigmoid,radford2021learning,sun2023eva,yaras2024explaining} has established itself as the standard paradigm for aligning visual and text representations. However, despite extensive training on massive image-text pairs, a persistent empirical observation remains: 


\begin{tcolorbox}[colframe=mitred, opacityback=0.9, arc=4pt, left=4pt, right=4pt, top=2pt, bottom=0pt,
title=Definition. Modality Gap Phenomenon]
\textit{For two modalities that express the same underlying semantics, their representations typically do not coincide. Instead, the two modalities inherently occupy distinct, systematically offset regions of the joint representation space. This phenomenon is called the \textbf{Modality Gap}~\cite{liang2022mind,zhang2024connect}}.
\end{tcolorbox}


While this gap hinders direct cross-modal interchangeability, it also suggests that, if the geometric misalignment can be effectively bridged, abundant text data could serve as a substitute~\cite{yu2025unicorn} for expensive image-text data in Multimodal Large Language Models (MLLMs) training.

Prior research has largely advanced along two directions, yet both face limitations: \textbf{\fancynumber{1} Geometric Correction:} These approaches attempt to post-hoc correct this gap via explicit geometric projections~\cite{zhang2024connect,yi2025decipher}. However, most of these works are limited to single, small-scale tasks~\cite{tewel2022zerocap,li2023decap,liu2024arcsin,su2022language,nukrai2022text} such as Image Captioning, failing to unlock the true potential of modality gap theory for model scaling. Furthermore, they typically rely on isotropic noise assumptions, failing to account for the complex anisotropic structures in high-dimensional spaces. \textbf{\fancynumber{2} Text-only Large-scale Training:} These methods~\cite{yu2025unicorn} leverage modality gap to synthesize pseudo-visual supervision signals from pure text. While promising, existing methods suffer from substantial performance degradation on fine-grained visual tasks, exposing a pronounced distributional gap between synthetic text representations and real-world image data.

These limitations point to a fundamental mismatch: existing assumptions are overly simplified, resulting in a lack of precise modeling of the modality gap's geometric shape, which in turn hinders its application in large-scale training. This mismatch naturally motivates two core research questions:
\begin{tcolorbox}[boxrule=1pt, colframe=black!70, colback=gray!5, left=8pt, right=8pt, top=4pt, bottom=4pt]

\textbf{\fancynumber{1} On Shape:} Can we move beyond simple mean assumptions to precisely characterize the intrinsic geometric shape of the modality gap within a stable reference frame?


\textbf{\fancynumber{2} On Scale:} Can we leverage this precise shape modeling to design a scalable training paradigm that substitutes expensive paired data with massive, easily accessible unpaired data, thereby achieving efficient MLLM scaling? 

\end{tcolorbox}

To address the first question regarding shape, we conduct the first empirical study that trains a contrastive dual-encoder~\cite{wu2023tinyclip} from scratch to precisely track and model the evolution of the modality gap. Based on this microscopic analysis, we propose a unified theoretical framework: we no longer treat the modality gap as random fluctuations but mathematically decompose it within a frozen reference frame ($\mathbb{R}^d = U \oplus V$). By explicitly separating the effective task subspace ($U$), where gradients concentrate and semantic information resides, from its orthogonal complement ($V$), we reveal the dual geometric structure of the modality gap: it comprises not only a stable bias component but also a residual component characterized by specific second-moment properties. This discovery allows us to transcend simple mean-based descriptions and fully capture the anisotropic distribution of the modality gap across different subspaces.

Building on this precise geometric modeling, we further address the second question regarding scale. Our approach highlights a perspective different from prior work: while acquiring high-quality paired image-text data~\cite{he2024efficient,chen2024sharegpt4v,li2024densefusion} is expensive, obtaining massive amounts of unpaired single-modality data is extremely easy. We believe that leveraging such large-scale unpaired data is sufficient to precisely reconstruct the shape of the modality gap via statistical laws, without relying on expensive paired samples. Guided by this insight, we introduce two core contributions:


\fancynumber{1} \textbf{ReAlign.} Building on the geometric analysis in Sec.~\ref{sec:modality_gap}, we introduce ReAlign, a training-free pre-alignment strategy that maps text representations into the image representation distribution using statistics derived from large-scale unpaired data. ReAlign operates through a three-stage procedure. First, Anchor Alignment matches first-order statistics (means). Second, Trace Alignment matches the global variance scale. Finally, Centroid Alignment explicitly rectifies the geometric drift induced by spherical projection. Together, these stages achieve precise cross-modal alignment at the statistical level using only linear transformations and normalization, without requiring any additional training.

\fancynumber{2} \textbf{ReVision.} We incorporate ReAlign into a two-stage MLLM training pipeline termed ReVision. In the first stage, Modality Substitution Pretraining, the ReAlign operator is used to convert large-scale long-form text into pseudo-visual representations. An adapter is trained on these representations while keeping the LLM backbone frozen, enabling the model to absorb rich world knowledge and visual semantics purely from text data, without relying on costly image-text pairs. In the second stage, Visual Instruction Tuning, real images are introduced for standard supervised learning to supplement fine-grained visual details that may be lost under purely statistical alignment, thereby refining the model's ability to follow complex instructions.

Overall, we reframe the modality gap as a structured geometric mismatch. By characterizing its first-order, second-order, and normalization-induced components, we derive a training-free alignment strategy that enables scalable use of unpaired data for multimodal learning. This provides both a geometric understanding of modality alignment and a practical path toward more cost-efficient MLLM scaling.

\section{The Isotropic Fallacy} \label{sec:motivation}

The C$^3$~\cite{zhang2024connect} framework established the dominant paradigm for addressing the modality gap, and subsequent state-of-the-art strategies have inherited its assumption, characterizing the gap simply as a superposition of a centroid shift and random alignment noise. While their centroid correction effectively rectifies the first-order bias, their treatment of the residual gap relies on a critical simplification: that the residual fluctuations are isotropic. We argue that this assumption is geometrically flawed. Multimodal contrastive representation distributions are inherently anisotropic, where information is encoded in a hierarchical spectral structure rather than a uniform sphere. As we demonstrate in Appendix \ref{app:isotropic_ssumption}, imposing an isotropic prior onto this structured manifold induces a spectral whitening effect, which dilutes the fine-grained semantic hierarchy and distorts the angular topology. This mismatch suggests that merely adjusting the mean is insufficient; the geometric shape of the noise matters. To address this, we must first rigorously characterize the true geometry of these fluctuations. In the following section, we introduce a formal decomposition framework to reveal that the modality gap is driven not by isotropic noise, but by highly structured, direction-dependent biases and residuals.
\section{Modality Gap} \label{sec:modality_gap}

Motivated by the isotropic fallacy discussed in Appendix~\ref{sec:motivation}, this section answers one core question: \textbf{what geometric shape does the modality gap have?} To this end, we train a dual-encoder model based on the InfoNCE loss and directly observe the evolution of representation geometry during training. Using a data-driven subspace construction, we explicitly separate bias and residual components. Our objective is to characterize, around a fixed reference time, the slow drift that persists late in training together with its second-moment structure. This perspective has two advantages: \fancynumber{1} bias and residual terms can be estimated separately; \fancynumber{2} all theoretical claims reduce to second-moment conditions estimable from finite-sample statistics. As a result, the shape of the modality gap is characterized by three estimable objects:\fancynumber{1} \textbf{first-order mean bias}, \fancynumber{2} \textbf{second-order residual}, and \fancynumber{3} \textbf{spherical centroid drift}. Appendix~\ref{sec:appendix_gap_causes} discusses why such a gap structurally persists in dual-encoder contrastive learning. Formal proofs of the decomposition and alignment properties are provided in Appendix~\ref{app:proofs}.

\subsection{Modality Gap Decomposition Framework} \label{subsec:decomposition_framework}

Directly analyzing the full gap vector mixes task-active representation directions with directions that carry little contrastive signal. We therefore construct a data-driven dominant subspace \(U\) from the probe covariance, so that the gap can be separated into components aligned with the main representation geometry and components lying in its orthogonal complement.

Prior work has shown that the optimization dynamics of multimodal contrastive learning can be decomposed into two subspaces: (1) an effective subspace, through which most gradients propagate, and (2) an ineffective subspace, in which gradients have only a limited effect. Following this perspective, we analyze the modality gap by decomposing the representation space into these two subspaces.

\textbf{Fixed Reference Frame Construction.} Let \(e_x(t),e_y(t)\in\mathbb{R}^d\) denote the unit-normalized embeddings of the image modality \(x\) and the text modality \(y\) for the same sample at training step \(t\). We fix a reference time \(t_0\), and compute the empirical covariance on a held-out probe set as \(\hat{\Sigma}(t_0):=\mathrm{Cov}_{\mathrm p}(e_x(t_0))+\mathrm{Cov}_{\mathrm p}(e_y(t_0))\). We perform eigendecomposition \(\hat{\Sigma}(t_0)=Q\Lambda Q^\top\), take the top \(r\) principal directions to construct the fixed reference subspace \(U:=\mathrm{span}\{q_1,\ldots,q_r\}\), and let \(V:=U^\perp\). Here, \(r\) is determined by a fixed energy threshold. We denote by \(P_U\) and \(P_V\) the orthogonal projectors onto \(U\) and \(V\), respectively, and keep them fixed throughout the analysis for all \(t\ge t_0\). Here, \(U\) is the dominant representation subspace estimated from the probe covariance, rather than a predefined semantic space.

\textbf{Bias-Residual Decomposition.} For a paired sample, define the modality gap as \(\Delta(t):=e_x(t)-e_y(t)\), and define the overall mean gap as \(m(t):=\mathbb{E}[\Delta(t)]\). Under the fixed frame \(U\oplus V\), the first-order mean gap is orthogonally decomposed as \(m(t)=\beta(t)+\gamma(t)\), where \(\beta(t):=P_Um(t)\) and \(\gamma(t):=P_Vm(t)\). \(\beta(t)\) is the mean component in the dominant representation subspace; \(\gamma(t)\) is the mean component in the orthogonal complement, called POB. Correspondingly, the zero-mean residual is decomposed as $\delta(t):=P_U(\Delta(t)-m(t))$, and $\zeta(t):=P_V(\Delta(t)-m(t))$. Therefore, the fixed-reference decomposition of the modality gap is:
\begin{equation}
    \Delta(t)=\beta(t)+\gamma(t)+\delta(t)+\zeta(t).
\end{equation}
Here, \(\beta(t)\) and \(\gamma(t)\) describe the first-order mean shape, while \(\delta(t)\) and \(\zeta(t)\) describe the second-order residual shape.

Along the temporal dimension, we focus on a short late-training window around the reference time \(t_0\), denoted by \(t\in\mathcal{T}:=\{t_0,\ldots,t_0+\tau-1\}\). Within this window, the fixed reference subspaces \(U,V\) and the projectors \(P_U,P_V\) remain unchanged, so all components \(\beta(t),\gamma(t),\delta(t),\zeta(t)\) are compared in the same reference frame. To characterize the rotation of the dominant representation subspace as training progresses, we also re-estimate an instantaneous probe subspace \(U_t\) at each logging step using the same covariance construction, and measure its geometric deviation from the fixed reference subspace by the largest principal angle \(\theta(U_t,U)\).


\begin{figure*}[htbp]
  \centering
  \includegraphics[width=1.\linewidth]{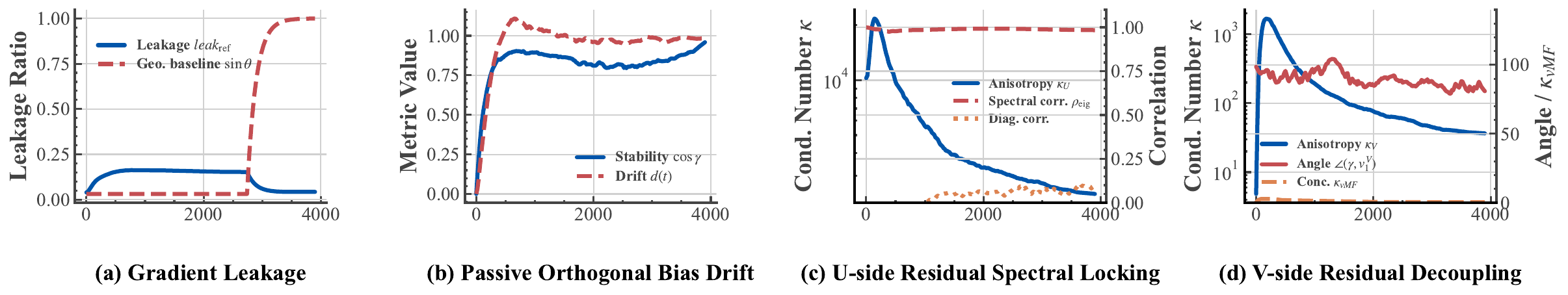}
  \caption{Geometric statistics of the modality gap, showing gradient leakage, passive bias drift, and anisotropic residual structures in the fixed \(U\oplus V\) reference frame.}
  \label{fig:bias_res_fig}
\end{figure*}

\subsection{Bias Terms: First-order Bias} \label{subsec:bias}

This section characterizes the shape of the first-order bias in the modality gap. Under the fixed reference frame \(U \oplus V\), the mean modality gap can be decomposed as \(m(t)=\beta(t)+\gamma(t)\), where \(\beta(t)=P_Um(t)\) and \(\gamma(t)=P_Vm(t)\). Therefore, the first-order bias consists of two components: the principal mean bias \(\beta(t)\) in the dominant representation subspace \(U\), and the passive orthogonal bias \(\gamma(t)\) in the orthogonal complement \(V\).

\textbf{PMB: Mean Bias in the Dominant Subspace.} \(\beta(t)\) is the mean displacement of the modality gap inside the dominant representation subspace \(U\), which we refer to as the \textbf{principal mean bias}, or PMB. Geometrically, it corresponds to the projection of the anchor mismatch between the two modality representation distributions onto \(U\). Therefore, \(\beta(t)\) is not residual noise, but a deterministic centroid offset along task-active representation directions. The main statistical role of PMB is to center the \(U\)-side residual. Since \(\delta(t)=P_U(\Delta(t)-m(t))\), removing \(\beta(t)=P_Um(t)\) makes \(\delta(t)\) a zero-mean residual. Otherwise, the estimate of \(\Sigma_U\) would contain both the first-order mean displacement and the second-order residual fluctuation, and thus could not serve as a pure statistic of residual shape. Therefore, PMB is both the anchor displacement inside the dominant subspace and the centering term required for the covariance analysis of the \(U\)-side residual.

\textbf{POB: Passive Bias in the Orthogonal Complement.}
\(\gamma(t)\) is the mean displacement of the modality gap inside the orthogonal complement \(V\), which we refer to as the \textbf{passive orthogonal bias}, or POB. Unlike \(\beta(t)\), which lies in the dominant representation subspace, \(\gamma(t)\) lies outside the subspace where most late-stage optimization signals concentrate. This makes it a passive bias: it is not directly suppressed by a strong gradient component along \(V\), but instead evolves slowly under weak residual updates. To quantify this effect, we measure the reference leakage ratio \(\operatorname{leak}_{\mathrm{ref}}(t)\), defined as the fraction of embedding-level gradient RMS energy projected onto \(V\). Fig.~\ref{fig:bias_res_fig}(a) shows that this leakage remains low in late training. This is consistent with the structure of InfoNCE: embedding-level gradients are constrained by the span of the current contrastive set, and late-stage embeddings concentrate near the instantaneous dominant subspace \(U_t\). Since the fixed reference subspace \(U\) may rotate relative to \(U_t\), the observed leakage along \(V\) is compared with the geometric baseline \(\sin\theta(U_t,U)\). Fig.~\ref{fig:bias_res_fig}(a) shows that this leakage remains low in late training, indicating that only a small fraction of the optimization signal directly acts along \(V\). We further compare it with the geometric baseline \(\sin\theta(U_t,U)\), which measures the \(V\)-projection induced by the rotation of the instantaneous dominant subspace relative to the fixed reference frame. This comparison suggests that late-stage \(V\)-direction optimization pressure is weak, explaining why \(\gamma(t)\) is not rapidly eliminated. Appendix~\ref{sec:appendix_weak_coupling} further analyzes possible weak coupling between \(U\)- and \(V\)-side residuals. Consequently, \(\gamma(t)\) is not rapidly eliminated. Fig.~\ref{fig:bias_res_fig}(b) shows that it maintains high adjacent-step cosine stability while undergoing slow cumulative drift. We characterize this behavior by the relative drift magnitude \(\operatorname{drift}(t):=\|\gamma(t)-\gamma(t_0)\|/\max\{\|\gamma(t_0)\|,\varepsilon\}\) and the adjacent-step cosine stability \(\cos(\gamma(t),\gamma(t-1))\). In late training, this can be approximated as \(\gamma(t+1)=\gamma(t)+\eta(t)\), where \(\|\eta(t)\|\) is small. Thus, the typical shape of POB is not rapid convergence to zero, but slow drift along an approximately stable direction.


\begin{tcolorbox}[colframe=mitred, opacityback=0.9, arc=4pt, left=4pt, right=4pt, top=2pt, bottom=0pt,
title=Conclusion 1. Bias Shape]
The first-order bias is a decomposable anchor displacement, \(m(t)=\beta(t)+\gamma(t)\). Here, \(\beta(t)\) is the anchor displacement inside the dominant representation subspace, serving to center the residual along task-active directions, while \(\gamma(t)\) is the passive mean bias in the orthogonal complement, whose direction remains stable and drifts slowly because late-stage gradients along \(V\) are weak. Therefore, the first-order structure of the modality gap should be understood as a translation-correctable anchor mismatch at the mean level.
\end{tcolorbox}

\subsection{Residual Terms: Anisotropic Second-order Residual} \label{subsec:res}

After the first-order mean is decomposed and centered, can the remaining residual be regarded as isotropic white noise? We show that the answer is no: the residual itself has a stable second-order shape. For the late-training window \(s=0,\ldots,\tau-1\), we compute the residual covariances at each step as \(\Sigma_U^{(s)}:=\operatorname{Cov}_P(\delta(t_0+s))\) and \(\Sigma_V^{(s)}:=\operatorname{Cov}_P(\zeta(t_0+s))\), and then average them over time as \(\bar{\Sigma}_U:=\frac{1}{\tau}\sum_{s=0}^{\tau-1}\Sigma_U^{(s)}\) and \(\bar{\Sigma}_V:=\frac{1}{\tau}\sum_{s=0}^{\tau-1}\Sigma_V^{(s)}\). This step-wise covariance estimation avoids mixing the slow drift of POB into the residual covariance. We use the trace to measure the residual energy scale, and the effective condition number \(\kappa_{\mathrm{eff}}\) to measure the spectral stretching of the dominant residual components.

\textbf{\(U\)-side Residual.}
The residual \(\delta(t)\) lies in the dominant representation subspace \(U\). Fig.~\ref{fig:bias_res_fig}(c) shows that \(\kappa_{\mathrm{eff}}(\bar{\Sigma}_U)\) remains high throughout training, indicating that the \(U\)-side residual does not diffuse uniformly. Instead, its energy concentrates along a small number of principal directions. This suggests a stable spectral hierarchy inside the dominant representation subspace, where different directions carry different amounts of residual energy. To compare this residual structure with the optimization signal, let \(G_U(t)\) denote the embedding-level gradient covariance restricted to \(U\), and define \(\rho_{\mathrm{eig}}(t):=\operatorname{corr}[\lambda(\Sigma_U(t)),\lambda(G_U(t))]\), where \(\lambda(\cdot)\) denotes the eigenvalue vector arranged in descending order. Fig.~\ref{fig:bias_res_fig}(c) shows that \(\rho_{\mathrm{eig}}(t)\) rises rapidly in early training and remains stable in late training. This indicates a stable correspondence between the spectral energy profile of the \(U\)-side residual covariance and that of the gradient covariance. Therefore, the \(U\)-side residual is a structured fluctuation with a stable spectral hierarchy, rather than an isotropic perturbation.

\textbf{\(V\)-side Residual.}
The residual \(\zeta(t)\) lies in the orthogonal complement \(V\). Fig.~\ref{fig:bias_res_fig}(d) shows that \(\kappa_{\mathrm{eff}}(\bar{\Sigma}_V)\) remains higher than the isotropic baseline, indicating that the \(V\)-side residual also forms an elongated ellipsoidal shape. Although its total energy \(\operatorname{tr}(\bar{\Sigma}_V)\) is lower than that of the \(U\)-side residual, it is still directionally structured. Figure 1(d) further shows that \(\gamma(t)\) is persistently close to orthogonal to the first principal direction of \(\bar{\Sigma}_V\). If POB were merely the leading direction of \(V\)-side residual spread, these two directions would align. Their near-orthogonality shows that the first-order mean bias and the second-order residual anisotropy are geometrically decoupled in \(V\): the former is a stable mean displacement, while the latter is a residual ellipsoid with its own spectral structure.

\begin{tcolorbox}[colframe=mitred, opacityback=0.9, arc=4pt, left=4pt, right=4pt, top=2pt, bottom=0pt,
title=Conclusion 2. Residual Shape]
After mean alignment, the residual is an anisotropic distributional mismatch. On the \(U\)-side, the residual energy concentrates along a few dominant spectral directions and forms a stable spectral hierarchy that is aligned with the spectral profile of the optimization signal. On the \(V\)-side, the residual has lower total energy but still forms an elongated ellipsoid, and this ellipsoid is geometrically decoupled from the POB mean direction \(\gamma(t)\). Therefore, the second-order structure of the modality gap is a stable anisotropic residual shape.
\end{tcolorbox}


\subsection{Phantom Drift: Normalization-induced Secondary Centroid Drift} \label{subsec:phantom_drift}

If the first-order mean is already aligned in Euclidean space, can we guarantee that the centroid of the normalized embeddings is also aligned? The answer is no: spherical normalization couples the first-order anchor with the second-order residual shape, producing Phantom Drift.

Let the source-modality representation after linear mean alignment be \(z=\mu+\epsilon\), where \(\mathbb{E}[\epsilon]=0\) and \(\mathrm{Cov}(\epsilon)=\Sigma\). Here, \(\mu\) is the aligned target anchor, and \(\epsilon\) is the zero-mean residual. In Euclidean space, \(\mathbb{E}[z]=\mu\), so the first-order mean has already been aligned. However, the actual embedding is projected onto the unit sphere by \(\pi(z):=z/\|z\|\). The centroid after normalization is \(\mu^\pi:=\mathbb{E}[\pi(z)]=\mathbb{E}[(\mu+\epsilon)/\|\mu+\epsilon\|]\). Since \(\pi(\cdot)\) is nonlinear, in general \(\mu^\pi \neq \pi(\mu)\). We define the normalization-induced secondary centroid drift as \(\Delta^\pi:=\mu^\pi-\pi(\mu)=\mathbb{E}[(\mu+\epsilon)/\|\mu+\epsilon\|]-\mu/\|\mu\|\). This drift depends on both the anchor direction \(\mu\) and the residual covariance \(\Sigma\), so an anisotropic residual can shift the spherical centroid even after Euclidean mean alignment. This is \textbf{Phantom Drift}.

\begin{tcolorbox}[colframe=mitred, opacityback=0.9, arc=4pt, left=4pt, right=4pt, top=2pt, bottom=0pt,
title=Conclusion 3: Phantom Drift]
Mean alignment in Euclidean space does not guarantee centroid alignment on the unit hypersphere. After normalization, the anchor direction couples with the residual covariance structure, causing a secondary shift of the spherical centroid of the representation distribution.
\end{tcolorbox}
\section{ReAlign: Training-Free Modality Alignment} \label{sec:realign}

Sec.~\ref{sec:modality_gap} shows that the modality gap contains three geometric components: first-order anchor displacement, anisotropic second-order residual mismatch, and spherical-normalization-induced centroid drift. Based on this decomposition, we propose \textbf{ReAlign}, as shown in Fig.~\ref{fig:realign}, a training-free modality alignment operator that estimates low-order statistics from large-scale unpaired data and performs three closed-form calibrations: \fancynumber{1} \textbf{Anchor Alignment} corrects the first-order anchor displacement, \fancynumber{2} \textbf{Trace Alignment} matches the global residual energy scale while preserving the learned anisotropic spectral structure, and \fancynumber{3} \textbf{Centroid Alignment} corrects the Phantom Drift induced by spherical normalization.

\begin{figure*}[t]
    \centering
    \definecolor{sybcblue}{HTML}{004488}  
    \definecolor{brickred}{HTML}{BB5566}  
    \definecolor{softgray}{HTML}{666666}  

    \resizebox{1.0\textwidth}{!}{%
    \begin{tikzpicture}[
        >={Latex[length=4mm, width=3mm]}, 
        font=\sffamily,
        target/.style={ellipse, draw=sybcblue, very thick, fill=sybcblue!10, minimum width=2.8cm, minimum height=1.4cm, rotate=0, align=center},
        source/.style={ellipse, draw=brickred, very thick, dashed, fill=brickred!10, minimum width=1.5cm, minimum height=1.2cm, rotate=0, align=center},
        ghost/.style={ellipse, draw=softgray!60, very thick, dashed, fill=none, minimum width=1.5cm, minimum height=1.2cm, rotate=0},
        axis/.style={->, draw=softgray, very thick},
        arrow label/.style={midway, above, softgray, font=\footnotesize},
        math formula/.style={font=\small\boldmath\bfseries, text=softgray, inner sep=1pt, align=center} 
    ]
    
    \begin{scope}[local bounding box=panelA]
        \draw[axis] (-2,0) -- (2,0) node[right, text=softgray] {$U$};
        \draw[axis] (0,-1.4) -- (0,1.8) node[above, text=softgray] {$V$};
        
        \node[target] (tgtA) at (0,0) {};
        \node[below right, sybcblue] at (tgtA.south east) {Target ($\mu_x$)};
        
        \node[source, minimum width=1.6cm, minimum height=0.9cm] (srcA) at (-1.3, 0.9) {};
        \node[above left, brickred, xshift=1.0cm, yshift=0.1cm] at (srcA.north west) {Source ($\mu_y$)};
        
        
        \node[below=2.6cm of tgtA, text=black] {(a) \textbf{Original State}};
    \end{scope}
    
    \draw[->, line width=1.2mm, softgray!30] ($(panelA.east)+(0.2,0)$) -- ($(panelA.east)+(1.2,0)$)
        node[arrow label] {Step 1};

    \begin{scope}[xshift=6.5cm, local bounding box=panelB]
        \draw[axis] (-2,0) -- (2,0) node[right, text=softgray] {$U$};
        \draw[axis] (0,-1.4) -- (0,1.8) node[above, text=softgray] {$V$};
        
        \node[target] (tgtB) at (0,0) {};
        
        \node[ellipse, draw=brickred!60, dashed, very thick, minimum width=1.6cm, minimum height=0.9cm] (ghostB) at (-1.3, 0.9) {};
        
        \node[source, minimum width=1.6cm, minimum height=0.9cm] (srcB) at (0,0) {};
        
        
        \draw[->, brickred, very thick] (ghostB.center) -- (srcB.center) 
             node[midway, above right, font=\tiny, sloped, yshift=-2pt] {Anchor Aligned};
        
        \node[math formula] at (0, -2.1) {$\dot{e}_y = (e_y - \mu_y) + \mu_x$};
        
        \node[below=2.6cm of tgtB, align=center, text=black] {(b) \textbf{Step1: Anchor Alignment}};
    \end{scope}

    \draw[->, line width=1.2mm, softgray!30] ($(panelB.east)+(0.2,0)$) -- ($(panelB.east)+(1.2,0)$)
        node[arrow label] {Step 2};

    \begin{scope}[xshift=13cm, local bounding box=panelC]
        \draw[axis] (-2,0) -- (2,0) node[right, text=softgray] {$U$};
        \draw[axis] (0,-1.4) -- (0,1.8) node[above, text=softgray] {$V$};
        
        \draw[softgray, thick, dashed, opacity=0.5] (0,0) circle (1.4cm);

        \node[target] (tgtC) at (0,0) {};
        \node[ghost, minimum width=1.6cm, minimum height=0.9cm] (ghostC) at (0,0) {};
        
        \node[source, minimum width=2.8cm, minimum height=1.4cm] (srcC) at (0.4, 0.3) {};
        
        \draw[->, brickred, thick] (ghostC.east) -- (srcC.east);
        \draw[->, brickred, thick] (ghostC.north) -- (srcC.north);
        \node[right, brickred, font=\tiny] at (1.5, 0.8) {Scale $s$};
        
        \node[brickred, font=\tiny, inner sep=0.5pt] at (0.4, 0.3) {Centroid $\mu'$};
        \draw[->, brickred, dotted, very thick] (0,0) -- (0.4, 0.3) node[midway, below right, font=\tiny] {Drift};

        \node[math formula] at (0, -2.1) {$\tilde{e}_y = \mu_x + s(e_y - \mu_y)$};

        \node[below=2.6cm of tgtC, align=center, text=black] {(c) \textbf{Step2: Trace Alignment}};
    \end{scope}

    \draw[->, line width=1.2mm, softgray!30] ($(panelC.east)+(0.2,0)$) -- ($(panelC.east)+(1.2,0)$)
        node[arrow label] {Step 3};

    \begin{scope}[xshift=19.5cm, local bounding box=panelD]
        \draw[axis] (-2,0) -- (2,0) node[right, text=softgray] {$U$};
        \draw[axis] (0,-1.4) -- (0,1.8) node[above, text=softgray] {$V$};
        
        \node[target] (tgtD) at (0,0) {};
        
        \node[ghost, minimum width=2.8cm, minimum height=1.4cm] (preD) at (0.4, 0.3) {};
        \node[softgray, font=\tiny] at (1.0, 1.2) {Phantom Drift};
        
        \node[source, minimum width=2.8cm, minimum height=1.4cm] (srcD) at (0,0) {};
        
        \draw[->, brickred, very thick] (preD.center) -- (srcD.center)
            node[midway, above left, font=\tiny, align=right] {Correction};
        
        \node[below right, sybcblue, font=\tiny] at (tgtD.south east) {Gap $\approx 0$};
        
        \node[math formula] at (0, -2.1) {$e''_y = e'_y - \mu' + \mu_x$};

        \node[below=2.6cm of tgtD, align=center, text=black] {(d) \textbf{Step3: Centroid Align}};
    \end{scope}

    \draw[->, line width=1.2mm, softgray!30] ($(panelD.east)+(0.2,0)$) -- ($(panelD.east)+(1.2,0)$)
        node[arrow label] {Final};

    \begin{scope}[xshift=26cm, local bounding box=panelE]
        \draw[axis] (-2,0) -- (2,0) node[right, text=softgray] {$U$};
        \draw[axis] (0,-1.4) -- (0,1.8) node[above, text=softgray] {$V$};
        
        \draw[softgray, very thick, dashed] (0,0) circle (1.6cm);
        \node[softgray, font=\tiny, above right] at (1.1, 1.1) {Unit Sphere};
        
        \node[target] (tgtE) at (0,0) {};
        \node[source, minimum width=2.8cm, minimum height=1.4cm] (srcE) at (0,0) {};
        
        \node[font=\tiny, align=center, inner sep=1pt, text=black] at (0,0) {\textbf{Aligned} ($\hat{e}_y$)};
        \node[below left, brickred, font=\tiny] at (srcE.south west) {Norm Match};
        \node[below right, sybcblue, font=\tiny] at (tgtE.south east) {Center Match};
        
        \node[math formula] at (0, -2.1) {$\hat{e}_y = e''_y / \|e''_y\|$};

        \node[below=2.6cm of tgtE, align=center, text=black] {(e) \textbf{Final State}};
    \end{scope}

    \end{tikzpicture}
    }
    \caption{The ReAlign pipeline. ReAlign sequentially performs Anchor Alignment, Trace Alignment, and Centroid Alignment to align the source-modality distribution toward the target-modality distribution while preserving its geometric structure on the unit hypersphere.}

    
    \label{fig:realign}
    \vskip -0.1in
\end{figure*}


\subsection{Step 1: Anchor Alignment} \label{subsec:anchor}

\noindent\textbf{Motivation.}
Sec.~\ref{subsec:bias} shows that the first-order structure of the modality gap appears as an anchor displacement between the two modality representation distributions. This displacement is not a random perturbation, but an estimable and translation-correctable global centroid mismatch at the mean level. Therefore, the first step of ReAlign is to translate the source-modality anchor to the target-modality anchor.

\noindent\textbf{Operation.}
Let \(e_y,e_x\in\mathbb{R}^d\) denote the unit-normalized embeddings of the source modality \(y\) and the target modality \(x\), respectively. Their population means are \(\mu_y:=\mathbb{E}[e_y]\) and \(\mu_x:=\mathbb{E}[e_x]\). To map the source modality \(y\) to the reference position of the target modality \(x\), we first center the source-modality representation and then translate it to the target anchor:
\[
\dot{e}_y=(e_y-\mu_y)+\mu_x.
\]
This operation satisfies \(\mathbb{E}[\dot{e}_y]=\mu_x\). Therefore, Anchor Alignment removes the first-order mean mismatch between the source and target modalities. Anchor Alignment only corrects the first-order anchor position and does not destroy the second-order spectral structure of the source representation.

\subsection{Step 2: Trace Alignment} \label{subsec:trace}

\noindent\textbf{Motivation.}
Sec.~\ref{subsec:res} shows that the residual after mean alignment is not isotropic noise, but an anisotropic residual shape with a stable spectral hierarchy. Therefore, full whitening or isotropic noise injection may remove the learned anisotropic spectral structure that carries task-relevant information.

\noindent\textbf{Observation.}
Empirically, after contrastive pretraining, the two modalities exhibit compatible trace-normalized spectral profiles. Let the centered covariances be \(\Sigma_y:=\mathrm{Cov}(e_y-\mu_y)\) and \(\Sigma_x:=\mathrm{Cov}(e_x-\mu_x)\). Although \(\Sigma_y\) and \(\Sigma_x\) may differ in their overall residual energy scale, their trace-normalized spectra are close:
\[
\frac{\lambda(\Sigma_y)}{\mathrm{tr}(\Sigma_y)}
\approx
\frac{\lambda(\Sigma_x)}{\mathrm{tr}(\Sigma_x)}.
\]
This suggests that the remaining second-order calibration should primarily match the global residual energy scale, rather than whiten or reshape the full covariance. Appendix~\ref{sec:training_free_substitution} shows why a stronger blockwise covariance alignment can be less effective.

\noindent\textbf{Operation.}
We define the residual energy of each modality as the trace of its centered distribution: \(T_y:=\mathbb{E}[\|e_y-\mu_y\|^2]=\mathrm{tr}(\Sigma_y)\) and \(T_x:=\mathbb{E}[\|e_x-\mu_x\|^2]=\mathrm{tr}(\Sigma_x)\). We compute the trace-matching factor \(s=\sqrt{T_x/(T_y+\varepsilon)}\), where \(\varepsilon\) is a small constant for numerical stability. Combining Anchor Alignment and Trace Alignment gives the affine-transformed source-modality representation:
\[
\tilde{e}_y=\mu_x+s(e_y-\mu_y).
\]
This transformation satisfies \(\mathbb{E}[\tilde{e}_y]=\mu_x\), and its residual energy is \(\mathbb{E}[\|\tilde{e}_y-\mu_x\|^2]=s^2T_y\approx T_x\). This scalar scaling changes the residual energy scale without changing the eigenvectors or the trace-normalized spectrum of the source covariance. After this, we perform the first spherical projection:
\[
e'_y=\frac{\tilde{e}_y}{\|\tilde{e}_y\|}.
\]

\subsection{Step 3: Centroid Alignment} \label{subsec:centroid}

\noindent\textbf{Motivation.}
Sec.~\ref{subsec:phantom_drift} shows that even after Anchor Alignment and Trace Alignment calibrate the mean and residual energy in Euclidean space, spherical normalization still induces a new centroid shift. Therefore, the final step of ReAlign explicitly corrects this Phantom Drift.

\noindent\textbf{Operation.}
Let the centroid after the first spherical projection be \(\mu'_y:=\mathbb{E}[e'_y]\). Here, \(\mu_x\) is used as the empirical spherical target centroid. The corresponding Phantom Drift is \(\Delta^\pi_y:=\mu'_y-\mu_x\). To correct this spherical centroid shift, we translate the projected source-modality representation back to the target anchor:
\[
e''_y=e'_y-\mu'_y+\mu_x.
\]
Finally, we normalize again to return the representation to the unit hypersphere:
\[
\hat{e}_y=\frac{e''_y}{\|e''_y\|}, \qquad \hat{e}_x=e_x.
\]
In practice, this second projection keeps the embeddings on the unit sphere while leaving only a small residual centroid error. We study the sample efficiency, numerical stability, and domain sensitivity of ReAlign in Appendix~\ref{app:robustness_analysis}.

\section{ReVision: A Scalable MLLM Training Paradigm}

We introduce \textbf{ReVision}, a two-stage training paradigm for MLLMs. Building on the geometry-preserving ReAlign strategy, ReVision synthesizes pseudo-modality features from unpaired text corpora. This design enables low-cost semantic injection, allowing the model to absorb extensive world knowledge during pretraining without relying on expensive, high-quality paired data.

\paragraph{Stage 1: Modality Substitution Pretraining}

Let $E_{\text{img}}$ and $E_{\text{text}}$ denote the frozen image and text encoders. Freed from the reliance on paired visual-text data, we leverage $E_{\text{text}}$ to encode large-scale unpaired text corpora. Appendix~\ref{app:long_caption_paradox} analyzes why longer captions do not necessarily provide better modality substitution signals. Unlike traditional multimodal datasets limited by data scarcity, this strategy allows training to scale to raw text resources. As a result, the model receives dense semantic supervision and can absorb broader world knowledge during pretraining than standard approaches.

We define a training-free modality substitution operator $S_{y \to x}$ based on Section~\ref{sec:realign}. Given a text sample $y$, this operator maps its embedding into the image space distribution $\tilde{e}_x = S_{y \to x}(E_{\text{text}}(y))$. Here, $\tilde{e}_x$ serves as a pseudo-visual embedding. Owing to the geometry-preserving nature of ReAlign, it preserves the rich semantics of $y$ while strictly adhering to the anisotropic geometric statistics of real images. We train the adapter $\phi$ (with LLM $\theta$ frozen) to reconstruct the text conditioned on the pseudo-visual embedding $\tilde{e}_x$:
\begin{equation}
\mathcal{L}_{\text{pre}}(\phi) = -\sum_{t=1}^{T} \log p_\theta (y_t \mid y_{<t}, T_\phi(\tilde{e}_x)).
\end{equation}
\paragraph{Stage 2: Visual Instruction-Tuning}
While Stage 1 establishes geometric compatibility, Stage 2 focuses on enhancing capabilities in more challenging scenarios. In this stage, real image embeddings $e_x = E_{\text{img}}(x)$ are introduced. Real images provide fine-grained visual details that may be abstracted away under purely statistical alignment, and are essential for handling complex instructions and intricate reasoning tasks. We fine-tune the model using standard supervised instruction tuning:
\begin{equation}
\mathcal{L}_{\text{sft}}(\theta, \phi) = -\sum_{t=1}^{L} \log p_\theta (r_t \mid r_{<t}, I, T_\phi(e_x)),
\end{equation}
where $(I, r)$ denotes an image-instruction pair.

\paragraph{Inference}

During inference, the model directly takes real images as input. This inherent compatibility stems from the asymmetric alignment strategy. By aligning the text representation distribution to the image representation distribution during pretraining, the model supports single-image inference without relying on statistics from multiple images for calibration, incurring no additional computational overhead.
\section{Experiments}
\label{sec:experiments}

In this section, we systematically investigate the effectiveness of ReVision under the proposed geometry-preserving framework. Our experiments are designed to address three core research questions (RQs). To ensure a fair comparison, all methods employ the same model architecture. We use LLM2CLIP-Openai-L-14-336~\cite{huang2024llm2clip} as the encoder and Llama-3-8B-Instruct~\cite{dubey2024llama} as the LLM backbone. For ReVision training, we use bunny-pretrain~\cite{he2024efficient} during the Modality Substitution Pretraining stage and InternVL-Chat-V1-2-SFT~\cite{chen2024internvl} for the Visual Instruction Tuning stage. Detailed training, evaluation, and cost settings are provided in Appendix~\ref{app:setting}.

\begin{wrapfigure}{r}{0.6\columnwidth}
\vspace{-8pt}
\centering
\includegraphics[width=\linewidth]{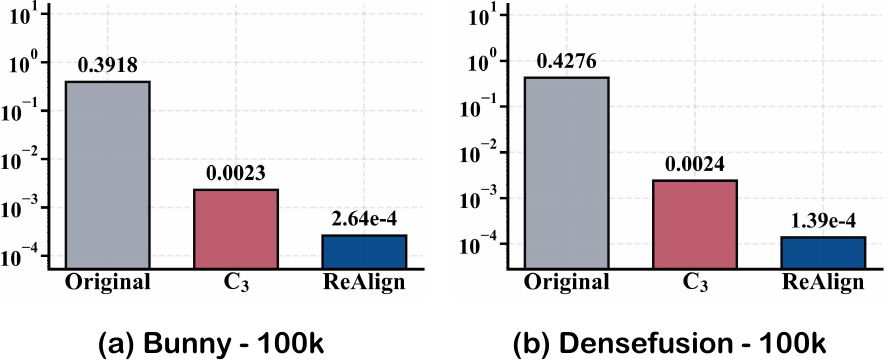}
\caption{We measure the modality gap between aligned centroids on Bunny and DenseFusion. While the baseline C3 stagnates at a geometric bottleneck ($\approx 0.0023$) due to isotropic assumptions, ReAlign reduces the gap to the $10^{-4}$ scale by effectively modeling anisotropic covariance.}
\vskip -0.1in
\label{fig:gap_analysis}
\end{wrapfigure}

\textbf{\textcolor{wine}{RQ1.} Does preserving anisotropy reduce the modality gap more effectively than isotropic corruption?} We first verify the geometric alignment quality by quantifying the Euclidean distance between the centroids of the aligned modalities. We performed alignment on 100k samples from the Bunny-pretrain~\cite{he2024efficient} and DenseFusion~\cite{li2024densefusion} datasets. As illustrated in Figure~\ref{fig:gap_analysis}, the original representations exhibit a substantial gap ($\approx 0.4$). While C$^3$~\cite{zhang2024connect} reduces the gap significantly via centroid subtraction, its reliance on isotropic noise limits the precision of the fit. The residual misalignment suggests that spherical noise cannot perfectly cover the anisotropic visual manifold. By explicitly modeling the covariance structure and correcting for manifold drift, ReAlign reduces the gap by orders of magnitude. On Bunny, the gap drops to $\mathbf{2.64 \times 10^{-4}}$, and on DenseFusion to $\mathbf{1.39 \times 10^{-4}}$. Notably, we observe that while the initial gaps vary across datasets ($0.3918$ for Bunny, $0.4276$ for DenseFusion), C$^3$ stagnates at a similar level ($\approx 0.0023$) for both. This performance plateau indicates a geometric bottleneck: isotropic noise lacks the flexibility to adapt to the representation variations of different distributions. In contrast, ReAlign breaks this bottleneck by adaptively matching the covariance trace of each dataset, demonstrating superior adaptability to the intrinsic geometric structure of diverse data sources. Additional spectral, angular, and visualization diagnostics are reported in Appendix~\ref{app:isotropic_ssumption}.

\textbf{\textcolor{wine}{RQ2.} How effective is ReAlign in large-scale MLLM training scenarios?} To ensure a fair comparison, all baselines adopt the same two-stage paradigm as ReVision: Stage 1 utilizes 1M text samples from Bunny-pretrain, followed by Stage 2 using the InternVL-Chat-V1-2-SFT. We compare ReVision against: \fancynumber{1} \textbf{Blind:} We directly evaluate Qwen3-235B-A22B-Instruct~\cite{yang2025qwen3} on text-only questions without providing image inputs. This demonstrates that without visual perception, even the most powerful LLMs falter on multimodal tasks. \fancynumber{2} \textbf{No Align:} Utilizing raw text representations. \fancynumber{3} \textbf{C$^3$ Align:} Adopting the C$^3$ strategy, which performs centroid alignment and injects isotropic Gaussian noise into text representations. Results are shown in the Table~\ref{tab1}. ReVision achieves the highest average score of 50.16, significantly outperforming C$^3$ (48.06). In reasoning-intensive benchmarks, ReVision surpasses C$^3$. This supports our point that C$^3$'s isotropic noise induces a whitening effect that erodes the fine-grained semantic hierarchy essential for complex reasoning. ReAlign preserves the spectral decay, maintaining the structural richness of the features. ReVision demonstrates a clear advantage in hallucination metrics (CRPE:$81.78$, HallBench:$46.58$). We attribute this to the correction of Phantom Drift. By ensuring the centroid is aligned on the hypersphere, ReVision prevents the projection layer from overfitting to spurious directional biases, enabling more faithful visual grounding. Additional MLLM ablations and scaling results are provided in Appendix~\ref{app:more_mllm_experiments}.

\begin{table*}[t]
\centering
\setlength{\tabcolsep}{2pt}
\scriptsize
\caption{Performance comparison of different geometric alignment strategies.}
\vspace{-5pt}
\label{tab1}
\renewcommand{\arraystretch}{1.2}
\resizebox{0.95\textwidth}{!}{%
\begin{tabular}{lcccccccccccc}
\toprule
\multirow{2}{*}{\textbf{Method}}
& \multicolumn{4}{c}{\textbf{General}}
& \multicolumn{4}{c}{\textbf{Reasoning}}
& \multicolumn{3}{c}{\textbf{Hallucination}}
& \multirow{2}{*}{\textbf{Avg. $\uparrow$}} \\
\cmidrule(lr){2-5} \cmidrule(lr){6-9} \cmidrule(lr){10-12}
& \texttt{MME} & \texttt{MMStar} & \texttt{SQA} & \texttt{RealWorldQA}
& \texttt{MMMU} & \texttt{MMMU-P} & \texttt{VisuLogic} & \texttt{LogicVista}
& \texttt{CRPE} & \texttt{POPE} & \texttt{HallBench} & \\
\midrule
Blind & 3.37 & 8.80 & 6.17 & 5.36 & 19.60 & 12.44 & 0.30 & 1.56 & 12.90 & 0.60 & 15.25 & 7.85 \\
\hdashline
No Align & 73.63 & 35.73 & 75.23 & 43.53 & 28.82 & 25.38 & 24.40 & 21.03 & 80.82 & 71.59 & 42.38 & 47.50 \\
C$^3$ Align~\cite{zhang2024connect} & 76.16 & 34.60 & 75.52 & 43.14 & 30.69 & 27.20 & 25.50 & 19.91 & 79.99 & 72.43 & 43.53 & 48.06 \\
\hdashline
\rowcolor{graybg} \textbf{ReVision (ours)} & \textbf{79.65} & \textbf{36.13} & \textbf{76.71} & \textbf{47.97} & \textbf{31.51} & \textbf{28.39} & \textbf{27.70} & \textbf{22.82} & \textbf{81.78} & \textbf{72.53} & \textbf{46.58} & \textbf{50.16} \\
\bottomrule
\end{tabular}
}
\end{table*}

\begin{table*}[t]
\centering
\setlength{\tabcolsep}{1.5pt}
\setlength{\tabcolsep}{2pt}
\scriptsize
\caption{Cost-performance comparison between paired image-text pretraining and text-only ReVision scaling.} 
\vspace{-5pt}
\label{tab2}
\renewcommand{\arraystretch}{1.2}
\resizebox{0.95\textwidth}{!}{%
\begin{tabular}{lcccccccccccccc}
\toprule
\multirow{2}{*}{\textbf{Method}}
& \multirow{2}{*}{\textbf{Text-only}}
& \multicolumn{4}{c}{\textbf{General}}
& \multicolumn{4}{c}{\textbf{Reasoning}}
& \multicolumn{3}{c}{\textbf{Hallucination}}
& \multirow{2}{*}{\textbf{Avg. $\uparrow$}}
& \multirow{2}{*}{\textbf{Cost $\downarrow$}} \\
\cmidrule(lr){3-6} \cmidrule(lr){7-10} \cmidrule(lr){11-13}
&& \texttt{MME} & \texttt{MMStar} & \texttt{SQA} & \texttt{RealWorldQA}
& \texttt{MMMU} & \texttt{MMMU-P} & \texttt{VisuLogic} & \texttt{LogicVista}
& \texttt{CRPE} & \texttt{POPE} & \texttt{HallBench} & & \\
\midrule
Paired image-text & \xmark & 73.59 & 35.40 & 76.01 & 44.18 & 34.80 & 27.70 & 25.90 & 23.27 & \textbf{80.87} & 69.13 & 47.11 & 48.91 & 1.00 \\
\hdashline
Unicorn~\cite{yu2025unicorn} & \cmark & 60.24 & 35.13 & 68.81 & 42.35 & \textbf{36.87} & \textbf{34.05} & \textbf{26.80} & \textbf{29.53} & 42.32 & 64.21 & 43.01 & 43.94 & 3.98~\textsubscript{\textcolor{crimson}{(+298\%)}} \\
\textbf{ReVision-1M (ours)} & \cmark & 72.20 & 34.33 & 75.84 & 43.72 & 30.22 & 27.64 & 25.70 & 21.03 & 79.59 & 71.93 & 46.37 & 48.05 &  \textbf{0.37}~\textsubscript{\textcolor{forestgreen}{(-63\%)}}\\
\hdashline
\rowcolor{graybg} \textbf{ReVision-2M (ours)} & \cmark
& \textbf{74.94} & \textbf{36.40} & \textbf{76.35} & \textbf{45.23}
& 33.49 & 29.59 & \textbf{26.80} & 24.38
& 80.14 & \textbf{72.18} & \textbf{48.26} & \textbf{49.75} & 0.74~\textsubscript{\textcolor{forestgreen}{(-26\%)}} \\
\bottomrule
\end{tabular}%
}
\vskip -0.1in
\end{table*}

\textbf{\textcolor{wine}{RQ3.} Can scaling up low-cost text-only pretraining surpass the performance of expensive paired image-text training?} Finally, we investigate whether ReVision can challenge the traditional paradigm that relies on expensive paired image-text data. To ensure fairness, we sampled a subset of 417k examples from the SFT dataset for this comparison, aligning exactly with the SFT data scale used in Unicorn~\cite{yu2025unicorn}. We define the Cost metric based on the expense of data synthesis using GPT5 APIs, normalizing the cost of 1M image-text pairs to 1.0. We compare four pre-training settings in the Table~\ref{tab2}: \fancynumber{1} \textbf{Paired image-text:} 1M ground-truth image-text pairs (upper bound). \fancynumber{2} \textbf{Unicorn:} 1M unpaired text samples, text-only method using mean shift. \fancynumber{3} \textbf{ReVision-1M}: 1M unpaired text samples. \fancynumber{4} \textbf{ReVision-2M:} 2M unpaired text samples. Comparing the text-only methods, ReVision-2M ($49.75$) dramatically outperforms Unicorn ($43.94$). Since Unicorn relies on simple mean-shifting, its synthesized features lack the correct geometric shape, leading to poor manifold penetration. ReVision generates pseudo-features that statistically mimic real visual distributions, proving that how you align matters as much as what you align. Additionally, we observe that Unicorn achieves notably high scores on Reasoning benchmarks. This discrepancy stems from the SFT data distribution: the InternVL-Chat-V1-2-SFT used in our controlled experiments focuses primarily on general visual conversation and basic perception, whereas Unicorn's original SFT dataset incorporates a massive volume of complex reasoning tasks. This aligns with our perspective that unlocking deep reasoning potential relies heavily on complex SFT tasks, suggesting that ReVision's performance could be further elevated with more challenging instruction tuning data. Importantly, ReVision-2M (49.75) surpasses the w/. Image baseline (48.91) trained on 1M real paired samples. Comparing with paired data, ReVision-2M (49.75) outperforms the 1M paired baseline (48.91) at only 74\% of the cost (0.74 vs. 1.00). This highlights a promising scaling trajectory: continuously scaling up low-cost text data can surpass the performance of expensive paired data with significantly lower overhead. 



\section{Conclusion}

We address the Modality Gap by establishing the Fixed-frame Theory. Moving beyond isotropic assumptions, we decompose the gap into stable biases and anisotropic residuals, revealing it as a structured geometric phenomenon rather than random noise. Guided by these insights, we introduce ReAlign, a training-free statistical alignment strategy, and ReVision, a scalable paradigm leveraging unpaired text as a substitute for expensive image-text pairs. Experiments show ReVision significantly mitigates the gap and outperforms baselines, proving that high-quality visual structures can be learned from pure text. This work offers a robust, cost-effective pathway for scaling MLLMs.


\bibliography{ref}
\bibliographystyle{plainnat}


\newpage
\appendix
\onecolumn

\newpage

\section{Modality Gap Phenomenon: Structural Causes}
\label{sec:appendix_gap_causes}

This appendix explains why a modality gap can persist in the contrastive dual-encoder setting studied in Sec.~\ref{sec:modality_gap}. The key point is that the gap is not merely a consequence of noisy data or imperfect optimization. Rather, it is enabled by the interaction of three structural factors: dual-encoder isolation, spherical normalization, and dot-product contrastive supervision. Together, these factors create a weakly controlled channel in which the orthogonal component of the gap can survive late into training.

\paragraph{Recall.}
For paired embeddings \(e_x(t),e_y(t)\in\mathbb{R}^d\), the modality gap is defined as \(\Delta(t):=e_x(t)-e_y(t)\), with mean gap \(m(t):=\mathbb{E}[\Delta(t)]\). Under the fixed reference decomposition \(\mathbb{R}^d=U\oplus V\), Sec.~\ref{subsec:decomposition_framework} decomposes the gap as
\[
\Delta(t)=\beta(t)+\gamma(t)+\delta(t)+\zeta(t),
\]
where \(\beta(t)=P_Um(t)\) is the principal mean bias in the dominant representation subspace, \(\gamma(t)=P_Vm(t)\) is the passive orthogonal bias in the orthogonal complement, and \(\delta(t),\zeta(t)\) are zero-mean residual components. Sec.~\ref{subsec:bias} shows that \(\gamma(t)\) is not rapidly eliminated in late training, while Sec.~\ref{subsec:res} shows that the residual components have anisotropic second-order structure. Sec.~\ref{subsec:phantom_drift} further shows that spherical normalization couples the mean anchor with the residual covariance, producing Phantom Drift.

\paragraph{Thesis.}
The persistence of the modality gap is explained by three structural causes. First, dual-encoder isolation creates independent modality anchors before alignment. Second, spherical normalization turns second-order residual anisotropy into centroid drift on the unit hypersphere. Third, the dot-product InfoNCE head restricts embedding-level gradients to the contrastive-set span, leaving directions outside the dominant representation subspace weakly corrected.

\subsection{Cause 1: Dual-Encoder Isolation}

\paragraph{Structural condition.}
The model uses two separate encoders for the two modalities. Each modality is processed independently until the final shared embedding space, producing unit-normalized embeddings \(e_x(t)\) and \(e_y(t)\).

\paragraph{Mechanism.}
Because the two encoders have different input structures, architectures, and inductive biases, their representation distributions need not have the same centroid. Thus, even when paired samples are semantically aligned, the population mean gap \(m(t)=\mathbb{E}[e_x(t)-e_y(t)]\) is generally nonzero. Under the fixed reference frame, this mean mismatch decomposes into \(\beta(t)=P_Um(t)\) and \(\gamma(t)=P_Vm(t)\). The component \(\beta(t)\) is the anchor displacement inside the dominant representation subspace, while \(\gamma(t)\) is the orthogonal mean displacement that can persist when direct optimization pressure along \(V\) is weak.

\paragraph{Implication.}
Dual-encoder isolation is therefore the structural source of the first-order anchor mismatch. If the modalities were fused before the final embedding layer, their intermediate features would be jointly processed and the notion of two independently formed modality anchors would be substantially weakened. In contrast, the dual-encoder design allows each modality distribution to develop its own centroid before contrastive alignment acts on the final embeddings.

\subsection{Cause 2: Spherical Normalization}

\paragraph{Structural condition.}
The encoder outputs are explicitly normalized by an L2 projection \(e=z/\|z\|\), so all final embeddings lie on the unit hypersphere.

\paragraph{Mechanism.}
Spherical normalization is nonlinear, so Euclidean mean alignment does not imply spherical centroid alignment. Let a mean-aligned source representation be \(z=\mu+\epsilon\), where \(\mathbb{E}[\epsilon]=0\) and \(\operatorname{Cov}(\epsilon)=\Sigma\). Although \(\mathbb{E}[z]=\mu\), the normalized centroid is
\[
\mu^\pi=\mathbb{E}\!\left[\frac{\mu+\epsilon}{\|\mu+\epsilon\|}\right],
\]
which generally differs from \(\pi(\mu)=\mu/\|\mu\|\). Thus the normalization-induced drift is
\[
\Delta^\pi
=
\mathbb{E}\!\left[\frac{\mu+\epsilon}{\|\mu+\epsilon\|}\right]
-
\frac{\mu}{\|\mu\|}.
\]
This is the Phantom Drift analyzed in Sec.~\ref{subsec:phantom_drift}. Its magnitude and direction depend on both the anchor direction \(\mu\) and the residual covariance \(\Sigma\). Therefore, the anisotropic residual structure identified in Sec.~\ref{subsec:res} is not geometrically harmless: after normalization, it can shift the spherical centroid even when the Euclidean mean has already been aligned.

\paragraph{Implication.}
Spherical normalization turns second-order residual shape into a first-order centroid effect on the hypersphere. Without this nonlinear projection, mean correction would remain a linear Euclidean operation. With normalization, however, residual anisotropy and anchor direction interact, producing an additional centroid mismatch that must be corrected separately.

\subsection{Cause 3: Dot-Product Contrastive Supervision}

\paragraph{Structural condition.}
The contrastive objective uses dot-product similarity between unit-normalized embeddings, as in InfoNCE. The loss compares each embedding only through its inner products with embeddings from the current contrastive set.

\paragraph{Mechanism.}
For a mini-batch \(B_t\), let
\[
U_t^{(B)}
:=
\operatorname{span}
\left(
\{e_{x,i}(t)\}_{i\in B_t}
\cup
\{e_{y,i}(t)\}_{i\in B_t}
\right)
\]
be the contrastive-set span. For the mini-batch InfoNCE loss, the embedding-level gradients satisfy
\[
g_{x,i}^{(B)}(t)\in\operatorname{span}\{e_{y,j}(t):j\in B_t\},
\qquad
g_{y,i}^{(B)}(t)\in\operatorname{span}\{e_{x,j}(t):j\in B_t\}.
\]
Hence, the gradients are constrained by the representation span of the current contrastive set. In late training, embeddings concentrate near the instantaneous dominant subspace \(U_t\), so the main optimization signal is also concentrated near \(U_t\). When we measure the leakage into the fixed orthogonal complement \(V=U^\perp\), any vector \(g\in U_t\) satisfies
\[
\frac{\|P_Vg\|}{\|g\|}
\le
\|P_VP_{U_t}\|
=
\sin\theta(U_t,U),
\]
where \(\theta(U_t,U)\) is the largest principal angle between the instantaneous subspace \(U_t\) and the fixed reference subspace \(U\). Thus, a large portion of the observed \(V\)-direction leakage can be attributed to subspace rotation rather than to a strong corrective signal along \(V\).

\paragraph{Implication.}
The dot-product InfoNCE head does not uniformly penalize mismatch in all ambient directions. Instead, it primarily produces gradients along directions spanned by the current contrastive representations. As a result, the orthogonal component \(\gamma(t)\) is only weakly corrected in late training. This explains why the POB behaves as a passive bias: it maintains a stable direction and drifts slowly rather than being rapidly optimized away.

\subsection{Summary}

The modality gap persists because standard contrastive dual-encoder training contains three interacting structural mechanisms. Dual-encoder isolation creates independent modality anchors, giving rise to a nonzero first-order gap \(m(t)=\beta(t)+\gamma(t)\). Spherical normalization couples this anchor with anisotropic residual covariance, producing Phantom Drift on the unit hypersphere. Dot-product InfoNCE supervision restricts embedding-level gradients to the contrastive-set span, leaving the orthogonal complement \(V\) weakly corrected. Therefore, the modality gap should not be viewed as a mere optimization failure or random noise artifact. It is a structural byproduct of the representation geometry induced by dual encoders, spherical embeddings, and contrastive dot-product training, and thus requires explicit geometric rectification.

\section{Proof of Theoretical Claims}
\label{app:proofs}

This appendix formalizes the mathematical claims used in Sec.~\ref{sec:modality_gap} and Sec.~\ref{sec:realign}. We separate deterministic identities from empirical conditions. In particular, low reference leakage, high effective condition number, and stable spectral profiles are empirical observations measured in the main text; the results below prove what these observations imply under the fixed-reference decomposition.

\subsection{Preliminaries}

\begin{definition}[Orthogonal fixed reference frame]
Let \(U\subset\mathbb{R}^d\) be the dominant representation subspace estimated at the reference time \(t_0\), and let \(V:=U^\perp\). Denote by \(P_U\) and \(P_V\) the orthogonal projectors onto \(U\) and \(V\), respectively. Then \(P_U+P_V=I\), \(P_UP_V=0\), and every vector \(a\in\mathbb{R}^d\) admits the unique orthogonal decomposition \(a=P_Ua+P_Va\).
\end{definition}

\begin{definition}[Modality gap and residual components]
For paired embeddings \(e_x(t),e_y(t)\in\mathbb{R}^d\), define the modality gap as \(\Delta(t):=e_x(t)-e_y(t)\) and its mean as \(m(t):=\mathbb{E}[\Delta(t)]\). Under the fixed frame \(U\oplus V\), define
\[
\beta(t):=P_Um(t),\qquad \gamma(t):=P_Vm(t),\qquad
\delta(t):=P_U(\Delta(t)-m(t)),\qquad
\zeta(t):=P_V(\Delta(t)-m(t)).
\]
\end{definition}

\subsection{Proof of the Bias--Residual Decomposition}

\begin{lemma}[Orthogonal bias--residual decomposition]
For every \(t\ge t_0\), the modality gap decomposes as
\[
\Delta(t)=\beta(t)+\gamma(t)+\delta(t)+\zeta(t).
\]
Moreover, \(\mathbb{E}[\delta(t)]=0\), \(\mathbb{E}[\zeta(t)]=0\), \(\delta(t)\in U\), \(\zeta(t)\in V\), and \(\langle \delta(t),\zeta(t)\rangle=0\) almost surely.
\end{lemma}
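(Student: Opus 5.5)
The plan is to derive everything directly from the algebraic properties of the orthogonal fixed reference frame, namely \(P_U+P_V=I\), \(P_UP_V=0\), and the self-adjointness of the projectors. We assume that \(\Delta(t)\) is integrable, which holds because \(e_x(t),e_y(t)\) are unit vectors and hence \(\|\Delta(t)\|\le 2\). This guarantees that \(m(t)\) is well defined and finite.

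\textbf{Decomposition identity.} First write \(\Delta(t)=m(t)+(\Delta(t)-m(t))\). Apply the identity \(I=P_U+P_V\) to each summand separately. This gives \(m(t)=P_Um(t)+P_Vm(t)=\beta(t)+\gamma(t)\) and \(\Delta(t)-m(t)=P_U(\Delta(t)-m(t))+P_V(\Delta(t)-m(t))=\delta(t)+\zeta(t)\). Adding the two equalities yields the claimed four-term decomposition. Nothing here depends on \(t\ge t_0\) beyond the fact that \(P_U\) and \(P_V\) are held fixed.

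\textbf{Zero means.} Since \(P_U\) is a deterministic linear map, expectation commutes with it:
\[
\mathbb{E}[\delta(t)]=P_U\bigl(\mathbb{E}[\Delta(t)]-m(t)\bigr)=P_U\,0=0.
\]
The identical argument with \(P_V\) in place of \(P_U\) gives \(\mathbb{E}[\zeta(t)]=0\).

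\textbf{Subspace membership and orthogonality.} \(\delta(t)\in U\) and \(\zeta(t)\in V\) hold because each is, pointwise, in the range of the corresponding projector. For the inner product, use self-adjointness to compute, for each realization,
\[
\langle P_Ua,P_Va\rangle=\langle a,P_UP_Va\rangle=0,
\]
with \(a=\Delta(t)-m(t)\). Equivalently, the inner product vanishes because \(U\perp V\). This identity holds for every outcome, so in particular it holds almost surely.

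There is no real obstacle in this argument. The only point that needs care is the integrability of \(\Delta(t)\), which ensures \(m(t)\) exists and allows expectation to be exchanged with the fixed linear projectors; the unit-norm bound above settles this.
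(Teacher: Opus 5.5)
Your proposal is correct and follows essentially the same route as the paper's proof: split \(\Delta(t)=m(t)+(\Delta(t)-m(t))\), apply \(P_U+P_V=I\) to each piece, use linearity of expectation for the zero means, and use \(U\perp V\) for the subspace memberships and pointwise orthogonality. Your added remark that \(\|\Delta(t)\|\le 2\) guarantees integrability of \(\Delta(t)\) and hence existence of \(m(t)\) is a small rigor detail the paper leaves implicit.
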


\begin{proof}
Since \(P_U+P_V=I\), we have
\[
\Delta(t)=m(t)+(\Delta(t)-m(t))
        =P_Um(t)+P_Vm(t)+P_U(\Delta(t)-m(t))+P_V(\Delta(t)-m(t)).
\]
By definition, this is exactly \(\beta(t)+\gamma(t)+\delta(t)+\zeta(t)\). Since \(P_U\) and \(P_V\) are linear,
\[
\mathbb{E}[\delta(t)]
=
P_U\mathbb{E}[\Delta(t)-m(t)]
=0,
\qquad
\mathbb{E}[\zeta(t)]
=
P_V\mathbb{E}[\Delta(t)-m(t)]
=0.
\]
The membership \(\delta(t)\in U\) and \(\zeta(t)\in V\) follows from the definitions of the projectors. Since \(U\perp V\), the two residual components are orthogonal almost surely.
\end{proof}

\begin{proposition}[Mean centering is necessary for pure residual covariance]
Let \(\Delta_U(t):=P_U\Delta(t)=\beta(t)+\delta(t)\). Then
\[
\mathbb{E}\!\left[\Delta_U(t)\Delta_U(t)^\top\right]
=
\beta(t)\beta(t)^\top+\operatorname{Cov}(\delta(t)).
\]
Thus, without removing \(\beta(t)\), the second moment in \(U\) mixes the first-order mean displacement with the second-order residual covariance.
\end{proposition}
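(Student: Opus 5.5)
The plan is to reduce the claim to the standard identity ``second moment $=$ mean outer product $+$ covariance,'' applied to the $U$-projected gap. First, I will use the preceding Lemma (orthogonal bias--residual decomposition) to write $\Delta_U(t)=P_U\Delta(t)=P_Um(t)+P_U(\Delta(t)-m(t))=\beta(t)+\delta(t)$. Here $\beta(t)$ is deterministic, meaning it is a fixed vector once $t$ is fixed. The Lemma also gives $\mathbb{E}[\delta(t)]=0$.

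Second, I will expand the outer product:
\[
\Delta_U(t)\Delta_U(t)^\top=\beta(t)\beta(t)^\top+\beta(t)\delta(t)^\top+\delta(t)\beta(t)^\top+\delta(t)\delta(t)^\top .
\]
Taking expectations and using linearity, together with the fact that $\beta(t)$ is constant, the cross terms become $\beta(t)\,\mathbb{E}[\delta(t)]^\top$ and $\mathbb{E}[\delta(t)]\,\beta(t)^\top$. Both vanish because $\mathbb{E}[\delta(t)]=0$.

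Third, since $\delta(t)$ is zero-mean, $\mathbb{E}[\delta(t)\delta(t)^\top]=\operatorname{Cov}(\delta(t))$. This gives the stated identity. Equivalently, one can note that $\operatorname{Cov}(\delta(t))=P_U\operatorname{Cov}(\Delta(t))P_U$, which shows that it is the pure residual covariance on $U$.

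For the interpretive clause, I will observe that $\beta(t)\beta(t)^\top$ is a rank-at-most-one positive semidefinite matrix. It is nonzero whenever $\beta(t)\neq 0$, and its trace is $\|\beta(t)\|^2$. So the uncentered second moment differs from $\operatorname{Cov}(\delta(t))$ exactly by this term. In particular, it inflates the spectrum along $\beta(t)/\|\beta(t)\|$ and can change both the leading eigenvector and the effective condition number. Subtracting $\beta(t)$, as done in the definition of $\delta(t)$, is therefore necessary and sufficient to isolate the residual shape.

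There is no real obstacle. The only point requiring care is integrability: second moments must be finite. This is automatic here because $e_x$ and $e_y$ are unit vectors, so $\|\Delta(t)\|\le 2$ and every expectation involved is well defined.
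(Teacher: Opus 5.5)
Your proposal is correct and follows the paper's proof: expand $(\beta(t)+\delta(t))(\beta(t)+\delta(t))^\top$, use $\mathbb{E}[\delta(t)]=0$ to kill the cross terms, and identify $\mathbb{E}[\delta(t)\delta(t)^\top]$ with $\operatorname{Cov}(\delta(t))$. Your extra remarks are sound and go slightly beyond what the paper writes: the identity $\operatorname{Cov}(\delta(t))=P_U\operatorname{Cov}(\Delta(t))P_U$, the rank-one inflation by $\beta(t)\beta(t)^\top$, and integrability from $\|\Delta(t)\|\le 2$.
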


\begin{proof}
Since \(\mathbb{E}[\delta(t)]=0\), we have
\[
\mathbb{E}\!\left[(\beta(t)+\delta(t))(\beta(t)+\delta(t))^\top\right]
=
\beta(t)\beta(t)^\top
+
\mathbb{E}[\delta(t)\delta(t)^\top].
\]
Because \(\delta(t)\) is zero-mean, \(\mathbb{E}[\delta(t)\delta(t)^\top]=\operatorname{Cov}(\delta(t))\). This proves the claim.
\end{proof}

\begin{proposition}[Translation-correctability of the first-order anchor mismatch]
At the Euclidean mean level, the first-order anchor mismatch \(m(t)=\beta(t)+\gamma(t)\) is removed by the translation \(\Delta(t)\mapsto \Delta(t)-m(t)\). Equivalently, translating one modality by the difference of modality means aligns the two modality anchors.
\end{proposition}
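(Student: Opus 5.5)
The proof is a direct linearity-of-expectation argument. It has two parts, one for each formulation in the statement.

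\textbf{First formulation.} Consider the translated gap $\Delta(t)-m(t)$. Since $m(t)=\mathbb{E}[\Delta(t)]$ is a deterministic vector, linearity gives
\[
\mathbb{E}[\Delta(t)-m(t)]=m(t)-m(t)=0 .
\]
To connect this with the decomposition $m(t)=\beta(t)+\gamma(t)$, I would invoke the Orthogonal bias--residual decomposition lemma. It gives $\Delta(t)-m(t)=\delta(t)+\zeta(t)$. Hence both the $U$-component $\beta(t)$ and the $V$-component $\gamma(t)$ are removed simultaneously. The residual left in each subspace is zero-mean: $\mathbb{E}[P_U(\Delta(t)-m(t))]=0$ and $\mathbb{E}[P_V(\Delta(t)-m(t))]=0$.

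\textbf{Second formulation.} Write $\mu_x(t):=\mathbb{E}[e_x(t)]$ and $\mu_y(t):=\mathbb{E}[e_y(t)]$, so that $m(t)=\mu_x(t)-\mu_y(t)$. Translate one modality by the mean difference, for instance $e_y(t)\mapsto \dot e_y(t):=e_y(t)+m(t)$; this is the Anchor Alignment form $(e_y-\mu_y)+\mu_x$. Then:
\begin{itemize}
\item $\mathbb{E}[\dot e_y(t)]=\mu_y(t)+m(t)=\mu_x(t)$, so the two anchors coincide;
\item the new gap is $e_x(t)-\dot e_y(t)=\Delta(t)-m(t)$, which is exactly the translation in the first formulation.
\end{itemize}
This shows the two formulations are equivalent. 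Because $P_U$ and $P_V$ are linear and fixed, the anchors also agree after projecting onto $U$ and onto $V$ separately.

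\textbf{Where care is needed.} There is no substantive obstacle. The only point to watch is the scope of the claim: it holds in Euclidean space, before any renormalization. Translation preserves all centered second moments, since $\operatorname{Cov}(\dot e_y)=\operatorname{Cov}(e_y)$. However, the translated embedding is generally no longer unit-norm. Spherical centroid alignment therefore does not follow, which is consistent with the Phantom Drift discussion in Sec.~\ref{subsec:phantom_drift}. I would state this qualifier explicitly so the proposition claims only mean-level (first-order) alignment.
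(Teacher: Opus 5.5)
Your proposal is correct and uses the same linearity-of-expectation argument as the paper: the paper defines $\dot e_y=(e_y-\mu_y)+\mu_x$, checks $\mathbb{E}[\dot e_y]=\mu_x$, and notes $\mathbb{E}[\Delta(t)-m(t)]=0$. Your explicit identity $e_x-\dot e_y=\Delta(t)-m(t)$ shows the equivalence of the two formulations more clearly than the paper, which simply states the two facts side by side.
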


\begin{proof}
Let \(e_y\) be the source-modality embedding and \(e_x\) the target-modality embedding, with means \(\mu_y=\mathbb{E}[e_y]\) and \(\mu_x=\mathbb{E}[e_x]\). Define the translated source embedding \(\dot e_y=(e_y-\mu_y)+\mu_x\). Then
\[
\mathbb{E}[\dot e_y]=\mathbb{E}[e_y]-\mu_y+\mu_x=\mu_x.
\]
Thus the source anchor is translated to the target anchor. In terms of the gap, subtracting \(m(t)\) gives \(\mathbb{E}[\Delta(t)-m(t)]=0\), so the first-order mean mismatch is removed.
\end{proof}

\subsection{Gradient Concentration and Passive Orthogonal Bias}

\begin{lemma}[InfoNCE gradient span constraint]
Consider a mini-batch \(B_t\) with normalized embeddings \(\{e_{x,i}(t)\}_{i\in B_t}\) and \(\{e_{y,i}(t)\}_{i\in B_t}\), and let the image-to-text InfoNCE loss for sample \(i\) be
\[
\ell_i^{x\to y}
=
-\log
\frac{\exp(\langle e_{x,i},e_{y,i}\rangle/\tau)}
{\sum_{j\in B_t}\exp(\langle e_{x,i},e_{y,j}\rangle/\tau)}.
\]
Then \(\nabla_{e_{x,i}}\ell_i^{x\to y}\in\operatorname{span}\{e_{y,j}:j\in B_t\}\). Similarly, for the text-to-image loss, the gradient with respect to \(e_{y,i}\) lies in \(\operatorname{span}\{e_{x,j}:j\in B_t\}\). Consequently, the embedding-level gradients of the symmetric InfoNCE loss are constrained by the contrastive-set span.
\end{lemma}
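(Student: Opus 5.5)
The plan is a direct computation of the gradient of \(\ell_i^{x\to y}\) with respect to \(e_{x,i}\), treating the embeddings as free vectors in \(\mathbb{R}^d\). The unit-norm constraint only matters if we differentiate through the normalization; this is treated at the end.

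First, write the loss as
\[
\ell_i^{x\to y}=-\tfrac{1}{\tau}\langle e_{x,i},e_{y,i}\rangle+\log\sum_{j\in B_t}\exp(\langle e_{x,i},e_{y,j}\rangle/\tau).
\]
Differentiating term by term gives
\[
\nabla_{e_{x,i}}\ell_i^{x\to y}=\tfrac{1}{\tau}\Big(\sum_{j\in B_t}p_{ij}\,e_{y,j}-e_{y,i}\Big),
\qquad
p_{ij}:=\frac{\exp(\langle e_{x,i},e_{y,j}\rangle/\tau)}{\sum_{k\in B_t}\exp(\langle e_{x,i},e_{y,k}\rangle/\tau)}.
\]
This is a linear combination of \(\{e_{y,j}:j\in B_t\}\), which gives the first claim. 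The text-to-image case follows by swapping the roles of \(x\) and \(y\).

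Second, handle the symmetric loss \(\mathcal{L}=\sum_i(\ell_i^{x\to y}+\ell_i^{y\to x})\). Here \(e_{x,i}\) also appears in each \(\ell_k^{y\to x}\), as a negative for \(k\neq i\) and as the positive for \(k=i\). Differentiating \(\ell_k^{y\to x}\) with respect to \(e_{x,i}\) gives \(\tfrac{1}{\tau}(q_{ki}-\mathbf{1}[k=i])\,e_{y,k}\), where \(q_{ki}\) is the corresponding softmax weight. So every contribution to \(\nabla_{e_{x,i}}\mathcal{L}\) is a scalar multiple of some \(e_{y,k}\), and the full gradient stays in \(\operatorname{span}\{e_{y,j}:j\in B_t\}\), which is contained in \(U_t^{(B)}\). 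The same holds symmetrically for \(\nabla_{e_{y,i}}\mathcal{L}\).

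The main subtlety is to state clearly which gradient is meant. If \(e=z/\|z\|\) and we want the gradient with respect to the pre-normalized \(z\), the chain rule applies the Jacobian \((I-ee^\top)/\|z\|\). This adds a component along \(e_{x,i}\) itself, so the gradient then lies in \(\operatorname{span}(\{e_{y,j}\}\cup\{e_{x,i}\})\), which is still inside the contrastive-set span \(U_t^{(B)}\). I would therefore prove the statement as written for the embedding-level gradient (with respect to \(e\)), and add a remark that after the normalization Jacobian the gradient remains in \(U_t^{(B)}\). The remark is what the leakage bound in Appendix~\ref{sec:appendix_gap_causes} relies on.
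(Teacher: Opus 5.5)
Your proof is correct and follows the same route as the paper: differentiate the log-softmax to get $\frac{1}{\tau}\bigl(\sum_{j}p_{ij}e_{y,j}-e_{y,i}\bigr)$ and read off that it lies in the span of the $e_{y,j}$. Your explicit cross-term computation $\nabla_{e_{x,i}}\ell_k^{y\to x}=\frac{1}{\tau}(q_{ki}-\mathbf{1}[k=i])\,e_{y,k}$ spells out what the paper compresses into ``gradients are sums of such terms,'' and your normalization-Jacobian remark is a sound optional addendum.
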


\begin{proof}
Let
\[
p_{ij}
=
\frac{\exp(\langle e_{x,i},e_{y,j}\rangle/\tau)}
{\sum_{k\in B_t}\exp(\langle e_{x,i},e_{y,k}\rangle/\tau)}.
\]
Differentiating \(\ell_i^{x\to y}\) with respect to \(e_{x,i}\) gives
\[
\nabla_{e_{x,i}}\ell_i^{x\to y}
=
\frac{1}{\tau}\left(\sum_{j\in B_t}p_{ij}e_{y,j}-e_{y,i}\right),
\]
which is a linear combination of \(\{e_{y,j}:j\in B_t\}\). The text-to-image direction is symmetric. For the symmetric InfoNCE objective, gradients are sums of such terms, and hence remain within the corresponding contrastive-set span.
\end{proof}

\begin{lemma}[Geometric leakage bound]
Let \(U_t\) be an instantaneous \(r\)-dimensional subspace and \(U\) the fixed \(r\)-dimensional reference subspace, with \(V=U^\perp\). Let \(\theta(U_t,U)\) be their largest principal angle. For any nonzero \(g\in U_t\),
\[
\frac{\|P_Vg\|}{\|g\|}
\le
\|P_VP_{U_t}\|_2
=
\sin\theta(U_t,U).
\]
\end{lemma}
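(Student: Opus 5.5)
The bound splits into an inequality and an identity. For the inequality, take any nonzero \(g\in U_t\); then \(g=P_{U_t}g\), so
\[
\|P_Vg\|=\|P_VP_{U_t}g\|\le\|P_VP_{U_t}\|_2\,\|g\|,
\]
and dividing by \(\|g\|\) gives the first relation. This step only uses that \(P_{U_t}\) fixes \(U_t\) and the definition of the operator norm.

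For the identity \(\|P_VP_{U_t}\|_2=\sin\theta(U_t,U)\), we use principal angles. Let \(W\in\mathbb{R}^{d\times r}\) have orthonormal columns spanning \(U_t\), and \(Q_U\in\mathbb{R}^{d\times r}\) have orthonormal columns spanning \(U\). The cosines of the principal angles \(\theta_1\le\dots\le\theta_r\) are the singular values of \(Q_U^\top W\). Since \(P_{U_t}=WW^\top\) and \(W^\top\) is a coisometry, \(\|P_VP_{U_t}\|_2=\|P_VW\|_2\). Next compute
\[
W^\top P_V W = I_r - W^\top P_U W = I_r-(Q_U^\top W)^\top(Q_U^\top W).
\]
Its eigenvalues are \(1-\cos^2\theta_i=\sin^2\theta_i\). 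Hence \(\|P_VW\|_2^2=\lambda_{\max}(W^\top P_VW)=\sin^2\theta_r\), and \(\theta_r=\theta(U_t,U)\) is the largest principal angle. Taking square roots gives the claim.

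The only point needing care is the identity step. We must make sure the singular values of \(Q_U^\top W\) really are the cosines of the principal angles; this is the standard characterization, which we would cite. We must also make sure \(\|P_VP_{U_t}\|_2=\|P_VW\|_2\). That follows because \(\|P_VWW^\top\|_2=\|P_VW\|_2\), since \(W^\top\) maps the unit ball of \(\mathbb{R}^d\) onto the unit ball of \(\mathbb{R}^r\).

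The equal-dimension assumption \(\dim U_t=\dim U=r\) is where the definition of the largest principal angle is used without ambiguity. We would note that if the dimensions differed, the same computation still bounds the leakage by the largest principal angle measured from \(U_t\), so the inequality part is robust. Everything else is routine linear algebra.
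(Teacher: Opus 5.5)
Your proof is correct and follows the paper's argument: the inequality comes from \(g=P_{U_t}g\) and the definition of the operator norm, exactly as in the paper. The only difference is that the paper just cites \(\|P_VP_{U_t}\|_2=\sin\theta(U_t,U)\) as the standard principal-angle relation, whereas you derive it from \(W^\top P_VW=I_r-(Q_U^\top W)^\top(Q_U^\top W)\), which is a valid and useful filling-in of that step.
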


\begin{proof}
Since \(g\in U_t\), we have \(P_{U_t}g=g\). Therefore,
\[
\|P_Vg\|=\|P_VP_{U_t}g\|\le \|P_VP_{U_t}\|_2\|g\|.
\]
Dividing by \(\|g\|\) gives the inequality. The equality \(\|P_VP_{U_t}\|_2=\sin\theta(U_t,U)\) is the standard relation between orthogonal projection residuals and the largest principal angle between two subspaces.
\end{proof}

\begin{proposition}[Weak \(V\)-direction updates imply slow POB drift]
Suppose the orthogonal mean bias evolves as \(\gamma(t+1)=\gamma(t)+\eta(t)\), where \(\eta(t)\in V\) is the effective update along \(V\). If \(\|\eta(t)\|\le \varepsilon_t\), then over a window \(\mathcal{T}=\{t_0,\ldots,t_0+\tau-1\}\),
\[
\|\gamma(t_0+k)-\gamma(t_0)\|
\le
\sum_{j=0}^{k-1}\varepsilon_{t_0+j}.
\]
Moreover, if \(\|\eta(t)\|\le \alpha\|\gamma(t)\|\) with \(0<\alpha<1\), then the adjacent-step direction changes are small, in the sense that
\[
\cos(\gamma(t+1),\gamma(t))
\ge
1-\frac{2\alpha^2}{(1-\alpha)^2}.
\]
\end{proposition}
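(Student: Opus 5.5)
The first claim follows by telescoping, and the second by the unit-vector identity $1-\cos(a,b)=\tfrac12\bigl\|a/\|a\|-b/\|b\|\bigr\|^2$ together with a normalization-perturbation bound. Neither part uses the geometry of $U,V$ beyond $\eta(t)\in V$. The inputs are only the recursion $\gamma(t+1)=\gamma(t)+\eta(t)$ and the norm bounds on $\eta$.

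For the drift bound, I would unroll the recursion over the window to get
\[
\gamma(t_0+k)-\gamma(t_0)=\sum_{j=0}^{k-1}\eta(t_0+j).
\]
The triangle inequality and $\|\eta(t)\|\le\varepsilon_t$ then give $\|\gamma(t_0+k)-\gamma(t_0)\|\le\sum_{j=0}^{k-1}\varepsilon_{t_0+j}$ directly. I would note that this holds for every $k\le\tau$ with $t_0+k$ in the window.

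For the cosine bound, I would first secure nondegeneracy. Assume $\gamma(t)\neq 0$ so the cosine is defined. Write $\gamma=\gamma(t)$, $\gamma'=\gamma(t+1)=\gamma+\eta$. The reverse triangle inequality gives $\|\gamma'\|\ge(1-\alpha)\|\gamma\|>0$, since $\alpha<1$. Set $u=\gamma/\|\gamma\|$ and $u'=\gamma'/\|\gamma'\|$; these are unit vectors, so $\cos(\gamma',\gamma)=\langle u',u\rangle=1-\tfrac12\|u'-u\|^2$. The remaining task is to show $\|u'-u\|\le 2\alpha/(1-\alpha)$, which yields exactly the stated $1-2\alpha^2/(1-\alpha)^2$.

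The key step is this perturbation bound on the normalized vectors. I would use the decomposition
\[
u'-u=\frac{\gamma'-\gamma}{\|\gamma'\|}+\gamma\Bigl(\frac{1}{\|\gamma'\|}-\frac{1}{\|\gamma\|}\Bigr).
\]
The first term has norm $\|\eta\|/\|\gamma'\|$. The second term has norm $\bigl|\|\gamma\|-\|\gamma'\|\bigr|/\|\gamma'\|\le\|\eta\|/\|\gamma'\|$. Hence $\|u'-u\|\le 2\|\eta\|/\|\gamma'\|\le 2\alpha\|\gamma\|/\bigl((1-\alpha)\|\gamma\|\bigr)=2\alpha/(1-\alpha)$. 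Substituting into the unit-vector identity finishes the proof.

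A sharper bound $\cos\ge\sqrt{1-\alpha^2}$ also holds, since $\gamma'$ lies in the ball of radius $\alpha\|\gamma\|$ around $\gamma$. I would remark on it but stick with the route above, because it reproduces the stated constant exactly.
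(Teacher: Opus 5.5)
Your proof is correct and follows the paper's argument: telescoping plus the triangle inequality for the drift bound, then for the cosine bound the reverse-triangle estimate $\|\gamma(t+1)\|\ge(1-\alpha)\|\gamma(t)\|$, the normalization-perturbation bound $2\|\eta(t)\|/\|\gamma(t+1)\|\le 2\alpha/(1-\alpha)$, and the unit-vector identity $\|u-v\|^2=2(1-\langle u,v\rangle)$. Your explicit decomposition of $u'-u$ supplies the justification for that perturbation bound, which the paper states without proof.
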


\begin{proof}
The drift bound follows by telescoping:
\[
\gamma(t_0+k)-\gamma(t_0)=\sum_{j=0}^{k-1}\eta(t_0+j),
\]
and applying the triangle inequality. For the cosine bound, let \(a=\gamma(t)\) and \(b=a+\eta(t)\). Since \(\|\eta(t)\|\le\alpha\|a\|\), we have \(\|b\|\ge (1-\alpha)\|a\|\). The normalized direction perturbation satisfies
\[
\left\|\frac{b}{\|b\|}-\frac{a}{\|a\|}\right\|
\le
\frac{2\|\eta(t)\|}{\|b\|}
\le
\frac{2\alpha}{1-\alpha}.
\]
Using \(\|u-v\|^2=2(1-\langle u,v\rangle)\) for unit vectors \(u,v\), we obtain
\[
1-\cos(a,b)
\le
\frac{1}{2}\left(\frac{2\alpha}{1-\alpha}\right)^2
=
\frac{2\alpha^2}{(1-\alpha)^2}.
\]
This proves the claim.
\end{proof}

\subsection{Residual Covariance and Anisotropic Shape}

\begin{lemma}[Step-wise covariance avoids temporal drift mixing]
Let \(X_s=\mu_s+R_s\) for \(s=0,\ldots,\tau-1\), where \(\mathbb{E}[R_s]=0\) and \(\operatorname{Cov}(R_s)=\Sigma_s\). If \(S\) is uniformly sampled from \(\{0,\ldots,\tau-1\}\), then the covariance of the temporally pooled variable \(X_S\) satisfies
\[
\operatorname{Cov}(X_S)
=
\frac{1}{\tau}\sum_{s=0}^{\tau-1}\Sigma_s
+
\operatorname{Cov}(\mu_S).
\]
Thus, directly pooling samples across time includes an additional covariance term induced by the drift of the time-varying mean.
\end{lemma}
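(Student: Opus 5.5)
This is the law of total covariance, conditioning on the time index $S$. I will assume the pooling is done as a mixture: $S$ is drawn uniformly and independently of the noise, and, given $S=s$, the variable $X_S$ has the law of $X_s=\mu_s+R_s$. Here $\mu_s$ is deterministic, $\mathbb{E}[R_s]=0$ and $\operatorname{Cov}(R_s)=\Sigma_s$. From this, the conditional moments are
\[
\mathbb{E}[X_S\mid S=s]=\mu_s,\qquad \operatorname{Cov}(X_S\mid S=s)=\Sigma_s .
\]

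Then I apply the decomposition
\[
\operatorname{Cov}(X_S)=\mathbb{E}\big[\operatorname{Cov}(X_S\mid S)\big]+\operatorname{Cov}\big(\mathbb{E}[X_S\mid S]\big).
\]
Because $S$ is uniform on $\{0,\ldots,\tau-1\}$, the first term is $\frac{1}{\tau}\sum_{s}\Sigma_s$. The second term is $\operatorname{Cov}(\mu_S)$. Together these give the claimed identity.

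For a self-contained argument, I would instead expand directly. Set $\bar\mu=\frac1\tau\sum_s\mu_s$, which equals $\mathbb{E}[X_S]$. Then write
\[
X_S-\bar\mu=(\mu_S-\bar\mu)+R_S .
\]
Expanding the outer product gives three kinds of terms:
\begin{itemize}
\item the pure-noise term $\mathbb{E}[R_SR_S^\top]=\frac1\tau\sum_s\Sigma_s$;
\item the pure-mean term $\mathbb{E}[(\mu_S-\bar\mu)(\mu_S-\bar\mu)^\top]=\operatorname{Cov}(\mu_S)$;
\item the cross terms $\mathbb{E}[(\mu_S-\bar\mu)R_S^\top]$ and its transpose.
\end{itemize}

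The cross terms vanish: conditioning on $S=s$ gives $(\mu_s-\bar\mu)\,\mathbb{E}[R_s]^\top=0$, and averaging over $s$ keeps them at zero.

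The only delicate point is making the mixture model explicit. The cross-term cancellation needs $\mathbb{E}[R_S\mid S=s]=0$, which holds precisely because $\mu_s$ is deterministic and $R_s$ is zero-mean at each fixed $s$. I will state this as the interpretation of ``uniformly sampled''. The interpretive remark then follows at once. Since $\operatorname{Cov}(\mu_S)\succeq0$, pooling across time inflates the estimate by a PSD term, and this term is nonzero exactly when the means $\mu_s$ drift across the window, as the POB $\gamma(t)$ does. This is why the paper averages step-wise covariances instead.
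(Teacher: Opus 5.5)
Your proposal is correct and follows the paper's proof: both apply the law of total covariance conditioned on the time index $S$, using $\operatorname{Cov}(X_S\mid S=s)=\Sigma_s$ and $\mathbb{E}[X_S\mid S=s]=\mu_s$. Your direct expansion does not change the argument. It does usefully make explicit what the paper leaves implicit: $\mu_s$ is deterministic and $S$ is drawn independently of the residuals.
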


\begin{proof}
By the law of total covariance,
\[
\operatorname{Cov}(X_S)
=
\mathbb{E}_S[\operatorname{Cov}(X_S\mid S)]
+
\operatorname{Cov}_S(\mathbb{E}[X_S\mid S]).
\]
Since \(\operatorname{Cov}(X_S\mid S=s)=\Sigma_s\) and \(\mathbb{E}[X_S\mid S=s]=\mu_s\), the identity follows.
\end{proof}

\begin{proposition}[Residual anisotropy is a covariance-shape property]
Let \(R\) be a zero-mean residual supported on a subspace \(W\) with covariance \(\Sigma_W\). If \(\Sigma_W=\sigma^2P_W\) for some \(\sigma^2\ge 0\), then \(R\) is isotropic in \(W\). Conversely, if the spectrum of \(\Sigma_W\) is non-uniform, then the residual has an anisotropic ellipsoidal shape in \(W\). In particular, any spectral stretching measure \(\kappa_{\mathrm{eff}}\) that equals \(1\) on trace-matched isotropic covariances and exceeds \(1\) on non-uniform spectra certifies anisotropy when \(\kappa_{\mathrm{eff}}(\Sigma_W)>1\).
\end{proposition}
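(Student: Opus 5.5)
The plan is to argue directly from the spectral decomposition of \(\Sigma_W\) restricted to \(W\). Let \(k=\dim W\). Since \(R\) is zero-mean and supported on \(W\), we have \(P_W R=R\) almost surely. Hence \(\Sigma_W=\mathbb{E}[RR^\top]=P_W\Sigma_WP_W\), and \(\Sigma_W\) is a symmetric positive semidefinite operator that vanishes on \(W^\perp\) and maps \(W\) into itself. I will diagonalize it on \(W\) as \(\Sigma_W=\sum_{i=1}^{k}\lambda_i w_iw_i^\top\), with \(\{w_i\}\) an orthonormal basis of \(W\) and \(\lambda_1\ge\cdots\ge\lambda_k\ge0\). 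All three claims are then statements about the list \((\lambda_1,\ldots,\lambda_k)\).

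For the first claim, suppose \(\Sigma_W=\sigma^2P_W\). Then for every unit vector \(u\in W\) the directional variance is \(\mathbb{E}[\langle u,R\rangle^2]=u^\top\Sigma_Wu=\sigma^2\), independent of \(u\). Equivalently, for any orthogonal map \(O\) of \(W\), the covariance of \(OR\) is \(O\Sigma_WO^\top=\sigma^2P_W\). At the level of second moments, this direction-independence is exactly isotropy in \(W\). I will state explicitly that isotropy here is meant in the second-moment sense, which is the only sense used in Sec.~\ref{subsec:res}.

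For the converse, suppose the spectrum is non-uniform, so \(\lambda_1>\lambda_k\). Then \(w_1^\top\Sigma_Ww_1=\lambda_1\neq\lambda_k=w_k^\top\Sigma_Ww_k\), so the directional variance depends on direction. The second-moment ellipsoid \(\{v\in W: v^\top\Sigma_W^{+}v\le1\}\) (or its degenerate version when some \(\lambda_i=0\)) has semi-axes \(\sqrt{\lambda_i}\) along \(w_i\). These are not all equal, so the ellipsoid is not a ball. I will also note the contrapositive: a uniform spectrum \(\lambda_i\equiv\sigma^2\) forces \(\Sigma_W=\sigma^2\sum_i w_iw_i^\top=\sigma^2P_W\). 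So isotropy and a uniform spectrum are equivalent, and the dichotomy in the statement is exhaustive.

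The final claim follows from the hypothesis on \(\kappa_{\mathrm{eff}}\). That hypothesis says \(\kappa_{\mathrm{eff}}=1\) on trace-matched isotropic covariances \(\sigma^2P_W\) and \(\kappa_{\mathrm{eff}}>1\) on non-uniform spectra. If \(\kappa_{\mathrm{eff}}(\Sigma_W)>1\), then \(\Sigma_W\) cannot be of the form \(\sigma^2P_W\), since that would give the value \(1\). By the equivalence above, its spectrum is therefore non-uniform, and \(R\) is anisotropic.

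The main obstacle is definitional rather than technical. I need to pin down what ``isotropic in \(W\)'' and ``spectrum of \(\Sigma_W\)'' mean: the spectrum must be taken on \(W\), not on \(\mathbb{R}^d\), because on \(\mathbb{R}^d\) the zero eigenvalues from \(W^\perp\) would make every spectrum look non-uniform. I must also make sure the \(\kappa_{\mathrm{eff}}\) clause is used only through its stated axioms. I would first fix these conventions, for instance by noting that \(\Sigma_W\) commutes with \(P_W\), and then the remaining steps are routine linear algebra.
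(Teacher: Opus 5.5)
Your proposal is correct and follows essentially the same route as the paper's proof: under \(\Sigma_W=\sigma^2P_W\) the directional variance \(u^\top\Sigma_W u\) is the same for every unit \(u\in W\), while a non-uniform spectrum gives two eigen-directions in \(W\) with different variances, i.e.\ an elongated covariance ellipsoid. Your additions make explicit what the paper's proof leaves implicit: you take the spectrum on \(W\) rather than on \(\mathbb{R}^d\), which is needed for the dichotomy to make sense when \(W\) is a proper subspace; you show that a uniform spectrum forces \(\Sigma_W=\sigma^2P_W\); and you obtain the \(\kappa_{\mathrm{eff}}\) certification as the contrapositive of its first axiom.
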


\begin{proof}
For a zero-mean random vector, the covariance determines the second-order shape of its distribution. If \(\Sigma_W=\sigma^2P_W\), then all directions in \(W\) have the same variance:
\[
\operatorname{Var}(\langle u,R\rangle)=u^\top\Sigma_Wu=\sigma^2
\]
for every unit vector \(u\in W\). Hence the residual is second-order isotropic in \(W\). If the eigenvalues of \(\Sigma_W\) are not all equal, then there exist two unit eigen-directions \(u,v\in W\) with different variances, so the residual energy is direction-dependent. The covariance ellipsoid is therefore elongated along higher-variance directions, which is exactly second-order anisotropy.
\end{proof}

\begin{lemma}[Mean direction and residual principal direction are distinct objects]
Let \(\gamma\in V\) be the mean bias direction and let \(v_1\) be the first principal direction of a residual covariance \(\Sigma_V\). If \(|\langle \gamma/\|\gamma\|,v_1\rangle|\) is small, then the first-order mean displacement and the leading second-order residual spread are geometrically decoupled in \(V\).
\end{lemma}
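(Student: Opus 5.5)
The statement is essentially definitional, so the plan is to make ``geometrically decoupled'' precise and then verify it with a short computation. I would formalize decoupling through two quantities. The first is the variance of the residual along the mean direction. The second is the share of the mean bias that lies along the leading residual axis.

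Set \(u:=\gamma/\|\gamma\|\) and \(c:=|\langle u,v_1\rangle|\). Write the eigendecomposition \(\Sigma_V=\sum_k\lambda_k v_kv_k^\top\) with \(\lambda_1\ge\lambda_2\ge\cdots\ge 0\), restricted to \(V\).

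\textbf{Step 1 (mean bias along the leading axis).} Decompose \(\gamma=\langle\gamma,v_1\rangle v_1+\gamma_\perp\). Then \(\|P_{v_1}\gamma\|=c\|\gamma\|\), so the fraction of the first-order displacement lying along the leading residual axis is \(c^2\). For small \(c\), at least \(1-c^2\) of the bias energy lies in \(v_1^\perp\cap V\).

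\textbf{Step 2 (residual variance along the mean direction).} Compute
\[
\operatorname{Var}(\langle u,\zeta\rangle)=u^\top\Sigma_V u=\sum_k\lambda_k\langle u,v_k\rangle^2\le \lambda_1c^2+\lambda_2(1-c^2).
\]
This follows from \(\sum_k\langle u,v_k\rangle^2=1\), which holds because \(u\in V\). By the preceding proposition, a nonuniform spectrum means \(\lambda_2<\lambda_1\). So when \(c\) is small, the variance seen along the bias direction is strictly below the maximal spread \(\lambda_1\). The gap is at least \((\lambda_1-\lambda_2)(1-c^2)\).

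\textbf{Step 3 (contrast with the coupled case).} The alternative hypothesis is that \(\gamma\) is merely the leading direction of residual spread. That would mean \(u=\pm v_1\), so \(c=1\) and the variance along \(u\) equals \(\lambda_1\). Small \(c\) therefore quantitatively rules out this hypothesis, and that is the claimed decoupling.

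The main obstacle is that ``small'' and ``decoupled'' are not defined in the statement. I would therefore state the conclusion as the explicit bounds from Steps 1 and 2, parameterized by \(c\).

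A secondary subtlety arises if \(\lambda_1=\lambda_2\), since then \(v_1\) is not unique. In that case I would replace \(v_1\) by the whole top eigenspace and replace \(c\) by the norm of \(u\)'s projection onto that eigenspace. The same bounds then go through.
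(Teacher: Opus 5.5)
Your argument is correct, and it is more quantitative than the paper's. The paper's proof is purely interpretive. It identifies \(\gamma\) as the first moment, the centroid displacement in \(V\), and \(v_1\) as the maximal-variance eigenvector of \(\Sigma_V\). It then concludes that if the two are nearly orthogonal they do not coincide, so the first- and second-order structures are ``geometrically distinct''; no quantity is ever bounded.

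You instead make ``decoupled'' precise through two computable quantities:
\begin{itemize}
\item the fraction \(c^2\) of bias energy along \(v_1\);
\item the residual variance along the bias direction, \(u^\top\Sigma_V u\le\lambda_1c^2+\lambda_2(1-c^2)\), which is at least \((\lambda_1-\lambda_2)(1-c^2)\) below the maximal spread \(\lambda_1\).
\end{itemize}
This shows what the hypothesis actually buys. Small \(c\) by itself controls only the first quantity, which merely restates the hypothesis. Any claim about residual energy along \(\gamma\) additionally needs a spectral gap. The paper's version needs nothing beyond the definitions, but it is essentially a tautology.

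One correction. A nonuniform spectrum, which is the anisotropy in the preceding proposition, gives only \(\lambda_1>\lambda_{\min}\), not \(\lambda_1>\lambda_2\); the spectrum \((3,3,1)\) is a counterexample. In the tied case, your top-eigenspace fix does not follow from the lemma's hypothesis; it strengthens that hypothesis. With \(\lambda_1=\lambda_2\) you can take \(u=v_2\). Then \(\langle u,v_1\rangle=0\) while \(u^\top\Sigma_V u=\lambda_1\), so the bias direction is itself a maximal-spread direction, and Step 2's conclusion fails under the hypothesis as stated. You should therefore do one of the following:
\begin{itemize}
\item assume the gap \(\lambda_1>\lambda_2\) explicitly;
\item phrase the hypothesis from the start as smallness of \(u\)'s projection onto the whole top eigenspace.
\end{itemize}
Correspondingly, the ``coupled case'' in Step 3 should read ``\(u\) lies in the top eigenspace'' rather than \(u=\pm v_1\).
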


\begin{proof}
The vector \(\gamma\) describes the first moment, i.e., the centroid displacement in \(V\). The vector \(v_1\) is an eigenvector of \(\Sigma_V\) associated with its largest eigenvalue, and therefore describes the direction of maximum residual variance. If these directions are nearly orthogonal, then the direction of the mean displacement does not coincide with the direction of largest residual energy. Hence the first-order and second-order structures are geometrically distinct.
\end{proof}

\subsection{Phantom Drift Induced by Spherical Normalization}

\begin{definition}[Spherical projection and spherical centroid]
For \(z\in\mathbb{R}^d\setminus\{0\}\), define the spherical projection \(\pi(z):=z/\|z\|\). For a random vector \(Z\), define its spherical centroid as \(\mathbb{E}[\pi(Z)]\), whenever the expectation exists.
\end{definition}

\begin{theorem}[Euclidean mean alignment does not imply spherical centroid alignment]
Let \(Z=\mu+\epsilon\), where \(\mu\neq 0\), \(\mathbb{E}[\epsilon]=0\), and \(\operatorname{Cov}(\epsilon)=\Sigma\). Even though \(\mathbb{E}[Z]=\mu\), in general
\[
\mathbb{E}[\pi(Z)]\neq \pi(\mu).
\]
Therefore, Euclidean mean alignment does not guarantee centroid alignment after spherical normalization.
\end{theorem}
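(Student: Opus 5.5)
The claim is a ``not in general'' statement. The cleanest route is therefore to exhibit one explicit admissible pair \((\mu,\epsilon)\) for which \(\mathbb{E}[\pi(Z)]\neq\pi(\mu)\). I would then add a second-order expansion showing that the discrepancy is generic, and that it depends on \(\Sigma\) as described in Sec.~\ref{subsec:phantom_drift}.

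\textbf{Step 1: a counterexample.} Take \(d=2\), \(\mu=(1,0)\), and \(\epsilon=(0,\pm a)\) with probability \(1/2\) each, for some \(a>0\). Then \(\mathbb{E}[\epsilon]=0\) and \(\Sigma=\mathrm{diag}(0,a^2)\). Both realizations of \(Z\) are nonzero, so \(\pi(Z)\) is well defined. We have \(\pi(Z)=(1,\pm a)/\sqrt{1+a^2}\), hence
\[
\mathbb{E}[\pi(Z)]=\Bigl(\tfrac{1}{\sqrt{1+a^2}},0\Bigr)\neq(1,0)=\pi(\mu).
\]
This already proves the theorem. It also shows the geometric mechanism: the sign-symmetric spread orthogonal to \(\mu\) cancels in the tangential direction but shrinks the radial component. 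The resulting spherical centroid has norm strictly below \(1\), so it cannot equal the unit vector \(\pi(\mu)\).

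\textbf{Step 2: a general strict inequality.} To show the phenomenon is not an artifact of the example, I would observe the following. Whenever \(\pi(Z)\) is not almost surely constant, strict convexity of the Euclidean norm together with Jensen's inequality gives \(\|\mathbb{E}[\pi(Z)]\|<1=\|\pi(\mu)\|\). If \(\pi(Z)\) is almost surely constant, then \(Z=\|Z\|\,u\) for a fixed unit \(u\), and \(\mathbb{E}[Z]=\mu\) forces \(u=\pi(\mu)\). So \(\mathbb{E}[\pi(Z)]=\pi(\mu)\) holds only when \(\epsilon\) is almost surely parallel to \(\mu\) and keeps \(Z\) on the ray of \(\mu\). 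This degenerate case is the precise meaning of ``in general'' in the statement.

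\textbf{Step 3: a small-noise expansion.} For interpretability, I would Taylor expand \(\pi\) at \(\mu\) to second order. Write \(\hat\mu=\mu/\|\mu\|\) and \(P^\perp=I-\hat\mu\hat\mu^\top\). The first-order term \(\|\mu\|^{-1}P^\perp\epsilon\) vanishes in expectation because \(\mathbb{E}[\epsilon]=0\). The quadratic term then gives
\[
\Delta^\pi\approx-\frac{1}{2\|\mu\|^2}\Bigl(\operatorname{tr}(P^\perp\Sigma)\,\hat\mu+2P^\perp\Sigma\hat\mu\Bigr).
\]
This displays the coupling of the anchor direction with the residual covariance claimed in Conclusion 3.

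\textbf{Main obstacle.} The expansion is the step that needs care. One must track the Hessian of \(z\mapsto z/\|z\|\) correctly. One must also either assume bounded or sufficiently light-tailed \(\epsilon\) with \(\|\epsilon\|\) small relative to \(\|\mu\|\) to control the remainder, or present the expansion only as a heuristic supplement. Since Steps 1–2 already give a rigorous proof, I would keep Step 3 as an illustrative corollary rather than a load-bearing part of the argument.
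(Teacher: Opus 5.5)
Your Step 1 is exactly the paper's proof: the paper uses the same counterexample $d=2$, $\mu=(1,0)^\top$, $\epsilon=(0,\pm a)^\top$ with probability $1/2$ each, and gets $\mathbb{E}[\pi(\mu+\epsilon)]=(1/\sqrt{1+a^2},0)^\top\neq\pi(\mu)$, so the proposal is correct and essentially the same argument. Your Step 3 formula expands to $-\|\mu\|^{-2}\Sigma\hat\mu-\tfrac{1}{2}\|\mu\|^{-2}\operatorname{tr}(\Sigma)\hat\mu+\tfrac{3}{2}\|\mu\|^{-2}(\hat\mu^\top\Sigma\hat\mu)\hat\mu$, which matches the paper's separate second-order Phantom Drift theorem term by term. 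Step 2 is a valid strengthening, but the Euclidean norm itself is not strictly convex (only its square is), so justify it via $\mathbb{E}\|X-\mathbb{E}X\|^2=\mathbb{E}\|X\|^2-\|\mathbb{E}X\|^2$ with $X=\pi(Z)$.
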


\begin{proof}
The map \(\pi(z)=z/\|z\|\) is nonlinear. Therefore, expectation and projection do not commute in general:
\[
\mathbb{E}[\pi(Z)]\ne \pi(\mathbb{E}[Z]).
\]
Since \(\mathbb{E}[Z]=\mu\), the right-hand side is \(\pi(\mu)\). A concrete example suffices. Let \(d=2\), \(\mu=(1,0)^\top\), and let \(\epsilon\) take values \((0,a)^\top\) and \((0,-a)^\top\) with probability \(1/2\), where \(a>0\). Then \(\mathbb{E}[\epsilon]=0\), but
\[
\mathbb{E}[\pi(\mu+\epsilon)]
=
\left(\frac{1}{\sqrt{1+a^2}},0\right)^\top
\ne
(1,0)^\top
=
\pi(\mu).
\]
Thus spherical centroid alignment is not guaranteed by Euclidean mean alignment.
\end{proof}

\begin{theorem}[Second-order expansion of Phantom Drift]
Let \(Z=\mu+\epsilon\), where \(\mu\ne 0\), \(\mathbb{E}[\epsilon]=0\), and \(\operatorname{Cov}(\epsilon)=\Sigma\). Let \(r:=\|\mu\|\) and \(a:=\mu/r\). For sufficiently small residual magnitude, the normalization-induced centroid drift
\[
\Delta^\pi
:=
\mathbb{E}[\pi(\mu+\epsilon)]-\pi(\mu)
\]
admits the second-order approximation
\[
\Delta^\pi
=
-\frac{1}{r^2}\Sigma a
-\frac{1}{2r^2}a\,\operatorname{tr}(\Sigma)
+\frac{3}{2r^2}a\,(a^\top\Sigma a)
+
O(\mathbb{E}\|\epsilon\|^3/r^3).
\]
In particular, the tangential component of the drift is
\[
(I-aa^\top)\Delta^\pi
=
-\frac{1}{r^2}(I-aa^\top)\Sigma a
+
O(\mathbb{E}\|\epsilon\|^3/r^3).
\]
Thus anisotropic residual covariance can shift the spherical centroid away from the anchor direction.
\end{theorem}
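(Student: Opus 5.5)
Taylor-expand the map \(\pi(z)=z/\|z\|\) around \(\mu\) to second order and take expectations term by term. The fact \(\mathbb{E}[\epsilon]=0\) kills the linear term. The covariance \(\Sigma\) then enters only through the quadratic term.

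\textbf{Step 1: rescale.} Write \(\mu+\epsilon=r(a+u)\) with \(u:=\epsilon/r\), so that \(\pi(\mu+\epsilon)=\pi(a+u)\) and \(\|a\|=1\). This reduces everything to the unit anchor \(a\), with small parameter \(\|u\|=\|\epsilon\|/r\).

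\textbf{Step 2: expand the norm.} Expand \(\|a+u\|^{-1}=(1+2a^\top u+\|u\|^2)^{-1/2}\) using \((1+x)^{-1/2}=1-\tfrac12x+\tfrac38x^2+O(x^3)\) with \(x=2a^\top u+\|u\|^2\). Keeping terms up to second order in \(u\) gives
\[
\|a+u\|^{-1}=1-a^\top u-\tfrac12\|u\|^2+\tfrac32(a^\top u)^2+O(\|u\|^3).
\]

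\textbf{Step 3: expand the projection.} Multiply by \(a+u\) and again keep terms up to second order:
\[
\pi(a+u)=a+\bigl(u-a\,a^\top u\bigr)+\Bigl(-u\,a^\top u-\tfrac12 a\|u\|^2+\tfrac32 a(a^\top u)^2\Bigr)+O(\|u\|^3).
\]

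\textbf{Step 4: take expectations.} The first-order bracket vanishes because \(\mathbb{E}[u]=0\). For the second-order terms we use three identities:
\[
\mathbb{E}[u\,u^\top]a=\Sigma a/r^2,\qquad \mathbb{E}\|u\|^2=\operatorname{tr}(\Sigma)/r^2,\qquad \mathbb{E}(a^\top u)^2=a^\top\Sigma a/r^2 .
\]
Substituting them gives exactly the stated three terms of \(\Delta^\pi\).

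\textbf{Step 5: tangential part.} Apply \(I-aa^\top\) to the result. It annihilates both terms proportional to \(a\). It leaves \(-(I-aa^\top)\Sigma a/r^2\), which is the tangential formula. This term is nonzero precisely when \(a\) is not an eigenvector of \(\Sigma\), so it vanishes for isotropic \(\Sigma=\sigma^2I\) and is nonzero only in anisotropic cases. That justifies the closing interpretive sentence of the statement.

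\textbf{Main obstacle: controlling the remainder.} The pointwise remainder is \(O(\|u\|^3)\) only on the event \(\|u\|\le c<1\). Away from that event the map \(\pi\) is bounded by \(1\) but is not analytic, and it is singular at \(u=-a\). I would first try to make ``sufficiently small residual magnitude'' mean an almost-sure bound \(\|\epsilon\|\le c\,r\) with \(c<1\). On the ball \(\|u\|\le c\), the map \(\pi(a+\cdot)\) is smooth with third derivatives bounded by a constant \(C_c\). Taylor's theorem with integral remainder then gives \(|R(u)|\le C_c\|u\|^3\), and taking expectations yields \(O(\mathbb{E}\|\epsilon\|^3/r^3)\).

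If only moment assumptions are available, I would instead split on the event \(\{\|u\|\le 1/2\}\). On its complement, both \(\pi\) and the quadratic polynomial are bounded by a constant times \(1+\|u\|^2\). Combined with \(\mathbf 1\{\|u\|>1/2\}\le 8\|u\|^3\), this gives a contribution of \(O(\mathbb{E}\|u\|^3)\) up to a constant. That preserves the stated error order under only a third-moment assumption.
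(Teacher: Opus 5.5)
Your proposal is correct and follows essentially the paper's route: a second-order Taylor expansion of \(z\mapsto z/\|z\|\) at \(\mu\), with the linear term killed by \(\mathbb{E}[\epsilon]=0\) and the identities \(\mathbb{E}[\epsilon\epsilon^\top]\mu=\Sigma\mu\), \(\mathbb{E}\|\epsilon\|^2=\operatorname{tr}(\Sigma)\), \(\mathbb{E}[(\mu^\top\epsilon)^2]=\mu^\top\Sigma\mu\) producing the three stated terms; the only difference is that you get the quadratic term from the binomial series for \(\|a+u\|^{-1}\) rather than by writing out \(D^2f_\mu[h,h]\), and the coefficients agree. Your remainder argument, via a bounded third derivative on \(\|u\|\le c<1\) or the split at \(\|u\|=1/2\) using \(\mathbf{1}\{\|u\|>1/2\}\le 8\|u\|^3\), goes beyond the paper, whose proof simply asserts the \(O(\mathbb{E}\|\epsilon\|^3/r^3)\) error without justification.
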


\begin{proof}
Let \(f(z)=z/\|z\|\). The first derivative at \(\mu\) is
\[
Df_\mu[h]
=
\frac{1}{r}(I-aa^\top)h.
\]
Since \(\mathbb{E}[\epsilon]=0\), the first-order term vanishes after taking expectation. The second derivative satisfies
\[
D^2f_\mu[h,h]
=
-\frac{2}{r^3}h(\mu^\top h)
-\frac{1}{r^3}\mu\|h\|^2
+\frac{3}{r^5}\mu(\mu^\top h)^2.
\]
Applying Taylor expansion around \(\mu\) and taking expectation gives
\[
\mathbb{E}[f(\mu+\epsilon)]
=
f(\mu)
+
\frac{1}{2}\mathbb{E}[D^2f_\mu[\epsilon,\epsilon]]
+
O(\mathbb{E}\|\epsilon\|^3/r^3).
\]
Using \(\mathbb{E}[\epsilon(\mu^\top\epsilon)]=\Sigma\mu\), \(\mathbb{E}\|\epsilon\|^2=\operatorname{tr}(\Sigma)\), and \(\mathbb{E}[(\mu^\top\epsilon)^2]=\mu^\top\Sigma\mu\), we obtain
\[
\Delta^\pi
=
-\frac{1}{r^3}\Sigma\mu
-\frac{1}{2r^3}\mu\,\operatorname{tr}(\Sigma)
+
\frac{3}{2r^5}\mu(\mu^\top\Sigma\mu)
+
O(\mathbb{E}\|\epsilon\|^3/r^3).
\]
Substituting \(\mu=ra\) yields the claimed expression. Finally, projecting onto the tangent space \(a^\perp\) removes the terms parallel to \(a\), leaving
\[
(I-aa^\top)\Delta^\pi
=
-\frac{1}{r^2}(I-aa^\top)\Sigma a
+
O(\mathbb{E}\|\epsilon\|^3/r^3).
\]
This proves the result.
\end{proof}

\begin{corollary}[Role of anisotropy]
If \(\Sigma=\sigma^2I\), then the leading-order tangential drift vanishes. If \((I-aa^\top)\Sigma a \neq 0\), then the leading-order tangential drift is nonzero. Therefore, anisotropy that is not aligned with the anchor direction induces angular centroid drift on the unit hypersphere.
\end{corollary}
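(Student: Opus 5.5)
The corollary follows by reading off the tangential formula in the preceding theorem (second-order expansion of Phantom Drift). There the leading-order tangential component of \(\Delta^\pi\) is
\[
(I-aa^\top)\Delta^\pi = -\frac{1}{r^2}(I-aa^\top)\Sigma a + O(\mathbb{E}\|\epsilon\|^3/r^3),
\]
with \(a=\mu/\|\mu\|\) a unit vector and \(r=\|\mu\|\). Everything therefore reduces to evaluating the single vector \(w:=(I-aa^\top)\Sigma a\) in the two cases.

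\textbf{Isotropic case.} Set \(\Sigma=\sigma^2 I\). Then \(\Sigma a=\sigma^2 a\). Since \(\|a\|=1\), we have \((I-aa^\top)a=a-a=0\), so \(w=0\) and the leading-order tangential drift vanishes. For completeness, I would also record what survives in the full expansion. The terms parallel to \(a\) become
\[
-\frac{\sigma^2}{r^2}a-\frac{d\sigma^2}{2r^2}a+\frac{3\sigma^2}{2r^2}a ,
\]
which is a purely radial shrinkage. This shows that isotropic noise only shortens the centroid and does not rotate it, to this order.

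\textbf{Non-isotropic case.} If \(w\neq 0\), then \(-w/r^2\) is a nonzero vector independent of the noise scale in direction but scaling with \(\Sigma\). The leading tangential term is therefore nonzero. I would then argue that it dominates the remainder once the residual is small. Concretely, replace \(\epsilon\) by \(t\epsilon\): the leading term scales as \(t^2\) while the remainder is \(O(t^3)\). Hence for small enough residual magnitude the tangential drift is nonzero, and its direction is approximately \(-w\).

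I would finish by noting that \(w=0\) holds exactly when \(a\) is an eigenvector of \(\Sigma\). So the precise condition for angular drift is anisotropy "not aligned with the anchor direction", which justifies the phrasing of the corollary.

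The only step that needs care is the dominance-of-remainder point. I would state it explicitly as a small-noise statement, since the corollary is only about leading order.
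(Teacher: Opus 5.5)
Your proposal is correct and follows the paper's proof. Both read off the tangential term $-\frac{1}{r^2}(I-aa^\top)\Sigma a$ from the preceding theorem, obtaining $\sigma^2(I-aa^\top)a=0$ in the isotropic case and a nonzero leading term whenever $(I-aa^\top)\Sigma a\neq 0$. Your extra remarks go beyond what the paper spells out and are all correct: the radial shrinkage $-\frac{(d-1)\sigma^2}{2r^2}a$, the scaling $\epsilon\mapsto t\epsilon$ showing the $t^2$ term dominates the $O(t^3)$ remainder, and the fact that $w=0$ exactly when $a$ is an eigenvector of $\Sigma$.
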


\begin{proof}
Substituting \(\Sigma=\sigma^2I\) into the tangential term gives \((I-aa^\top)\Sigma a=\sigma^2(I-aa^\top)a=0\). Conversely, if \((I-aa^\top)\Sigma a\ne 0\), then the leading-order tangential term in the previous theorem is nonzero.
\end{proof}

\subsection{Properties of the ReAlign Operator}

\begin{proposition}[Anchor Alignment matches the target mean]
Let \(\dot e_y=(e_y-\mu_y)+\mu_x\), where \(\mu_y=\mathbb{E}[e_y]\) and \(\mu_x=\mathbb{E}[e_x]\). Then \(\mathbb{E}[\dot e_y]=\mu_x\).
\end{proposition}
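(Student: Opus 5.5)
The claim reduces to linearity of expectation, so the plan is a one-line computation plus a remark on integrability. I will treat \(\mu_y=\mathbb{E}[e_y]\) and \(\mu_x=\mathbb{E}[e_x]\) as deterministic vectors in \(\mathbb{R}^d\). They are well defined because \(e_x\) and \(e_y\) are unit-normalized, hence bounded. Consequently every expectation below exists.

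First, I will write \(\dot e_y=e_y+(\mu_x-\mu_y)\), that is, the random vector \(e_y\) shifted by a constant vector. Second, I will apply linearity of expectation coordinatewise, together with \(\mathbb{E}[c]=c\) for a constant \(c\):
\[
\mathbb{E}[\dot e_y]=\mathbb{E}[e_y]-\mu_y+\mu_x=\mu_y-\mu_y+\mu_x=\mu_x .
\]
This is the same computation already carried out in the proof of the translation-correctability proposition above. I could therefore simply cite that proposition, or repeat the line for self-containment.

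As a side remark, I would also note that \(\dot e_y-\mu_x=e_y-\mu_y\). It follows that \(\operatorname{Cov}(\dot e_y)=\Sigma_y\), which supports the main text's assertion that Anchor Alignment leaves the second-order structure untouched. This is not required by the statement, however.

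There is no real obstacle. The only point needing care is that \(\mu_x\) and \(\mu_y\) are population (non-random) quantities. If empirical means estimated on the same sample were used instead, the identity would hold for the empirical mean over that sample rather than in population expectation. I would mention this distinction briefly.
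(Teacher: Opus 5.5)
Your proposal is correct and follows the paper's proof, which is the same one-line linearity-of-expectation computation $\mathbb{E}[\dot e_y]=\mathbb{E}[e_y]-\mu_y+\mu_x=\mu_x$. Your extra remarks on integrability (bounded unit vectors), on covariance invariance, and on population versus empirical means are harmless additions that the paper does not make.
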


\begin{proof}
By linearity of expectation,
\[
\mathbb{E}[\dot e_y]
=
\mathbb{E}[e_y]-\mu_y+\mu_x
=
\mu_x.
\]
\end{proof}

\begin{proposition}[Trace Alignment matches residual energy and preserves spectral shape]
Let \(\tilde e_y=\mu_x+s(e_y-\mu_y)\), where \(s=\sqrt{T_x/(T_y+\varepsilon)}\), \(T_y=\operatorname{tr}(\Sigma_y)\), and \(T_x=\operatorname{tr}(\Sigma_x)\). Then
\[
\operatorname{Cov}(\tilde e_y)=s^2\Sigma_y,
\qquad
\operatorname{tr}(\operatorname{Cov}(\tilde e_y))=s^2T_y\approx T_x.
\]
Moreover, scalar scaling preserves the eigenvectors and the trace-normalized spectrum of \(\Sigma_y\).
\end{proposition}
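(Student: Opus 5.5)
The plan is to treat \(\tilde e_y\) as an affine image of \(e_y\) and use that covariance is invariant under translation and quadratic under scalar multiplication. First, compute the mean. By linearity of expectation, \(\mathbb{E}[\tilde e_y]=\mu_x+s(\mathbb{E}[e_y]-\mu_y)=\mu_x\). This is the same computation as the Anchor Alignment proposition, so the centered variable is \(\tilde e_y-\mu_x=s(e_y-\mu_y)\).

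Second, compute the covariance directly:
\[
\operatorname{Cov}(\tilde e_y)=\mathbb{E}\!\left[s^2(e_y-\mu_y)(e_y-\mu_y)^\top\right]=s^2\Sigma_y .
\]
Taking traces and using linearity of the trace gives \(\operatorname{tr}(\operatorname{Cov}(\tilde e_y))=s^2T_y\). Substituting \(s^2=T_x/(T_y+\varepsilon)\) gives
\[
s^2T_y=T_x\cdot\frac{T_y}{T_y+\varepsilon}.
\]
I would make the ``\(\approx\)'' precise by noting that this ratio lies in \([1-\varepsilon/T_y,\,1]\) when \(T_y>0\). The relative error is therefore at most \(\varepsilon/(T_y+\varepsilon)\), and equality holds when \(\varepsilon=0\).

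Third, for spectral preservation, let \(\Sigma_y=Q\Lambda Q^\top\) be the eigendecomposition. Then \(s^2\Sigma_y=Q(s^2\Lambda)Q^\top\), so the eigenvectors are the same and the eigenvalues scale by \(s^2\). Assuming \(s>0\) (i.e.\ \(T_x>0\)), the trace-normalized spectrum is unchanged:
\[
\frac{\lambda(s^2\Sigma_y)}{\operatorname{tr}(s^2\Sigma_y)}=\frac{s^2\lambda(\Sigma_y)}{s^2T_y}=\frac{\lambda(\Sigma_y)}{T_y}.
\]
The descending order of the eigenvalues is also preserved, because scaling by \(s^2>0\) is monotone.

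The only delicate points are degenerate cases and bookkeeping. If \(T_y=0\), the source is a point mass, and the trace-normalized spectrum is undefined, so the statement is vacuous there. If \(T_x=0\), then \(s=0\) and the eigenvectors are no longer meaningfully preserved, so I would state the spectral claim under the assumption \(T_x,T_y>0\). The quantitative \(\varepsilon\)-error bound from the second step is the only part that needs care beyond direct calculation.
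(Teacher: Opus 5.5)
Your proposal is correct and follows the paper's proof essentially step for step: you write $\tilde e_y-\mu_x=s(e_y-\mu_y)$ to get $\operatorname{Cov}(\tilde e_y)=s^2\Sigma_y$, take traces, and use $s^2\Sigma_y=Q(s^2\Lambda)Q^\top$ so that $s^2$ cancels in the trace-normalized spectrum. Your explicit relative-error bound $\varepsilon/(T_y+\varepsilon)$ for the ``$\approx$'' and your nondegeneracy assumption $T_x,T_y>0$ are small refinements that the paper leaves implicit.
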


\begin{proof}
Since \(\tilde e_y-\mu_x=s(e_y-\mu_y)\), we have
\[
\operatorname{Cov}(\tilde e_y)
=
s^2\operatorname{Cov}(e_y-\mu_y)
=
s^2\Sigma_y.
\]
Taking traces gives \(\operatorname{tr}(\operatorname{Cov}(\tilde e_y))=s^2T_y\), which equals \(T_x\) up to the stabilizing constant \(\varepsilon\). If \(\Sigma_y=Q\Lambda Q^\top\), then \(s^2\Sigma_y=Q(s^2\Lambda)Q^\top\). Thus eigenvectors are unchanged, and
\[
\frac{\lambda(s^2\Sigma_y)}{\operatorname{tr}(s^2\Sigma_y)}
=
\frac{s^2\lambda(\Sigma_y)}{s^2\operatorname{tr}(\Sigma_y)}
=
\frac{\lambda(\Sigma_y)}{\operatorname{tr}(\Sigma_y)}.
\]
Therefore Trace Alignment changes only the global residual energy scale while preserving the trace-normalized spectral shape.
\end{proof}

\begin{proposition}[Centroid Alignment removes the measured spherical centroid shift before the final projection]
Let \(e'_y=\pi(\tilde e_y)\), \(\mu'_y=\mathbb{E}[e'_y]\), and \(e''_y=e'_y-\mu'_y+\mu_x\). Then \(\mathbb{E}[e''_y]=\mu_x\).
\end{proposition}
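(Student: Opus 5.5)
The identity follows from linearity of expectation, exactly as in the preceding Anchor Alignment proposition. The only preliminary point is that every expectation involved is finite. The plan has three steps.

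\textbf{Step 1: well-posedness.} Assume \(\tilde e_y\neq 0\) almost surely, so that the spherical projection \(\pi\) is defined. Then \(e'_y=\pi(\tilde e_y)\) lies on the unit sphere, so \(\|e'_y\|=1\). Hence \(e'_y\) is bounded and integrable, and \(\mu'_y=\mathbb{E}[e'_y]\) exists as a fixed vector in the closed unit ball. The vector \(\mu_x=\mathbb{E}[e_x]\) is also a fixed, deterministic vector, since \(e_x\) is unit-normalized and therefore bounded.

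\textbf{Step 2: linearity.} With both \(\mu'_y\) and \(\mu_x\) treated as constants, linearity of expectation gives
\[
\mathbb{E}[e''_y]=\mathbb{E}[e'_y]-\mu'_y+\mu_x=\mu'_y-\mu'_y+\mu_x=\mu_x .
\]

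\textbf{Step 3: interpretation.} It then suffices to remark on what the identity means. By the definition \(\Delta^\pi_y=\mu'_y-\mu_x\), the map \(e'_y\mapsto e''_y\) is a translation by \(-\Delta^\pi_y\). Therefore the centroid of \(e''_y\) sits exactly at the target anchor \(\mu_x\), before the final projection \(\hat e_y=e''_y/\|e''_y\|\). I will state explicitly that the proposition says nothing about \(\mathbb{E}[\hat e_y]\). The second normalization can reintroduce a small drift, as the Second-order Expansion of Phantom Drift theorem predicts, now governed by the covariance of \(e''_y\) around \(\mu_x\). This is consistent with the paper's remark that only a small residual centroid error remains.

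\textbf{Main obstacle.} There is no real difficulty here; the only subtle point is the nondegeneracy assumption \(\tilde e_y\neq 0\) almost surely. I would justify it in one line: \(\tilde e_y=\mu_x+s(e_y-\mu_y)\) vanishes only on the affine preimage of the single point \(0\), which has probability zero under any continuous source distribution. Alternatively, I would simply state it as a standing hypothesis.
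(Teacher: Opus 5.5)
Your proof is correct and takes the same approach as the paper, which is the one-line linearity-of-expectation computation \(\mathbb{E}[e''_y]=\mathbb{E}[e'_y]-\mu'_y+\mu_x=\mu_x\). Your well-posedness check is an extra the paper leaves implicit: you require \(\tilde e_y\neq 0\) almost surely and note that \(e'_y\) is bounded, hence integrable.
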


\begin{proof}
Again by linearity of expectation,
\[
\mathbb{E}[e''_y]
=
\mathbb{E}[e'_y]-\mu'_y+\mu_x
=
\mu_x.
\]
\end{proof}

\begin{proposition}[Residual centroid error after the final projection is controlled by the second translation]
Let \(e'_y\) be unit-normalized and let \(h:=\mu_x-\mu'_y\). Define \(e''_y=e'_y+h\) and \(\hat e_y=\pi(e''_y)\). If \(\|h\|<1\), then
\[
\|\mathbb{E}[\hat e_y]-\mu_x\|
\le
\|h\|.
\]
\end{proposition}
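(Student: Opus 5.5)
The plan is to reduce the claim to a pointwise bound on how much the final spherical projection moves a vector, using that the second translation already places the Euclidean mean exactly at the target. Write \(w:=e''_y=e'_y+h\). By the preceding Centroid Alignment proposition, \(\mathbb{E}[w]=\mu_x\). Hence the residual centroid error can be rewritten as
\[
\mathbb{E}[\hat e_y]-\mu_x=\mathbb{E}[\pi(w)]-\mathbb{E}[w]=\mathbb{E}\bigl[\pi(w)-w\bigr].
\]
The problem is thus to control the displacement \(\pi(w)-w\) caused by normalization, sample by sample.

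First I will check that \(\pi(w)\) is well defined. Since \(\|e'_y\|=1\), the reverse triangle inequality gives \(\|w\|\ge 1-\|h\|>0\), using the hypothesis \(\|h\|<1\). Next I compute the displacement explicitly. Because \(\pi(w)-w=w\bigl(\tfrac{1}{\|w\|}-1\bigr)\), its norm equals \(\bigl|1-\|w\|\bigr|\). The triangle inequality together with \(\|e'_y\|=1\) gives \(1-\|h\|\le\|w\|\le 1+\|h\|\). Therefore \(\|\pi(w)-w\|\le\|h\|\) holds almost surely.

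Finally I pass the norm inside the expectation using convexity of the norm (Jensen's inequality):
\[
\|\mathbb{E}[\pi(w)-w]\|\le\mathbb{E}\|\pi(w)-w\|\le\|h\|.
\]
Combining this with the identity in the first paragraph gives the claim.

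There is no serious obstacle. The step that needs care is recognizing that the error should be measured against \(\mathbb{E}[w]=\mu_x\) rather than by a direct comparison of \(\pi\) at different points. That requires using that \(h\) is deterministic, so the Centroid Alignment proposition applies. The hypothesis \(\|h\|<1\) is used only to guarantee \(w\neq 0\), so that the normalization is defined.
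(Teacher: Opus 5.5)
Your proposal is correct and follows the paper's proof essentially step for step: the pointwise bound \(\|\pi(w)-w\|=\bigl|1-\|w\|\bigr|\le\|h\|\) from the reverse triangle inequality, then moving the norm inside the expectation, then \(\mathbb{E}[e''_y]=\mu_x\) from the Centroid Alignment proposition. Your remark that \(\|h\|<1\) is needed only to guarantee \(w\neq 0\) is also accurate.
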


\begin{proof}
For any unit vector \(q\) and any \(h\) with \(\|h\|<1\), we have
\[
\|\pi(q+h)-(q+h)\|
=
\left|1-\|q+h\|\right|
\le
\|h\|,
\]
where the last inequality follows from the reverse triangle inequality. Taking \(q=e'_y\) and expectation gives
\[
\|\mathbb{E}[\hat e_y]-\mathbb{E}[e''_y]\|
\le
\mathbb{E}\|\hat e_y-e''_y\|
\le
\|h\|.
\]
Since \(\mathbb{E}[e''_y]=\mu_x\), the result follows.
\end{proof}

\subsection{Summary of Theoretical Implications}

The results above justify the three geometric levels used in the main text. First, the modality gap admits an orthogonal decomposition into first-order anchor displacement and zero-mean residual components. Second, after mean centering, the remaining residual shape is governed by its covariance spectrum, so anisotropy is a second-order property rather than a mean effect. Third, spherical normalization does not commute with expectation; it couples the anchor direction with the residual covariance and can create a secondary centroid shift. These facts motivate ReAlign: Anchor Alignment removes the Euclidean mean mismatch, Trace Alignment matches the global residual energy while preserving spectral shape, and Centroid Alignment corrects the measured spherical centroid drift.

\section{U--V Weak Coupling Analysis}
\label{sec:appendix_weak_coupling}

This appendix provides a diagnostic analysis of weak coupling between the dominant representation subspace \(U\) and its orthogonal complement \(V\). In Sec.~\ref{subsec:bias}, the reference leakage ratio \(\operatorname{leak}_{\mathrm{ref}}(t)\) is compared with the geometric baseline \(\sin\theta(U_t,U)\), where \(U_t\) is the instantaneous dominant subspace and \(U\) is the fixed reference subspace. If the embedding-level gradients were fully constrained to \(U_t\), then their projection onto the fixed complement \(V=U^\perp\) would be controlled by this principal-angle baseline. In practice, small deviations from this idealized geometric picture may appear. We model such deviations as weak second-order coupling between \(U\)-side and \(V\)-side residuals.

\subsection{Residual-Level Coupling Model}

Let \(\delta(t)\in U\) and \(\zeta(t)\in V\) denote the zero-mean residual components defined in Sec.~\ref{subsec:decomposition_framework}. To quantify whether fluctuations in \(U\) systematically explain fluctuations in \(V\), we introduce the following linear residual-level model:
\begin{equation}
    \zeta(t)=\mathbf{L}\delta(t)+\xi(t),
    \qquad
    \mathbf{L}:U\to V,
    \qquad
    \|\mathbf{L}\|_2\le \varepsilon_{UV}.
    \label{eq:uv_coupling_model}
\end{equation}
Here, \(\mathbf{L}\) is a weak coupling map from \(U\) to \(V\), and \(\xi(t)\in V\) is a zero-mean background residual that is uncorrelated with \(\delta(t)\). In reference-basis coordinates, \(\delta(t)\in\mathbb{R}^r\), \(\zeta(t)\in\mathbb{R}^{d-r}\), and \(\mathbf{L}\in\mathbb{R}^{(d-r)\times r}\).

\begin{lemma}[Second-moment identities under weak coupling]
\label{lem:uv_moment_identities}
Assume \(\mathbb{E}[\delta(t)]=0\), \(\mathbb{E}[\xi(t)]=0\), and \(\mathbb{E}[\xi(t)\delta(t)^\top]=0\). Let \(\Sigma_U:=\operatorname{Cov}(\delta(t))\) and \(\Sigma_\xi:=\operatorname{Cov}(\xi(t))\). Under Eq.~\eqref{eq:uv_coupling_model},
\begin{equation}
    \mathbb{E}[\zeta(t)\delta(t)^\top]=\mathbf{L}\Sigma_U,
    \qquad
    \operatorname{Cov}(\zeta(t))=\mathbf{L}\Sigma_U\mathbf{L}^\top+\Sigma_\xi.
    \label{eq:uv_moment_identities}
\end{equation}
Equivalently, \(\mathbb{E}[\delta(t)\zeta(t)^\top]=\Sigma_U\mathbf{L}^\top\).
\end{lemma}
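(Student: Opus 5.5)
The plan is to substitute the model of Eq.~\eqref{eq:uv_coupling_model} directly into each second moment and expand by bilinearity of expectation. The two hypotheses needed are that both $\delta(t)$ and $\xi(t)$ are zero-mean and that they are uncorrelated. Everything else follows from $\mathbf{L}$ being a deterministic linear map, so it can be pulled out of every expectation.

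\textbf{Cross-moment.} First note that $\mathbb{E}[\zeta(t)]=\mathbf{L}\,\mathbb{E}[\delta(t)]+\mathbb{E}[\xi(t)]=0$. This is consistent with the earlier orthogonal bias--residual decomposition lemma, and it means second moments and covariances coincide for every variable involved. Then
\[
\mathbb{E}[\zeta(t)\delta(t)^\top]=\mathbf{L}\,\mathbb{E}[\delta(t)\delta(t)^\top]+\mathbb{E}[\xi(t)\delta(t)^\top]=\mathbf{L}\Sigma_U ,
\]
where the first term uses $\mathbb{E}[\delta\delta^\top]=\operatorname{Cov}(\delta)=\Sigma_U$, since $\delta$ is zero-mean, and the second term vanishes by the uncorrelatedness assumption. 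Transposing gives the equivalent form $\mathbb{E}[\delta(t)\zeta(t)^\top]=\Sigma_U^\top\mathbf{L}^\top=\Sigma_U\mathbf{L}^\top$, using symmetry of $\Sigma_U$.

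\textbf{Covariance of $\zeta$.} Expand
\[
\mathbb{E}[\zeta\zeta^\top]=\mathbf{L}\Sigma_U\mathbf{L}^\top+\mathbf{L}\,\mathbb{E}[\delta\xi^\top]+\mathbb{E}[\xi\delta^\top]\mathbf{L}^\top+\mathbb{E}[\xi\xi^\top].
\]
The two middle terms vanish because $\mathbb{E}[\delta\xi^\top]=(\mathbb{E}[\xi\delta^\top])^\top=0$. The last term equals $\Sigma_\xi$ because $\xi$ is zero-mean. Since $\zeta$ is zero-mean, this second moment is $\operatorname{Cov}(\zeta(t))$, which gives the claimed identity.

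\textbf{Expected sticking points.} There is no real obstacle here. The steps that need care are verifying $\mathbb{E}[\zeta]=0$ so that the second moment can be identified with the covariance, and keeping the transpose orientation straight in the cross terms. It is also worth noting for well-definedness that all quantities should be read in reference-basis coordinates, with $\mathbf{L}\in\mathbb{R}^{(d-r)\times r}$.
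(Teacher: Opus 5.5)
Your proposal is correct and matches the paper's proof: substitute \(\zeta=\mathbf{L}\delta+\xi\), expand by bilinearity, and remove the cross terms using \(\mathbb{E}[\xi\delta^\top]=0\). You also check explicitly that \(\mathbb{E}[\zeta]=0\), which justifies identifying the second moment with \(\operatorname{Cov}(\zeta)\), and you use the symmetry of \(\Sigma_U\) to get the transposed form; the paper leaves both of these steps implicit.
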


\begin{proof}
Using \(\zeta=\mathbf{L}\delta+\xi\), we obtain
\[
\mathbb{E}[\zeta\delta^\top]
=
\mathbb{E}[(\mathbf{L}\delta+\xi)\delta^\top]
=
\mathbf{L}\mathbb{E}[\delta\delta^\top]
+
\mathbb{E}[\xi\delta^\top]
=
\mathbf{L}\Sigma_U.
\]
Similarly,
\[
\operatorname{Cov}(\zeta)
=
\mathbb{E}[(\mathbf{L}\delta+\xi)(\mathbf{L}\delta+\xi)^\top]
=
\mathbf{L}\Sigma_U\mathbf{L}^\top+\Sigma_\xi,
\]
where the cross terms vanish because \(\xi\) and \(\delta\) are uncorrelated. This proves the claim.
\end{proof}

\paragraph{Interpretation.}
The map \(\mathbf{L}\) does not replace the main \(U\oplus V\) decomposition. Rather, it measures whether the two residual components are statistically independent at the second-moment level. If \(\mathbf{L}=0\), then the \(V\)-side residual contains no linear component predictable from the \(U\)-side residual. If \(\|\mathbf{L}\|_2\) is small but nonzero, then the two residuals are weakly coupled.

\subsection{Estimating the Coupling Map}

Lemma~\ref{lem:uv_moment_identities} shows that \(\mathbf{L}\) is identifiable from second moments when \(\Sigma_U\) is nonsingular on its effective support. The population least-squares map is
\begin{equation}
    \mathbf{L}^\star
    =
    \mathbb{E}[\zeta(t)\delta(t)^\top]\Sigma_U^\dagger,
    \label{eq:uv_population_lstsq}
\end{equation}
where \(\Sigma_U^\dagger\) denotes the Moore--Penrose pseudoinverse.

In practice, we estimate \(\mathbf{L}\) in the reference bases of \(U\) and \(V\). Given probe residuals \(\{(\delta_i,\zeta_i)\}_{i=1}^n\), let \(\mathbf{D}:=[\delta_1,\ldots,\delta_n]\in\mathbb{R}^{r\times n}\) and \(\mathbf{Z}:=[\zeta_1,\ldots,\zeta_n]\in\mathbb{R}^{(d-r)\times n}\). We use the ridge estimator
\begin{equation}
    \widehat{\mathbf{L}}_\lambda
    =
    \mathbf{Z}\mathbf{D}^\top
    \left(
    \mathbf{D}\mathbf{D}^\top+\lambda I_r
    \right)^{-1}.
    \label{eq:uv_ridge_estimator}
\end{equation}
We report the spectral norm \(\|\widehat{\mathbf{L}}_\lambda\|_2\) as the coupling strength and the explained variance
\begin{equation}
    R^2_{\zeta\leftarrow\delta}
    =
    1-
    \frac{
    \|\mathbf{Z}-\widehat{\mathbf{L}}_\lambda\mathbf{D}\|_F^2
    }{
    \|\mathbf{Z}\|_F^2
    }.
    \label{eq:uv_explained_variance}
\end{equation}
A small \(R^2_{\zeta\leftarrow\delta}\) indicates that the \(V\)-side residual is largely independent of the \(U\)-side residual, while a nonzero value indicates weak predictable leakage from \(U\) to \(V\).

\subsection{Connection to Gradient Leakage}

The coupling model above is defined at the residual level. To connect it to the gradient leakage measured in Sec.~\ref{subsec:bias}, we state an explicit diagnostic assumption rather than treating it as a consequence of the residual model.

\paragraph{Gradient-level coupling assumption.}
For an embedding-level gradient \(g\), suppose its fixed-frame \(V\)-component admits the approximate decomposition
\begin{equation}
    P_Vg
    =
    P_VP_{U_t}g
    +
    \mathbf{L}P_Ug
    +
    r_g,
    \qquad
    \|r_g\|\le \rho\|g\|.
    \label{eq:uv_gradient_coupling}
\end{equation}
The first term is the geometric leakage caused by the rotation between \(U_t\) and \(U\); the second term is the weak coupling contribution; and \(r_g\) is an unexplained remainder. When \(\rho=0\), the gradient leakage is fully explained by subspace rotation and the coupling map.

\begin{lemma}[Geometric leakage baseline]
\label{lem:uv_geometric_baseline}
For any nonzero \(g\in U_t\),
\begin{equation}
    \frac{\|P_Vg\|}{\|g\|}
    \le
    \|P_VP_{U_t}\|_2
    =
    \sin\theta(U_t,U),
    \label{eq:uv_geometric_baseline}
\end{equation}
where \(\theta(U_t,U)\) is the largest principal angle between \(U_t\) and \(U\).
\end{lemma}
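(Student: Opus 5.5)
The plan has two parts: the inequality, which follows in one line from the projector identity $P_{U_t}g=g$, and the identity $\|P_VP_{U_t}\|_2=\sin\theta(U_t,U)$, which the earlier Geometric leakage bound in Appendix~\ref{app:proofs} only cited as standard and which I will prove directly. Since $g\in U_t$, we have $P_{U_t}g=g$, so
\[
\|P_Vg\|=\|P_VP_{U_t}g\|\le\|P_VP_{U_t}\|_2\,\|g\|,
\]
and dividing by $\|g\|>0$ gives the inequality.

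For the identity, I will work with orthonormal bases. Let $Q_t\in\mathbb{R}^{d\times r}$ and $Q\in\mathbb{R}^{d\times r}$ have orthonormal columns spanning $U_t$ and $U$, so that $P_{U_t}=Q_tQ_t^\top$ and $P_V=I-QQ^\top$. Because $Q_t^\top$ is a co-isometry onto $\mathbb{R}^r$, we get $\|P_VP_{U_t}\|_2=\|P_VQ_t\|_2$. For a unit $w\in\mathbb{R}^r$, Pythagoras gives
\[
\|P_VQ_tw\|^2=1-\|Q^\top Q_tw\|^2 .
\]
Maximizing over $w$ therefore yields $\|P_VQ_t\|_2^2=1-\sigma_{\min}(Q^\top Q_t)^2$.

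By the definition of principal angles, the singular values of $Q^\top Q_t$ are $\cos\theta_1\ge\dots\ge\cos\theta_r$, with $\theta_r=\theta(U_t,U)$ the largest angle. Hence $\sigma_{\min}(Q^\top Q_t)=\cos\theta(U_t,U)$, and so $\|P_VP_{U_t}\|_2=\sin\theta(U_t,U)$.

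The one point needing care is the equal-dimension hypothesis, $\dim U_t=\dim U=r$. It is what guarantees that there are exactly $r$ principal angles and that the minimal singular value, not a zero padded in from a dimension mismatch, realizes the largest angle. This holds here because $U_t$ is built by the same top-$r$ covariance construction as $U$. I expect this to be the main obstacle only in the sense of making sure the convention for $\theta$ matches. Nothing beyond the SVD characterization of principal angles is needed.
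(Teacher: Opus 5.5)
Your argument is correct and matches the paper's proof for the inequality: $P_{U_t}g=g$, then operator-norm submultiplicativity, then divide by $\|g\|$. The only difference is that the paper merely cites $\|P_VP_{U_t}\|_2=\sin\theta(U_t,U)$ as the standard principal-angle relation, while your reduction to $\|P_VQ_t\|_2^2=1-\sigma_{\min}(Q^\top Q_t)^2$ actually proves it. Your equal-dimension caveat is genuinely needed: if $\dim U_t>\dim U$ then $U_t\cap V\neq\{0\}$, so the norm is $1$ even when $U\subset U_t$ makes all principal angles zero. It is better taken as an explicit hypothesis, as in the paper's earlier statement of this bound with both subspaces $r$-dimensional, than inferred from the construction, since the paper chooses $r$ by an energy threshold.
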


\begin{proof}
Since \(g\in U_t\), \(P_{U_t}g=g\). Therefore,
\[
\|P_Vg\|
=
\|P_VP_{U_t}g\|
\le
\|P_VP_{U_t}\|_2\|g\|.
\]
Dividing by \(\|g\|\) gives the inequality. The identity \(\|P_VP_{U_t}\|_2=\sin\theta(U_t,U)\) follows from the standard relation between projection residuals and principal angles.
\end{proof}

\begin{theorem}[Gradient leakage under weak \(U\)--\(V\) coupling]
\label{thm:uv_leakage_bound}
Under the gradient-level coupling assumption in Eq.~\eqref{eq:uv_gradient_coupling}, the leakage ratio satisfies
\begin{equation}
    \frac{\|P_Vg\|}{\|g\|}
    \le
    \sin\theta(U_t,U)
    +
    \|\mathbf{L}\|_2
    +
    \rho.
    \label{eq:uv_total_leakage_bound}
\end{equation}
In the idealized case \(\rho=0\), the leakage is bounded by the sum of the geometric subspace-rotation baseline and the weak coupling strength.
\end{theorem}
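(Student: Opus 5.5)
The plan is to bound the three terms of the decomposition in Eq.~\eqref{eq:uv_gradient_coupling} separately and combine them with the triangle inequality. Fix a nonzero embedding-level gradient \(g\). Taking norms gives
\[
\|P_Vg\|
\le
\|P_VP_{U_t}g\|
+
\|\mathbf{L}P_Ug\|
+
\|r_g\|.
\]
Dividing by \(\|g\|\), it then suffices to show that the three ratios are bounded by \(\sin\theta(U_t,U)\), \(\|\mathbf{L}\|_2\), and \(\rho\), respectively.

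\emph{First term (geometric rotation).} I would not apply Lemma~\ref{lem:uv_geometric_baseline} verbatim, because it assumes \(g\in U_t\) and here \(g\) is arbitrary. Its proof, however, gives the bound directly for any \(g\):
\[
\|P_VP_{U_t}g\|\le\|P_VP_{U_t}\|_2\|g\|.
\]
Combining this with the identity \(\|P_VP_{U_t}\|_2=\sin\theta(U_t,U)\) already invoked in Lemma~\ref{lem:uv_geometric_baseline} gives \(\|P_VP_{U_t}g\|/\|g\|\le\sin\theta(U_t,U)\). That identity holds because \(U_t\) and \(U\) have the same dimension \(r\), which is the standing assumption.

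\emph{Second term (coupling).} Here I use the operator-norm bound together with the fact that orthogonal projectors are nonexpansive, \(\|P_Ug\|\le\|g\|\). This gives
\[
\|\mathbf{L}P_Ug\|\le\|\mathbf{L}\|_2\|P_Ug\|\le\|\mathbf{L}\|_2\|g\|.
\]
If one wants the bound in terms of the model constant, \(\|\mathbf{L}\|_2\le\varepsilon_{UV}\) can then be substituted.

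\emph{Third term (remainder).} This is immediate from the assumption \(\|r_g\|\le\rho\|g\|\).

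Summing the three bounds yields Eq.~\eqref{eq:uv_total_leakage_bound}. Setting \(\rho=0\) gives the idealized statement.

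There is no real obstacle. The only point needing care is the one noted in the first-term step: the geometric term must be handled through the operator-norm identity for \(P_VP_{U_t}\), not through the lemma's restricted hypothesis \(g\in U_t\). That identity rests on the equal-dimension assumption, which I would state explicitly.
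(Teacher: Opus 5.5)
Your proof is correct and follows the paper's argument: the triangle inequality on the three terms of Eq.~\eqref{eq:uv_gradient_coupling}, followed by the bounds $\sin\theta(U_t,U)$, $\|\mathbf{L}\|_2$ (via $\|P_Ug\|\le\|g\|$), and $\rho$ on the respective pieces. For the first term, you bound $\|P_VP_{U_t}g\|\le\|P_VP_{U_t}\|_2\|g\|$ for arbitrary $g$ (equivalently, you apply Lemma~\ref{lem:uv_geometric_baseline} to $P_{U_t}g\in U_t$) and state the equal-dimension hypothesis behind $\|P_VP_{U_t}\|_2=\sin\theta(U_t,U)$; this is a slightly more careful version of the paper's direct citation of that lemma, whose stated hypothesis $g\in U_t$ does not hold for a general $g$.
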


\begin{proof}
By Eq.~\eqref{eq:uv_gradient_coupling} and the triangle inequality,
\[
\|P_Vg\|
\le
\|P_VP_{U_t}g\|
+
\|\mathbf{L}P_Ug\|
+
\|r_g\|.
\]
The first term is bounded by Lemma~\ref{lem:uv_geometric_baseline}: \(\|P_VP_{U_t}g\|\le \sin\theta(U_t,U)\|g\|\). For the coupling term, \(\|\mathbf{L}P_Ug\|\le \|\mathbf{L}\|_2\|P_Ug\|\le \|\mathbf{L}\|_2\|g\|\). The remainder satisfies \(\|r_g\|\le \rho\|g\|\). Combining these inequalities and dividing by \(\|g\|\) proves the result.
\end{proof}

\subsection{Implication}

The weak coupling analysis should be interpreted as a diagnostic refinement of the main geometric picture. The dominant explanation for reference leakage is still the rotation between the instantaneous subspace \(U_t\) and the fixed reference subspace \(U\), captured by \(\sin\theta(U_t,U)\). The coupling map \(\mathbf{L}\) measures whether there is an additional linear pathway through which \(U\)-side residual fluctuations predict \(V\)-side residual fluctuations. When \(\|\widehat{\mathbf{L}}_\lambda\|_2\) and \(R^2_{\zeta\leftarrow\delta}\) are small, the \(U\)- and \(V\)-side residuals are weakly coupled, supporting the interpretation that the orthogonal bias \(\gamma(t)\) evolves passively rather than being aggressively corrected by direct \(V\)-direction optimization.

%

\section{Beyond the Isotropic Assumption: A Geometric Analysis}
\label{app:isotropic_ssumption}

In this section, we provide a comprehensive empirical verification of the alignment quality. While the Modality Gap metric measures the Euclidean distance between centroids, it fails to capture the structural fidelity of the aligned distribution. To address this, we examine the geometric properties from three perspectives.

\subsection{Spectral Analysis: Preserving Semantic Hierarchy}
\label{app:spectrum}

The eigenspectrum of the feature covariance matrix characterizes the intrinsic geometry of the data manifold. Deep representations typically exhibit a power-law decay ($\lambda_k \propto k^{-\alpha}$), indicating a rich semantic hierarchy where variance concentrates in principal directions.

To ensure a fair comparison, we normalize the global trace of all baselines to match the target modality before computing the spectrum. In Fig.~\ref{fig:geo_stats}(a), we plot the log-scaled singular values and quantify the geometric structure using the Power-Law Exponent ($\alpha$).

\textbf{Observation.} \fancynumber{1} \textbf{C$^3$} exhibits a significantly elevated tail and a flatter slope, with an $\alpha$ value of approximately 1.06, compared to $\alpha \approx 1.33$ for the original text. This confirms that injecting unstructured Gaussian noise reduces spectral anisotropy, producing a whitening effect that effectively dilutes the fine-grained semantic structure. \fancynumber{2} \textbf{ReAlign.} In contrast, ReAlign maintains a decay rate ($\alpha \approx 1.33$) that perfectly matches the intrinsic geometry of the source text. While the visual modality naturally possesses a lower rank ($\alpha \approx 1.83$), ReAlign achieves alignment without artificially distorting the text's inherent semantic hierarchy.

\begin{figure*}[t] 
  \centering
  \includegraphics[width=0.96\linewidth]{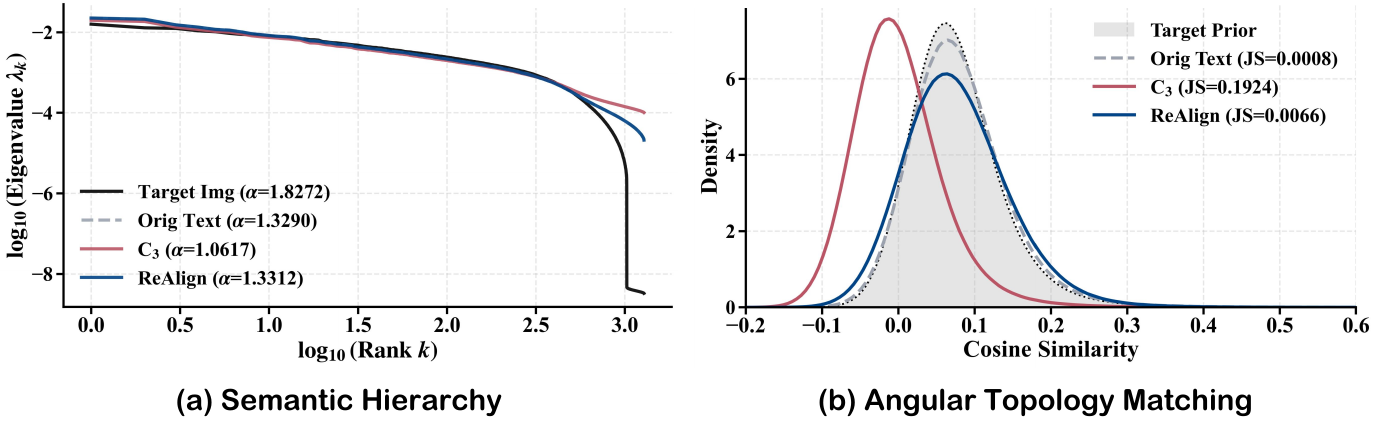}
  \caption{\textbf{Geometric Fidelity Analysis via Spectral and Angular Properties.} (a) Semantic Hierarchy: The eigenspectrum analysis reveals that C$^3$ (red line) exhibits a flattened slope with an elevated tail ($\alpha \approx 1.06$), indicating that unstructured noise injection dilutes fine-grained semantic structure. In contrast, ReAlign (blue line) maintains a power-law decay ($\alpha \approx 1.33$) that matches the intrinsic geometry of the source text. (b) Angular Topology Matching: KDE plots of cosine similarities demonstrate that C$^3$ causes a severe distributional shift (JS Divergence = 0.1924), destroying angular relationships. ReAlign achieves a near-perfect overlap with the target prior (JS Divergence = 0.0066), validating its ability to restore centroid alignment while preserving the topological structure.}
  \label{fig:geo_stats}
\end{figure*}

\subsection{Angular Distribution: Matching Cosine Topology}

We verify the angular alignment by computing the Kernel Density Estimate (KDE) of pairwise cosine similarities. We further quantify the distributional discrepancy using the Jensen-Shannon (JS) Divergence.

\textbf{Observation:} As shown in Fig.~\ref{fig:geo_stats}(b), the angular topology provides a rigorous test of structural preservation. \fancynumber{1} \textbf{C$^3$.} The noise injection results in a severe distributional shift, destroying the semantic relations on the hypersphere. Quantitative results show a high JS Divergence of 0.1904. \fancynumber{2} \textbf{ReAlign.} Our method achieves a JS Divergence as low as 0.0067, effectively overlapping with the target prior. This demonstrates that ReAlign restores centroid alignment while preserving the correct angular relationships between samples.

\subsection{Representation Visualization}
\label{app:pca}

\begin{figure*}[t] 
  \centering
  \includegraphics[width=0.98\linewidth]{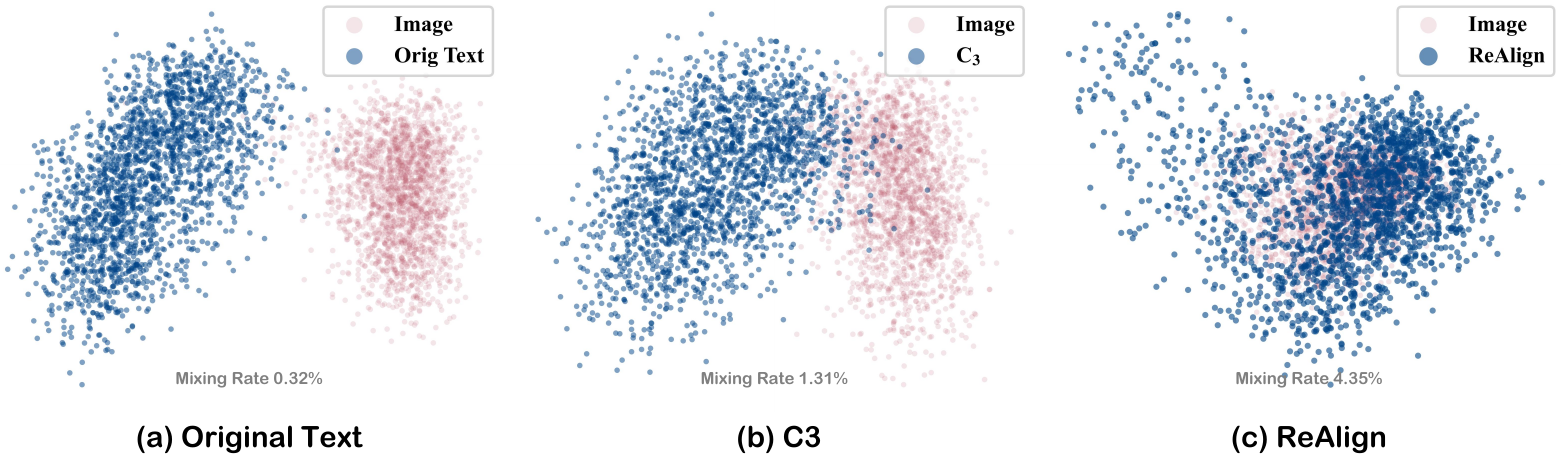}
  \caption{\textbf{Global Alignment Visualization via PCA.} We visualize the manifold alignment across three settings: (a) Original Text forms a distinct cluster separated from the image modality with negligible mixing (0.32\%). (b) C$^3$ expands the text distribution via noise but fails to effectively penetrate the visual manifold (1.31\%). (c) ReAlign successfully shifts the text distribution into the visual support region, achieving a mixing rate of 4.35\%. This represents a relative improvement of over $3\times$ compared to the C$^3$ baseline, confirming significant manifold penetration.}
  \label{fig:pca}
\end{figure*}

We use Principal Component Analysis (PCA) for a qualitative visualization of the global alignment. As shown in Fig.~\ref{fig:pca}. \fancynumber{1} \textbf{Qualitative Analysis:} The Original Text (Left) forms a cluster that is distinct and separated from the image modality. The C$^3$ baseline (Middle) expands the text distribution via noise but fails to penetrate the visual manifold. ReAlign (Right) effectively shifts the text distribution into the visual support region. \fancynumber{2} \textbf{Quantitative Analysis:} We validate this mixing in the high-dimensional space ($D=1280$) using the k-NN Mixing Rate with $k=20$. The unaligned text shows negligible mixing, at only 0.32\%. While C$^3$ slightly improves this to 1.31\%, ReAlign achieves a mixing rate of 4.35\%. This represents a relative improvement of over 3$\times$ compared to the strong C$^3$ baseline, and an improvement of over 10$\times$ compared to the unaligned state, confirming significant manifold penetration.

\section{Robustness Analysis}
\label{app:robustness_analysis}

In this section, we systematically evaluate the stability and scalability of ReAlign along three critical dimensions: \fancynumber{1} sample complexity for estimating alignment statistics, \fancynumber{2} numerical stability and computational efficiency, and \fancynumber{3} sensitivity to domain shifts.

\subsection{Data Efficiency \& Stability}
\label{app:sample_complexity}

\begin{wrapfigure}{r}{0.5\textwidth} 
  \centering
  \vspace{-30pt} 
  \includegraphics[width=0.9\linewidth]{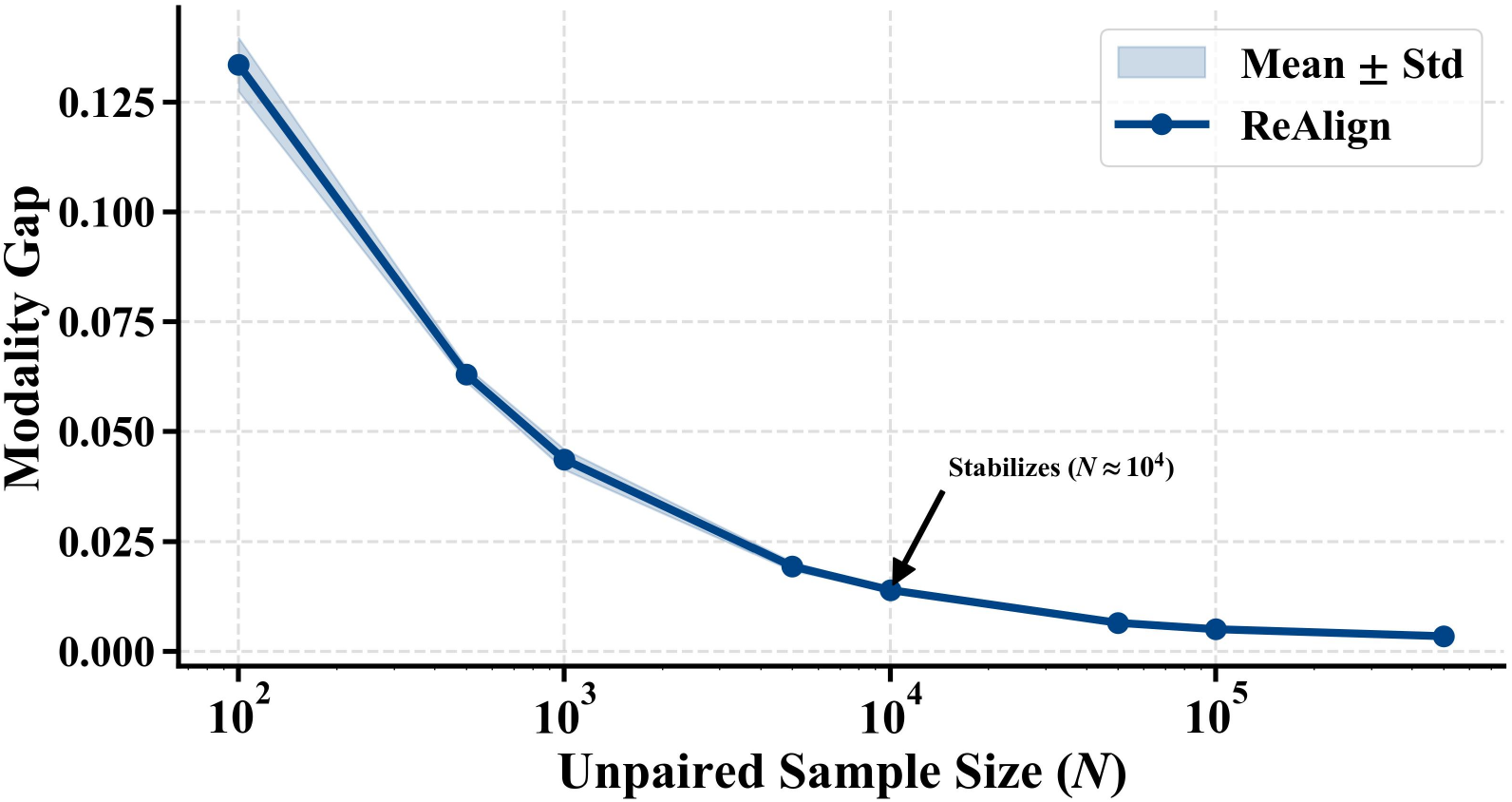}
  \caption{Impact of unpaired sample size on the modality gap for statistical estimation.}
  \label{fig:sample_complexity}
\end{wrapfigure}

A distinct advantage of ReAlign is its computational efficiency: it relies solely on low-order moments derived from unpaired data, avoiding the computational burden of iterative optimization. Here, we quantify the minimum sample size required for stable alignment.

We randomly sample $N$ unpaired images and texts from the pretraining corpus, with $N \in \{10^2, 5 \times 10^2, 10^3, 5 \times 10^3, 10^4, 5 \times 10^4, 10^5, 5 \times 10^5\}$. For each $N$, we estimate the ReAlign population means $\mu$ and second-order covariance shapes on these subsets. To ensure numerical stability in high dimensions, we apply standard shrinkage regularization to the covariance estimates. We align the embeddings using these estimates and measure the Modality Gap on a held-out test set. We report the mean $\pm$ standard deviation over $K=5$ independent trials.

As illustrated in Fig.~\ref{fig:sample_complexity}, the modality gap decreases rapidly with $N$ and stabilizes once $N$ exceeds approximately 10,000 samples. \fancynumber{1} Rapid Convergence: Unlike neural network training, which requires millions of gradient steps, moment-based estimation converges quickly via the Law of Large Numbers. Empirically, the performance plateaus after a moderate sample size ($N \approx 10^4$). \fancynumber{2} Implication: This confirms that ReAlign can be calibrated to new distributions on the fly with negligible computational cost, eliminating the need for massive paired datasets.

\subsection{Numerical Stability \& Complexity}

We conduct empirical experiments to validate the robustness and efficiency of our implementation.

Accumulating millions of high-dimensional vectors is prone to significant round-off errors. To quantify this, we compared the alignment residual $\|\mu_{err}\|_2$ (the distance between the computed mean and the true ground truth) using single-precision (\texttt{Float32}) versus double-precision (\texttt{Float64}) accumulators.
As shown in Fig.~\ref{fig:E(a)(b)}(a), utilizing \texttt{Float32} introduces an irreducible alignment error floor. Specifically, at $N=500,000$ samples, the error reaches $\approx 9.56 \times 10^{-8}$, which is non-negligible for precise manifold alignment. In contrast, our \texttt{Float64} implementation maintains precision near machine epsilon ($\approx 10^{-15}$), ensuring that the geometric alignment is limited only by statistical properties rather than numerical instability.

We analyze the time and space complexity of the ReAlign algorithm by varying the dataset size $N$ from 100k to 1M samples. \fancynumber{1} Time Complexity: As illustrated in Fig.~\ref{fig:E(a)(b)}(b), the processing time scales strictly linearly with data size ($O(N)$), increasing from 0.05s for 100k samples to 0.57s for 1M samples. \fancynumber{2} Space Complexity: Crucially, the peak memory usage remains constant at approximately 48.95 MB, regardless of the dataset size ($O(1)$). This confirms that ReAlign is highly scalable and capable of processing billions of tokens on standard hardware without memory bottlenecks.

\begin{figure*}[t] 
  \centering
  \includegraphics[width=0.92\linewidth]{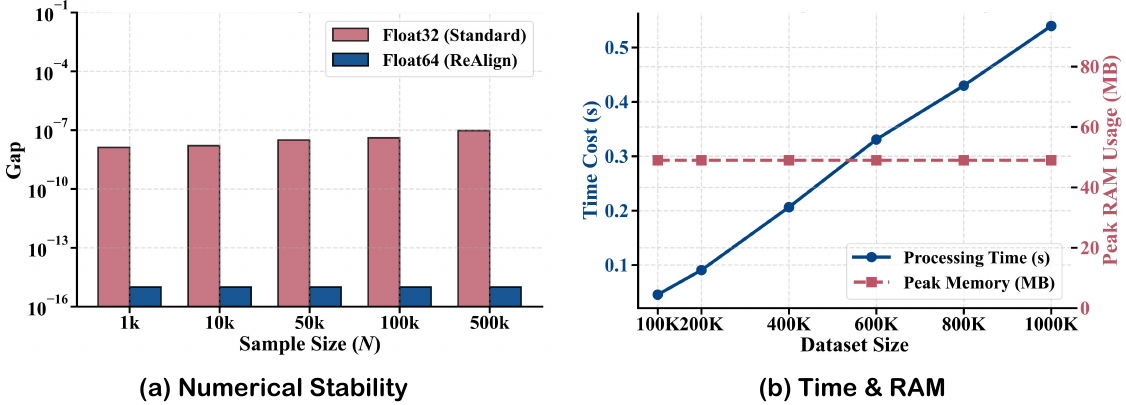}
  \caption{(a) Comparison of alignment residuals using Float32 vs. Float64 accumulators. (b) Trends of processing time ($O(N)$) and peak memory usage ($O(1)$) across dataset sizes.}
  \label{fig:E(a)(b)}
\end{figure*}

\subsection{Domain Mismatch Sensitivity}

\begin{wrapfigure}{r}{0.38\textwidth} 
  \centering
  \vspace{-30pt}
  \includegraphics[width=\linewidth]{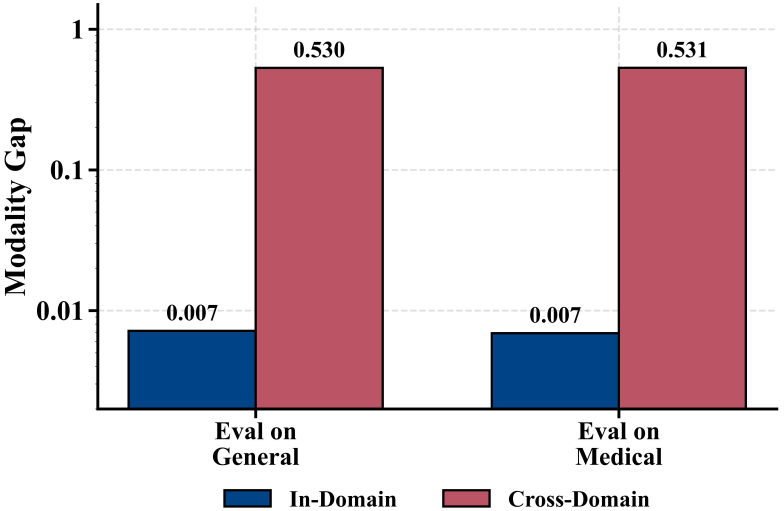}
  \caption{Comparison of modality gap performance under in-domain and cross-domain statistical alignment for General and Medical domains.}
  \vspace{-16pt}
  \label{fig:domain}

\end{wrapfigure}

We believe that applying general-domain statistics to specialized domains violates the specific distributional reality, leading to geometric misalignment. We quantify this effect using a cross-domain statistics transfer protocol.

We utilize two distinct distributions: \fancynumber{1} General: Bunny-Pretrain. \fancynumber{2} Medical: PubMedVision-Pretrain \cite{chen2024towards}, representing a specialized scientific domain.

\textbf{Protocol.} Let $\mathcal{D}_{\text{stat}}$ denote the source domain for statistical estimation, and $\mathcal{D}_{\text{eval}}$ denote the target domain for evaluation. We measure the modality gap on held-out data from $\mathcal{D}_{\text{eval}}$ under four settings: \fancynumber{1} General $\to$ General: In-domain baseline. \fancynumber{2} General $\to$ Medical: Cross-domain transfer. \fancynumber{3} Medical $\to$ Medical: In-domain alignment. \fancynumber{4} Medical $\to$ General: Reverse transfer.

Fig.~\ref{fig:domain} shows the results. \fancynumber{1} Transfer Degradation: Using General statistics to align Medical data results in a substantially larger modality gap compared to the in-domain baseline. \fancynumber{2} Specificity Requirement: Conversely, applying Medical statistics to General data also degrades performance. These results indicate that the covariance structure of the modality gap is domain-dependent. Therefore, domain-specific calibration is strictly necessary for precise geometric alignment.


\section{Failure Strategy Analysis: Blockwise Covariance Alignment}
\label{sec:training_free_substitution}

We present a training-free procedure that maps embeddings from one modality into the low-order statistical structure of another, using only first- and second-order moments estimated from large \emph{unpaired} samples. All operations are performed in the fixed reference frame of Sec.~\ref{subsec:decomposition_framework}, where the embedding space decomposes as
\(\mathbb{R}^d = U \oplus V\) with fixed projectors \(P_U, P_V\).
The procedure consists of three closed-form steps: \fancynumber{1} \textbf{Anchor align} to remove first-order bias effects; \fancynumber{2} \textbf{Geometry align} on \(U\) and \(V\) via whitening--coloring; and \fancynumber{3} \textbf{Centroid align} to rectify centroid shifts induced by the final spherical projection.

Throughout, we use:
\begin{equation}
    \mathrm{normalize}(z) = \frac{z}{\|z\|_2},
\end{equation}
and treat expectations/covariances as population quantities approximated by empirical statistics from large unpaired datasets.

\subsection{Step 1: Anchor Align}

We first align first-order statistics by translating each modality representation toward a shared anchor \(\mu^\star\) in the shared representation space and then projecting back to the unit sphere. Let \(e_a, e_b \in \mathbb{R}^d\) be unit-normalized embeddings for two modalities \(a\) and \(b\), with population means:
\begin{equation}
    \mu_a := \mathbb{E}[e_a], \qquad
    \mu_b := \mathbb{E}[e_b].
\end{equation}
In the fixed frame \((U,V)\), these means decompose into in-subspace (PMB) and orthogonal (COB) components.

To map modality \(a\) into the embedding distribution of modality \(b\), we choose the anchor \(\mu^\star=\mu_b\) and define
\begin{equation}
    \tilde e_a = \mathrm{normalize}(e_a - \mu_a + \mu_b), \qquad
    \tilde e_b = e_b.
\end{equation}
In this step, we translate \(e_a\) so that its mean matches \(\mu_b\), then re-project onto the unit sphere. The subsequent Step~2 then matches both modalities to a shared target geometry \((\Sigma_U^\star,\Sigma_V^\star)\).

\subsection{Step 2: Geometry Align}

Next, we match \emph{second-order} statistics on \(U\) and \(V\) separately.
All linear transforms are applied in Euclidean space, and the output is then projected back to the unit sphere. After Step~1, we decompose anchored embeddings using the fixed projectors \(P_U,P_V\) and estimate blockwise covariances:
\begin{equation}
\Sigma_{U,a}=\mathrm{Cov}(P_U \tilde e_a),\quad \Sigma_{V,a}=\mathrm{Cov}(P_V \tilde e_a).
\end{equation}
Analogously, define \(\Sigma_{U,b}\) and \(\Sigma_{V,b}\) for modality \(b\).

We set:
\begin{equation}
\Sigma_U^\star=\Sigma_{U,b},\quad \Sigma_V^\star=\Sigma_{V,b},
\end{equation}
so that modality \(a\) is mapped to match the covariance structure of modality \(b\) on both \(U\) and \(V\).
For symmetric substitution, \((\Sigma_U^\star,\Sigma_V^\star)\) can be chosen as any shared target geometry. Define the blockwise whitening--coloring transforms:
\begin{equation}
T_{U,a}=(\Sigma_U^\star)^{1/2}\,\Sigma_{U,a}^{-1/2},\quad
T_{V,a}=(\Sigma_V^\star)^{1/2}\,\Sigma_{V,a}^{-1/2}. 
\end{equation}
Using these, construct the substituted embedding by applying the two transforms on their respective blocks:
\begin{equation}
\hat e_a=\mathrm{normalize}\Big(T_{U,a}P_U\tilde e_a + T_{V,a}P_V\tilde e_a\Big). 
\end{equation}

\subsection{Step 3: Centroid Align}

Although Step 2 aligns the covariance structure, the final projection onto the unit sphere creates a non-linear distortion that inevitably shifts the population centroid away from the target anchor \(\mu^\star\). We term this phenomenon phantom drift. To strictly maintain first-order alignment, we explicitly estimate this drift and perform a final re-centering step. 

Let \(\hat{\mu}_a = \mathbb{E}[\hat e_a]\) be the drifted mean of the geometrically aligned embeddings from Step 2. We apply a final correction to pull the distribution back to the target anchor:
\begin{equation}
e_{final} = \mathrm{normalize}(\hat e_a - \hat{\mu}_a + \mu^\star).
\end{equation}
This ensures that the final substituted embeddings are both geometrically aligned in terms of covariance and accurately centered around the target semantic anchor.

\subsection{Failure Analysis: Why Fine-Grained Alignment Fails?}

\begin{figure*}[t] 
  \centering
  \includegraphics[width=0.92\linewidth]{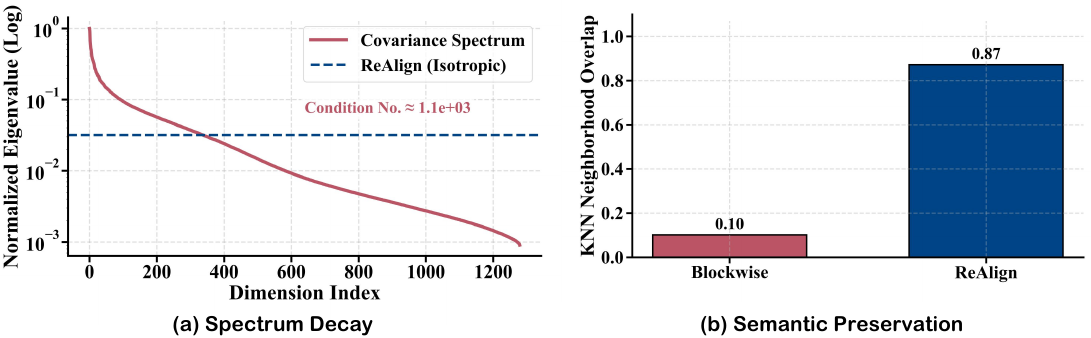}
  \caption{\textbf{Failure mechanism analysis of Blockwise Covariance Alignment.} (a) Spectrum Decay: The high condition number ($\approx 1.1 \times 10^3$) of text embeddings induces numerical instability during covariance inversion, amplifying tail noise. In contrast, ReAlign maintains stability via isotropic scaling. (b) Semantic Preservation: KNN neighborhood overlap rates reveal that Blockwise's aggressive whitening causes a collapse of the local semantic topology (retaining only 10\% overlap), whereas ReAlign effectively preserves 87\% of the semantic structure.}
  \label{fig:fail}
\end{figure*}

Although the Blockwise Covariance Alignment strategy theoretically offers a stricter geometric match by aligning full second-order moments within subspaces, empirical results indicate that it consistently underperforms the proposed ReAlign method across all benchmarks. We attribute this failure to two primary factors, supported by the quantitative analysis in Fig.~\ref{fig:fail}.

\begin{table*}[t]
\centering
\setlength{\tabcolsep}{2pt}
\scriptsize
\caption{\textbf{Performance comparison between BC Align and ReAlign}. The results demonstrate that ReVision significantly outperforms BC Align across all benchmarks. This confirms that the numerical instability and excessive distortion of the semantic manifold in BC Align lead to degraded model performance.}
\label{tab:multimodal_benchmarks}
\renewcommand{\arraystretch}{1.2}

\resizebox{1.0\textwidth}{!}{%
\begin{tabular}{lcccccccccccc}
\toprule
\multirow{2}{*}{\textbf{Method}}
& \multicolumn{4}{c}{\textbf{General}}
& \multicolumn{4}{c}{\textbf{Reasoning}}
& \multicolumn{3}{c}{\textbf{Hallucination}}
& \multirow{2}{*}{\textbf{Avg.}} \\
\cmidrule(lr){2-5} \cmidrule(lr){6-9} \cmidrule(lr){10-12}
& \texttt{MME} & \texttt{MMStar} & \texttt{SQA} & \texttt{RealWorldQA}
& \texttt{MMMU} & \texttt{MMMU-P} & \texttt{VisuLogic} & \texttt{LogicVista}
& \texttt{CRPE} & \texttt{POPE} & \texttt{HallBench} & \\
\midrule
BC Align & 76.12 & 34.33 & 76.42 & 45.36 & 30.69 & 27.95 & 24.40 & 22.60 & 80.93 & 71.08 & 46.27 & 48.74 \\
\rowcolor{graybg} \textbf{ReVision} & \textbf{79.65} & \textbf{36.13} & \textbf{76.71} & \textbf{47.97} & \textbf{31.51} & \textbf{28.39} & \textbf{27.70} & \textbf{22.82} & \textbf{81.78} & \textbf{72.53} & \textbf{46.58} & \textbf{50.16} \\

\bottomrule
\end{tabular}%
}
\end{table*}

\textbf{Estimation Variance and Numerical Instability.}
The ReAlign method relies on matching the \textit{global trace} $T = \text{tr}(\Sigma)$. Scalars are statistically robust and converge rapidly. In contrast, Blockwise Alignment requires estimating and inverting full covariance matrices.
As illustrated in Fig.~\ref{fig:fail}(a), the empirical covariance spectrum of the text embeddings exhibits a rapid decay with a high condition number ($\kappa \approx 1.10 \times 10^3$). 
Computing the whitening transform $\Sigma^{-1/2}$ on such an ill-conditioned matrix inadvertently amplifies noise in the tail dimensions (where eigenvalues $\lambda \to 0$) by orders of magnitude. This leads to numerical instability and exploding updates in minor feature directions, whereas ReAlign's isotropic scaling remains stable regardless of the spectral shape.

\textbf{Semantic Distortion via Aggressive Whitening.}
Geometric alignment operates on the assumption that the source and target manifolds share a topologically isomorphic structure. ReAlign adopts a conservative approach, isotropic scaling via $\tilde{e}_y=\mu_x+s(e_y-\mu_y)$., which preserves the relative angular distances between source embeddings. Blockwise whitening, however, applies an anisotropic rotation and scaling to force the covariance shapes to match exactly.

We quantified the impact of this transformation on semantic structure using $k$-Nearest Neighbor (KNN, $k=10$) overlap rates. As shown in Fig.~\ref{fig:fail}(b), this over-alignment causes a catastrophic collapse of the local semantic topology: Blockwise alignment retains only 10.1\% of the original semantic neighborhoods. In sharp contrast, ReAlign preserves 87.3\% of the local structure, ensuring that fine-grained semantic relationships remain intact for the LLM.

While Blockwise Covariance Alignment is geometrically more rigorous, ReAlign strikes a superior trade-off between geometric compatibility and semantic preservation. Its reliance on robust first-order and scalar second-order statistics provides a stable initialization that is crucial for scalable MLLM pretraining.


\section{The Long-Caption Paradox}
\label{app:long_caption_paradox}

Intuitively, utilizing longer, denser captions should theoretically provide richer semantic supervision, thereby enhancing cross-modal alignment. This expectation is particularly strong when employing advanced text encoders like LLM2CLIP, which are explicitly engineered to handle long contexts.

However, our empirical investigation reveals a counter-intuitive phenomenon which we term the \textbf{Long-Caption Paradox}. As demonstrated in Table.~\ref{tab:caption_length}, the model pretrained on the concise Bunny dataset consistently outperforms the model trained on the linguistically rich DenseFusion dataset. This performance gap persists despite both models utilizing the exact same ReAlign strategy and encoder. We attribute this paradox not to a single factor, but to structural constraints verified by our geometric analysis:\fancynumber{1} \textit{Representation Capacity}, \fancynumber{2} \textit{Diffuse Covariance}, and \fancynumber{3} \textit{Signal-to-Noise Ratio}.

\begin{table*}[t]
\centering
\setlength{\tabcolsep}{2pt}
\scriptsize
\caption{\textbf{Performance Comparison validating the Long-Caption Paradox.} Quantitative results demonstrate that ReVision consistently outperforms ReVision-Long across General, Reasoning, and Hallucination benchmarks. This performance gap confirms that for statistical modality alignment, the geometric compactness and high signal-to-noise ratio of short captions are more critical than the raw length of the textual description.}
\label{tab:caption_length}
\renewcommand{\arraystretch}{1.2}

\resizebox{1.0\textwidth}{!}{%
\begin{tabular}{lcccccccccccc}
\toprule
\multirow{2}{*}{\textbf{Method}}
& \multicolumn{4}{c}{\textbf{General}}
& \multicolumn{4}{c}{\textbf{Reasoning}}
& \multicolumn{3}{c}{\textbf{Hallucination}}
& \multirow{2}{*}{\textbf{Avg.}} \\
\cmidrule(lr){2-5} \cmidrule(lr){6-9} \cmidrule(lr){10-12}
& \textbf{MME} & \textbf{MMStar} & \textbf{SQA} & \textbf{RealWorldQA}
& \textbf{MMMU} & \textbf{MMMU-P} & \textbf{VisuLogic} & \textbf{LogicVista}
& \textbf{CRPE} & \textbf{POPE} & \textbf{HallBench} & \\
\midrule
ReVision-Long & 75.74 & 33.40 & 74.84 & 46.41 & 30.57 & 27.26 & 24.70 & \textbf{25.28} & 80.42 & 71.47 & 45.95 & 48.73\\
\rowcolor{graybg} \textbf{ReVision} & \textbf{79.65} & \textbf{36.13} & \textbf{76.71} & \textbf{47.97} & \textbf{31.51} & \textbf{28.39} & \textbf{27.70} & 22.82 & \textbf{81.78} & \textbf{72.53} & \textbf{46.58} & \textbf{50.16} \\
\bottomrule
\end{tabular}%
}
\end{table*}

\begin{figure*}[t] 
  \centering
  \includegraphics[width=0.9\linewidth]{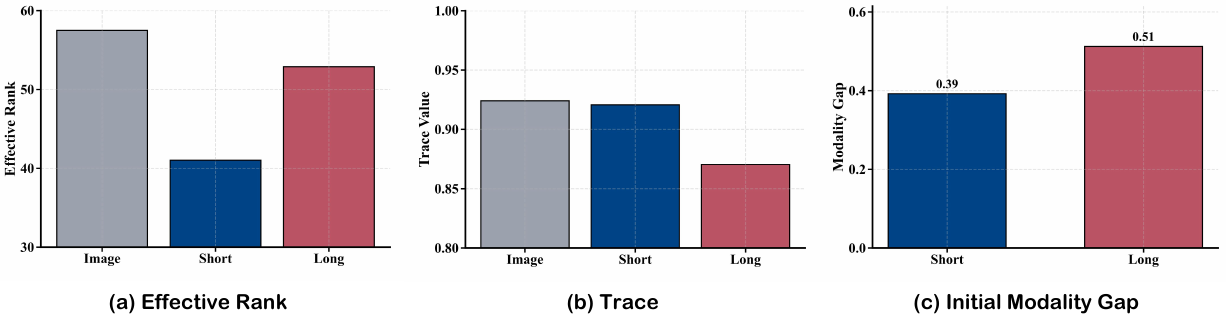}
  \caption{\textbf{Geometric Analysis of the Long-Caption Paradox.} (a) Effective Rank: Measurements reveal that Long captions exhibit a high effective rank ($\approx 52.9$) similar to the visual modality ($\approx 57.5$). This indicates a diffuse, high-entropy covariance structure that is difficult to align. In contrast, Short captions ($\approx 41.0$) function as a compact, low-rank approximation of the visual content, offering greater statistical stability. (c) Initial Modality Gap: The inclusion of non-visual linguistic noise in long captions acts as a disturbing force, significantly widening the initial modality gap ($\|\Delta\mu\| \approx 0.51$) by approximately 30\% compared to concise captions ($\|\Delta\mu\| \approx 0.39$).}
  \label{fig:long_caption}
\end{figure*}

\subsection{Truncation-Induced Supervision Mismatch} 

Even with advanced encoders like LLM2CLIP that extend the context window, there remains a hard physical limit on the input sequence length. Unlike the visual encoder, which processes the image holistically, the text encoder operates within a fixed context window. When a dense caption $T$ exceeds this limit, the trailing semantic content is inevitably truncated and discarded before encoding. This truncation creates a fundamental discrepancy between the modalities. The visual embedding $v$ encodes the global visual information, whereas the textual embedding $z$ only captures the truncated prefix of the caption. Consequently, the model attempts to align the full image to an incomplete textual description, as the details described in the truncated tail are absent from the feature space.

\subsection{Diffuse Covariance Structure}
From the perspective of ReAlign, long captions introduce specific geometric challenges. We quantify this using the effective rank to measure the complexity and compactness of the embedding manifolds. As shown in Fig.~\ref{fig:long_caption}, our measurements reveal that visual embeddings possess a high Effective Rank ($\approx 57.5$), reflecting their rich detail. Long captions attempt to mimic this complexity, exhibiting a similarly high rank ($\approx 52.9$). However, this high rank implies a diffuse covariance structure, a high-dimensional cloud with high entropy. In contrast, Short captions exhibit a significantly lower Effective Rank ($\approx 41.0$). This suggests that short captions act as a low-rank approximation of the visual content, filtering out background noise and retaining only the most structurally salient principal components. Aligning a compact manifold to the visual manifold is statistically more stable than aligning a diffuse, high-entropy cloud.

\begin{figure*}[t] 
  \centering
  \includegraphics[width=0.92\linewidth]{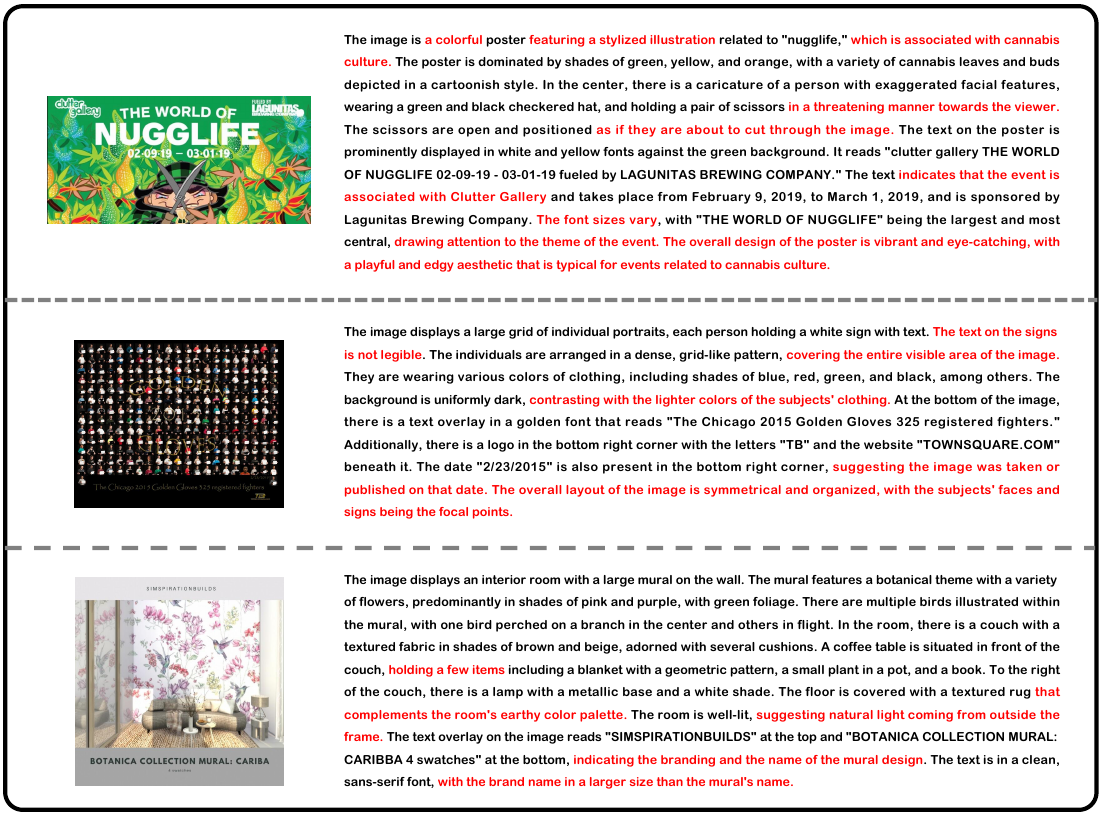}
  \caption{Visualization of Linguistic Noise in Dense Captions. We highlight non-visual segments (marked in red) within long captions, such as abstract inferences, contextual associations, and subjective interpretations. These tokens lack direct visual grounding and geometrically act as noise vectors, pulling the semantic centroid away from the true visual anchor.}
  \label{fig:linguistic_noise}
\end{figure*}

\subsection{Linguistic Noise \& Modality Gap}
Finally, we distinguish between Linguistic information and Visual Information. A larger token count does not equate to higher visual utility. Dense captions often contain non-visual context (The atmosphere was tense) or abstract inferences, as visualized in Fig.~\ref{fig:linguistic_noise}. Geometrically, these tokens act as noise vectors that pull the semantic centroid away from the visual anchor. This is empirically verified by the initial modality gap. As shown in our experiments, the long-caption dataset exhibits a significantly larger gap ($\|\Delta\mu\| \approx 0.51$) compared to the Short-Caption dataset ($\|\Delta\mu\| \approx 0.39$). 

\textbf{Conclusion:} The extra linguistic information in long captions effectively acts as a disturbing force, increasing the modality gap by $\approx 30\%$. For statistical modality alignment, the visual grounding and geometric compactness of the text distribution are more critical than raw length.

\section{Experiments Setting} \label{app:setting}

\subsection{CLIP Pretraining Setting}

To empirically validate theoretical hypotheses, we established a controlled pretraining environment utilizing a TinyCLIP architecture comprising a 40 million parameter ViT B/32 vision encoder and a 19 million parameter Transformer text encoder. The model was trained on 2 million randomly sampled image-text pairs from LAION 400M with 224 by 224 resolution inputs using the AdamW optimizer with a weight decay of 0.0001 and a cosine learning rate schedule peaking at 0.0001 with a warmup ratio of 0.1. The training process was distributed across 8 NVIDIA A6000 GPUs with a global batch size of 4096. Distinct from standard protocols, we integrated a custom online geometric monitoring system that utilizes a dedicated held-out probe set of 200,000 samples to perform high-frequency spectral analysis every 20 steps. This protocol dynamically updates the reference subspace basis until step 800 before freezing it for stable drift measurement and employs 16 accumulated batches to approximate the Fisher Information Matrix for curvature analysis while tracking the top 128 eigen components to verify spectral decay. All linear algebraic computations are executed in Float32 precision to ensure numerical stability against the BF16 training backdrop.

\subsection{MLLM Training Setting}

We employ Llama-3-8B-Instruction as the LLM backbone, connected to the input modalities via a two-layer MLP projector with GELU activation. Our core design utilizes aligned text representations as pseudo-visual tokens. These text-derived embeddings are mapped directly into the LLM's feature space through the MLP. Training Pipeline. The training process consists of two stages: \fancynumber{1} Stage 1:  Modality Substitution Pretraining. We train the projector for 1 epoch on the filtered Bunny-1M dataset with a learning rate of \(5 \times 10^{-4}\), while keeping the LLM parameters frozen. \fancynumber{2} Stage 2: Visual Instruction Tuning. We perform full-parameter fine-tuning on the InternVL-Chat-V1.2 dataset for 1 epoch. The projector is initialized with weights from Stage 1, and the learning rate is reduced to 1e-5. The experiments were conducted on 8 NVIDIA H200 GPUs. With a total data scale of approximately 2.2M samples, the complete training pipeline was finished in 12 hours.

\subsection{Eval Setting}

We evaluate ReVision across three primary dimensions using a comprehensive suite of benchmarks: General Perception (MME test \cite{fu2025mme}, MMStar \cite{chen2024we}, ScienceQA-image dev\&test \cite{lu2022learn}, and RealWorldQA), Complex Reasoning (MMMU validation single-image \cite{yue2024mmmu}, MMMU-Pro single-image \cite{yue2025mmmu}, VisuLogic train \cite{xu2025visulogic}, and LogicVista \cite{xiao2024logicvista}), and Hallucination Assessment (CRPE \cite{wang2024all}, POPE \cite{li2023evaluating}, and HallusionBench \cite{guan2024hallusionbench}). For all benchmarks, we report accuracy ($acc$) as the unified metric to ensure a consistent and rigorous comparison.


\subsection{Cost Analysis}

We quantify the data acquisition cost using the formula $C_{\text{total}} = P_{\text{in}} \cdot T_{\text{in}} + P_{\text{out}} \cdot T_{\text{out}}$, where $C_{\text{total}}$ denotes the total cost, and $T$ and $P$ represent the token count and price per million tokens, respectively. To ensure a standardized comparison, we assume unified pricing based on GPT-5 APIs for all methods, with rates set at $P_{\text{in}} = \$1.25$ and $P_{\text{out}} = \$10.00$ per million tokens. Under this setting, the methods exhibit distinct cost profiles: Unicorn incurs the highest expense of \$1893.27 (17.50M input, 187.14M output), whereas the standard ReVision (1M) is the most economical at \$176.10 (11.64M input, 16.15M output). Scaling to ReVision-2M brings the cost to \$352.20 (23.28M input, 32.30M output), which remains notably lower than the paired w/. Image baseline cost of \$476.05 (221.64M input, 16.15M output). For clarity, all costs are normalized relative to the w/. Image baseline (1.0).


\section{More MLLM Experiments}
\label{app:more_mllm_experiments}

In this section, we provide additional MLLM experiments to further analyze the effectiveness and scalability of ReVision. We first ablate the three components of ReAlign, then study the role of Modality Substitution Pretraining, and finally evaluate scaling behavior with respect to Stage-2 visual instruction data and LLM backbone size.

\subsection{Ablation of ReAlign Components}
\label{app:realign_component_ablation}

To verify the contribution of each step in ReAlign, we conduct a component-wise ablation by progressively adding Anchor Alignment, Trace Alignment, and Centroid Alignment. As shown in Table~\ref{tab:realign_component_ablation}, Anchor Alignment alone already provides a strong baseline by correcting the first-order mean mismatch. Adding Trace Alignment further improves the average score by matching the global residual energy scale while preserving the learned spectral structure. Finally, Centroid Alignment brings additional gains by correcting the centroid drift induced by spherical normalization. These results support the design of ReAlign as a three-stage calibration procedure.

\begin{table*}[t]
\centering
\setlength{\tabcolsep}{2pt}
\scriptsize
\caption{Ablation of ReAlign components. Step 1 denotes Anchor Alignment, Step 2 denotes Trace Alignment, and the full ReAlign additionally includes Centroid Alignment.}
\label{tab:realign_component_ablation}
\renewcommand{\arraystretch}{1.2}
\resizebox{\textwidth}{!}{%
\begin{tabular}{lcccccccccccc}
\toprule
\multirow{2}{*}{\textbf{Method}}
& \multicolumn{4}{c}{\textbf{General}}
& \multicolumn{4}{c}{\textbf{Reasoning}}
& \multicolumn{3}{c}{\textbf{Hallucination}}
& \multirow{2}{*}{\textbf{Avg. $\uparrow$}} \\
\cmidrule(lr){2-5} \cmidrule(lr){6-9} \cmidrule(lr){10-12}
& \texttt{MME} & \texttt{MMStar} & \texttt{SQA} & \texttt{RealWorldQA}
& \texttt{MMMU} & \texttt{MMMU-P} & \texttt{VisuLogic} & \texttt{LogicVista}
& \texttt{CRPE} & \texttt{POPE} & \texttt{HallBench} & \\
\midrule
w/. Step 1 & 74.86 & 33.18 & 73.93 & 42.64 & 28.48 & 25.61 & 22.94 & 18.86 & 78.93 & 70.82 & 43.24 & 46.68 \\
w/. Step 1 \& 2 & 78.22 & 35.71 & 76.21 & 45.16 & 30.05 & 26.09 & 26.05 & 22.16 & 80.37 & 71.18 & 44.14 & 48.67 \\
\hdashline
\rowcolor{graybg} \textbf{ReAlign / ReVision} & \textbf{79.65} & \textbf{36.13} & \textbf{76.71} & \textbf{47.97} & \textbf{31.51} & \textbf{28.39} & \textbf{27.70} & \textbf{22.82} & \textbf{81.78} & \textbf{72.53} & \textbf{46.58} & \textbf{50.16} \\
\bottomrule
\end{tabular}
}
\end{table*}

\subsection{Effect of Modality Substitution Pretraining}
\label{app:stage1_ablation}

We further ablate the first stage of ReVision to examine whether Modality Substitution Pretraining is necessary. In the w/o Stage 1 setting, the model skips pseudo-visual pretraining and directly proceeds to visual instruction tuning. As shown in Table~\ref{tab:stage1_ablation}, removing Stage 1 leads to a substantial performance drop across all benchmarks, decreasing the average score from \(50.16\) to \(43.58\). This confirms that Stage 1 is not merely a generic projector warm-up; instead, it provides a visually compatible semantic interface before real-image instruction tuning.

\begin{table*}[t]
\centering
\setlength{\tabcolsep}{2pt}
\scriptsize
\caption{Ablation of Modality Substitution Pretraining. Removing Stage 1 consistently degrades performance across all benchmarks.}
\label{tab:stage1_ablation}
\renewcommand{\arraystretch}{1.2}
\resizebox{\textwidth}{!}{%
\begin{tabular}{lcccccccccccc}
\toprule
\multirow{2}{*}{\textbf{Method}}
& \multicolumn{4}{c}{\textbf{General}}
& \multicolumn{4}{c}{\textbf{Reasoning}}
& \multicolumn{3}{c}{\textbf{Hallucination}}
& \multirow{2}{*}{\textbf{Avg. $\uparrow$}} \\
\cmidrule(lr){2-5} \cmidrule(lr){6-9} \cmidrule(lr){10-12}
& \texttt{MME} & \texttt{MMStar} & \texttt{SQA} & \texttt{RealWorldQA}
& \texttt{MMMU} & \texttt{MMMU-P} & \texttt{VisuLogic} & \texttt{LogicVista}
& \texttt{CRPE} & \texttt{POPE} & \texttt{HallBench} & \\
\midrule
w/o Stage 1 & 66.80 & 29.70 & 71.90 & 39.10 & 25.60 & 23.40 & 20.10 & 16.90 & 75.80 & 68.90 & 41.20 & 43.58 \\
\hdashline
\rowcolor{graybg} \textbf{ReVision} & \textbf{79.65} & \textbf{36.13} & \textbf{76.71} & \textbf{47.97} & \textbf{31.51} & \textbf{28.39} & \textbf{27.70} & \textbf{22.82} & \textbf{81.78} & \textbf{72.53} & \textbf{46.58} & \textbf{50.16} \\
\bottomrule
\end{tabular}
}
\end{table*}

\subsection{Scaling Visual Instruction Tuning Data}
\label{app:sft_scaling}

We study how the amount of Stage-2 visual instruction tuning data affects ReVision. We keep Stage 1 fixed and vary only the amount of Stage-2 SFT data. As shown in Table~\ref{tab:sft_scaling}, increasing the SFT data from \(30\%\) to \(60\%\) and then to \(100\%\) consistently improves performance. This trend indicates that Stage 1 establishes a useful cross-modal semantic interface, while Stage 2 progressively injects fine-grained real-image supervision and instruction-following ability.

\begin{table*}[t]
\centering
\setlength{\tabcolsep}{2pt}
\scriptsize
\caption{Scaling Stage-2 visual instruction tuning data. Stage 1 is fixed, and only the amount of Stage-2 SFT data is varied.}
\label{tab:sft_scaling}
\renewcommand{\arraystretch}{1.2}
\resizebox{\textwidth}{!}{%
\begin{tabular}{lcccccccccccc}
\toprule
\multirow{2}{*}{\textbf{Stage-2 Data}}
& \multicolumn{4}{c}{\textbf{General}}
& \multicolumn{4}{c}{\textbf{Reasoning}}
& \multicolumn{3}{c}{\textbf{Hallucination}}
& \multirow{2}{*}{\textbf{Avg. $\uparrow$}} \\
\cmidrule(lr){2-5} \cmidrule(lr){6-9} \cmidrule(lr){10-12}
& \texttt{MME} & \texttt{MMStar} & \texttt{SQA} & \texttt{RealWorldQA}
& \texttt{MMMU} & \texttt{MMMU-P} & \texttt{VisuLogic} & \texttt{LogicVista}
& \texttt{CRPE} & \texttt{POPE} & \texttt{HallBench} & \\
\midrule
30\% & 68.53 & 30.60 & 70.90 & 41.88 & 28.67 & 25.33 & 23.60 & 17.17 & 74.65 & 66.28 & 40.35 & 44.36 \\
60\% & 75.99 & 34.67 & 75.06 & 45.36 & 30.12 & 27.47 & 25.10 & 20.06 & 78.57 & 70.53 & 44.39 & 47.94 \\
\hdashline
\rowcolor{graybg} \textbf{100\%} & \textbf{79.65} & \textbf{36.13} & \textbf{76.71} & \textbf{47.97} & \textbf{31.51} & \textbf{28.39} & \textbf{27.70} & \textbf{22.82} & \textbf{81.78} & \textbf{72.53} & \textbf{46.58} & \textbf{50.16} \\
\bottomrule
\end{tabular}
}
\end{table*}

\subsection{Scaling the LLM Backbone}
\label{app:backbone_scaling}

We evaluate whether ReVision benefits from scaling the LLM backbone. We replace only the LLM backbone from Llama-3-8B-Instruct to Llama-3-70B-Instruct, while keeping the visual/text encoder, ReAlign operator, Stage-1 corpus, Stage-2 SFT dataset, and evaluation suite unchanged. As shown in Table~\ref{tab:backbone_scaling}, scaling the backbone improves the average score from \(50.16\) to \(52.33\). The gains are especially visible on reasoning-intensive benchmarks, suggesting that once the geometric pseudo-visual interface is established, a larger LLM can better exploit the injected semantic supervision.

\begin{table*}[t]
\centering
\setlength{\tabcolsep}{2pt}
\scriptsize
\caption{Scaling the LLM backbone under the same ReVision pipeline. Only the LLM backbone is changed, while all other components and data settings remain fixed.}
\label{tab:backbone_scaling}
\renewcommand{\arraystretch}{1.2}
\resizebox{\textwidth}{!}{%
\begin{tabular}{lcccccccccccc}
\toprule
\multirow{2}{*}{\textbf{Backbone}}
& \multicolumn{4}{c}{\textbf{General}}
& \multicolumn{4}{c}{\textbf{Reasoning}}
& \multicolumn{3}{c}{\textbf{Hallucination}}
& \multirow{2}{*}{\textbf{Avg. $\uparrow$}} \\
\cmidrule(lr){2-5} \cmidrule(lr){6-9} \cmidrule(lr){10-12}
& \texttt{MME} & \texttt{MMStar} & \texttt{SQA} & \texttt{RealWorldQA}
& \texttt{MMMU} & \texttt{MMMU-P} & \texttt{VisuLogic} & \texttt{LogicVista}
& \texttt{CRPE} & \texttt{POPE} & \texttt{HallBench} & \\
\midrule
Llama-3-8B-Instruct & 79.65 & 36.13 & 76.71 & 47.97 & 31.51 & 28.39 & 27.70 & 22.82 & 81.78 & 72.53 & 46.58 & 50.16 \\
\hdashline
\rowcolor{graybg} \textbf{Llama-3-70B-Instruct} & \textbf{80.27} & \textbf{37.84} & \textbf{78.92} & \textbf{49.16} & \textbf{34.99} & \textbf{31.27} & \textbf{31.54} & \textbf{26.11} & \textbf{82.52} & \textbf{73.89} & \textbf{49.15} & \textbf{52.33} \\
\bottomrule
\end{tabular}
}
\end{table*}

\section{Qualitative Analysis} \label{app:qualitative}

To intuitively understand the improvements brought by ReVision, we conduct a comprehensive evaluation across three distinct cognitive levels: General Visual Perception, Domain-Specific Knowledge, and Logical Reasoning. As visualized in Fig.~\ref{fig:examples}, the model demonstrates exceptional capability in handling complex Abstract and Spatial Reasoning tasks. Specifically, in the matrix pattern completion case, the model moves beyond simple texture matching to identify the underlying geometric progression rules. This capacity for mental simulation is further evidenced in the geometry folding problem, where the model successfully performs mental rotation to reconstruct a 3D cube from a 2D net, accurately predicting the spatial adjacency of symbols. Beyond abstract logic, the model exhibits robust Fine-Grained Perception and Knowledge Integration. In the domain of data analysis, it accurately interprets the intersection points of two trend lines in a statistical chart, extracting precise numerical values rather than merely describing general trends. Furthermore, the model effectively grounds visual signals into specific world knowledge. This is demonstrated in the geospatial recognition scenario, where it combines visual map boundaries with geographical facts to identify the easternmost state, and in the physical common sense task, where it correctly deduces magnetic attraction based on the orientation of poles.

\begin{figure*}[t] 
  \centering
  \includegraphics[width=0.9\linewidth]{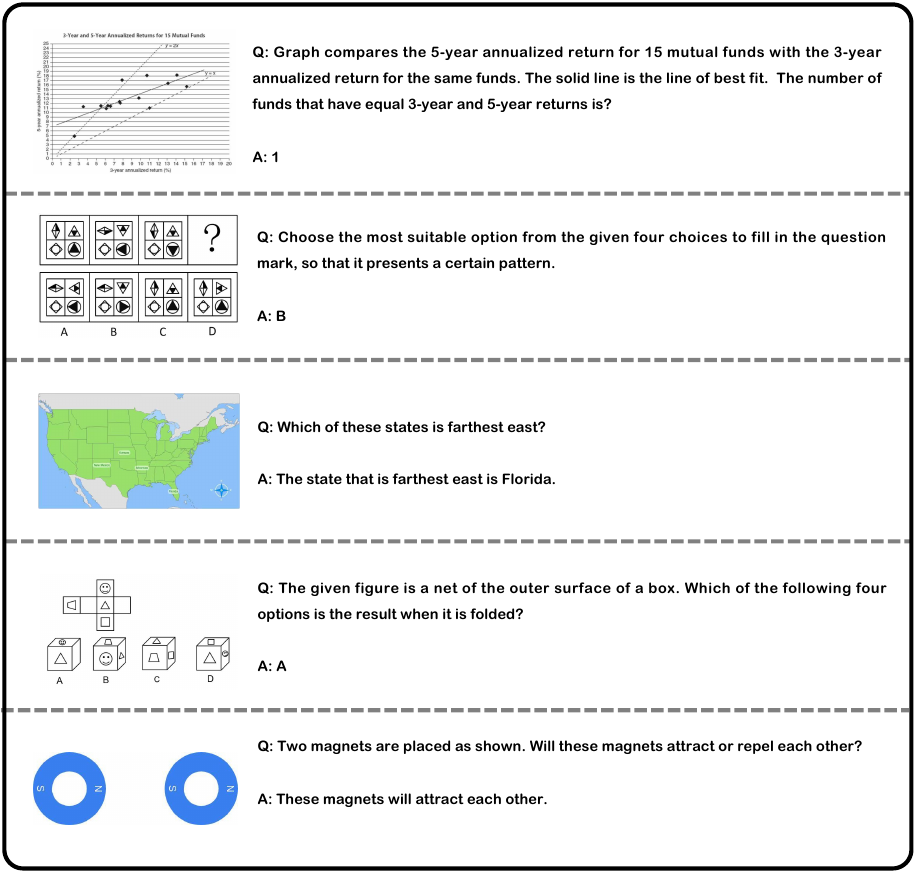}
  \caption{\textbf{Qualitative analysis examples of ReVision.} This figure shows the model's visual perception ability across various complex tasks.}
  \label{fig:examples}
\end{figure*}

\section{Related Work} \label{app:related_work}

The Modality Gap refers to the systematic distributional distance between representations of paired data from distinct modalities within a shared embedding space, despite the theoretical expectation that semantically identical pairs should align. Early empirical research first identified this geometric anomaly as a cone effect \cite{liang2022mind}, observing that embedding vectors tend to occupy a narrow cone rather than spanning the full hypersphere \cite{wang2020understanding}. The C$^3$ framework \cite{zhang2024connect} provided the first formal description of this structure, characterizing the gap as a superposition of a constant orthogonal displacement and random alignment noise. While these frameworks offer a foundational understanding, they predominantly rely on the assumption that the noise term is isotropic. Our work challenges this simplification, demonstrating that the residual noise exhibits high anisotropy, thereby necessitating more precise second-moment modeling. The absence of this theoretical perspective has directly limited recent explorations into training MLLMs \cite{liu2023visual,bai2025qwen2,wang2025internvl3,wu2024deepseek,yu2025minicpm,alayrac2022flamingo} using pure text. Unicorn \cite{yu2025unicorn} pioneered the use of the modality gap to convert text representation into the pseudo-visual representation. However, due to their reliance on simple mean shifting, which implies an isotropic assumption, the synthesized representation fails to match the complex geometric shape of the real visual representation. This further highlights the necessity of constructing a novel training paradigm based on precise covariance alignment.

\section{Broader Impact} \label{app:future}

Our work advances the paradigm of Data-Efficient AI, specifically by decoupling the acquisition of visual semantics from the reliance on massive paired datasets. By demonstrating that the heavy lifting of knowledge injection can be offloaded to massive unpaired text during the Pretraining stage, while reserving limited real images solely for the SFT stage, we fundamentally alter the resource landscape for training MLLMs. This paradigm shift has profound implications in several key areas:

\textbf{Democratization of Multimodal Research:} The prohibitive cost of collecting and cleaning billion-scale image-text pairs has historically concentrated MLLM development within a few well-resourced institutions. By shifting the data requirement to abundant unpaired text for the bulk of training, our approach significantly lowers the barrier to entry, enabling academic labs and smaller research groups to train high-performance models from scratch.

\textbf{Expansion to Low-Resource Domains:} In specialized fields (e.g., medical imaging, minority languages, or technical diagrams), paired data is scarce, yet textual knowledge is often available. Our ReVision paradigm allows models to learn visual concepts primarily through domain-specific text corpora during pretraining, requiring only a handful of examples for SFT. This opens new avenues for deploying MLLMs in domains where data scarcity previously made it impossible.

\textbf{Mitigation of Copyright and Privacy Risks:} Large-scale web-scraped image datasets often contain copyrighted artwork and sensitive personal identifiable information (PII). By minimizing the reliance on raw images during the data-hungry pretraining phase and relying instead on text, our method offers a potential pathway to reduce legal and ethical risks associated with dataset curation.

\end{document}